\newcommand{\N}{\mathbb{N}}
\def\eqref#1{equation~\ref{#1}}
\def\ceil#1{\lceil #1 \rceil}
\def\1{\bm{1}}
\def\va{{\bm{a}}}
\def\vb{{\bm{b}}}
\def\vc{{\bm{c}}}
\def\vd{{\bm{d}}}
\def\ve{{\bm{e}}}
\def\vh{{\bm{h}}}
\def\vk{{\bm{k}}}
\def\vo{{\bm{o}}}
\def\vq{{\bm{q}}}
\def\vr{{\bm{r}}}
\def\vu{{\bm{u}}}
\def\vv{{\bm{v}}}
\def\vw{{\bm{w}}}
\def\vx{{\bm{x}}}
\def\vy{{\bm{y}}}
\def\mA{{\bm{A}}}
\def\mB{{\bm{B}}}
\def\mG{{\bm{G}}}
\def\mH{{\bm{H}}}
\def\mI{{\bm{I}}}
\def\mM{{\bm{M}}}
\def\mP{{\bm{P}}}
\def\mR{{\bm{R}}}
\def\mU{{\bm{U}}}
\def\mV{{\bm{V}}}
\def\mW{{\bm{W}}}
\DeclareMathAlphabet{\mathsfit}{\encodingdefault}{\sfdefault}{m}{sl}
\SetMathAlphabet{\mathsfit}{bold}{\encodingdefault}{\sfdefault}{bx}{n}
\newcommand{\R}{\mathbb{R}}
\DeclareMathOperator*{\argmax}{arg\,max}
\newtheorem{theorem}{Theorem}
\newtheorem{lemma}{Lemma}
\newtheorem{proposition}{Proposition}
\newtheorem{definition}{Definition}
\newtheorem{remark}{Remark}
\title{DeltaProduct: Improving State-Tracking in\\ Linear RNNs via Householder Products}
\renewcommand\@fnsymbol[1]{%
  \ifcase#1\or\dagger\or\ddagger\or\mathsection\or\mathparagraph\or\|\or\#\fi}
\newcommand{\norm}[1]{\left\lVert#1\right\rVert}
\newcommand{\abs}[1]{\left\lvert#1\right\rvert}
\newcommand{\spec}{\sigma}
\newcommand{\kron}{\delta} %
\newcommand{\C}{\mathbb{C}} %
\definecolor{SolarizedBase3}{HTML}{FDF6E3} %
\definecolor{SolarizedBase1}{HTML}{93A1A1} %
\definecolor{SolarizedGreen}{HTML}{859900} %
\definecolor{SolarizedCyan}{HTML}{2AA198} %
\definecolor{SolarizedViolet}{HTML}{6C71C4} %
\definecolor{SolarizedBase00}{HTML}{657B83} %
\definecolor{SolarizedOrange}{HTML}{CB4B16} %
\definecolor{SolarizedRed}{HTML}{DC322F}   %
\definecolor{LineNoGray}{rgb}{0.65,0.65,0.65} %
\lstdefinestyle{mystyle}{
    backgroundcolor=\color{white}, %
    commentstyle=\color{SolarizedBase1}\itshape,
    keywordstyle=\color{SolarizedGreen}\bfseries,
    numberstyle=\tiny\color{LineNoGray}, %
    stringstyle=\color{SolarizedCyan},
    basicstyle=\scriptsize\ttfamily, %
    identifierstyle=\color{SolarizedBase00}, %
    breakatwhitespace=false,
    breaklines=true,
    captionpos=b,
    keepspaces=true,
    numbers=left,
    numbersep=4pt, %
    showspaces=false,
    showstringspaces=false,
    showtabs=false,
    tabsize=3, %
    frame=tb, %
    framerule=0.4pt, %
    rulecolor=\color{black!20}, %
    aboveskip=1em, %
    belowskip=1em, %
    emph={ %
        GatedDeltaProduct, FusedRMSNormSwishGate, RMSNorm, ShortConvolution,
        Cache, Unpack, nn, torch, F, einops, rearrange, math,
        chunk_delta_rule, chunk_gated_delta_rule, __init__, forward,
        ModuleList, Parameter, Module, Linear, SiLU, Identity,
        Optional, Tuple, Dict, TYPE_CHECKING,
        range, isinstance, super, getattr, setattr, len, all, int, float, bool, str
    },
    emphstyle=\color{SolarizedOrange}\bfseries, %
    emph={[2]self, True, False, None}, %
    emphstyle={[2]\color{SolarizedRed}}, %
    morekeywords={ %
        output_attentions, use_cache, past_key_values, attention_mask, hidden_states,
        layer_idx, conv_state, recurrent_state, offsets, cu_seqlens
    },
    keywordstyle={[2]\color{SolarizedViolet}}, %
    keywordsprefix=@, %
    keywordstyle={[3]\color{SolarizedViolet}}, %
}
\author{Julien Siems$^{*\diamondsuit}$,~~Timur Carstensen$^{*\diamondsuit\clubsuit}$,~~Arber Zela$^{\diamondsuit}$, \\ \textbf{Frank Hutter$^{\triangle \clubsuit \diamondsuit}$,~~Massimiliano Pontil$^{\heartsuit\spadesuit}$, Riccardo Grazzi$^{*}$\thanks{Work started while at Istituto Italiano di Tecnologia.}$\hspace{1.4mm}^{\bigstar}$} \\
Equal contribution$^*$, University of Freiburg$^{\diamondsuit}$, ELLIS Institute Tübingen$^{\clubsuit}$, Microsoft Research$^{\bigstar}$\\Prior Labs$^{\triangle}$, CSML, Istituto Italiano di Tecnologia$^{\heartsuit}$, AI Centre, University College London$^{\spadesuit}$ \\
{\small \texttt{juliensiems@gmail.com}}
\quad
{\small \texttt{timurcarstensen@gmail.com}}
\quad
{\small \texttt{riccardograzzi4@gmail.com}}}
\begin{document}

\maketitle

\begin{abstract} 
Linear Recurrent Neural Networks (linear RNNs) have emerged as competitive alternatives to Transformers for sequence modeling, offering efficient training and linear-time inference. However, existing architectures face a fundamental trade-off between expressivity and efficiency, dictated by the structure of their state-transition matrices. Diagonal matrices, used in models such as Mamba, GLA, or mLSTM, yield fast runtime but have limited expressivity. To address this, recent architectures such as DeltaNet and RWKV-7 adopted a diagonal plus rank--1 structure, which allows simultaneous token and channel mixing, 
improving associative recall and, as recently shown, state-tracking when
allowing state-transition matrices to have negative eigenvalues.
Building on the interpretation of DeltaNet's recurrence as performing one step of online gradient descent per token on an associative recall loss, we introduce DeltaProduct, which instead takes multiple ($n_h$) steps per token. This naturally leads to diagonal plus rank--$n_h$ state-transition matrices, formed as products of $n_h$ generalized Householder transformations, providing a tunable mechanism to balance expressivity and efficiency. 
We provide a detailed theoretical characterization of the state-tracking capability of DeltaProduct in finite precision, showing how it improves by increasing $n_h$.
Our extensive experiments demonstrate that DeltaProduct outperforms DeltaNet in both state-tracking and language modeling, while also showing significantly improved length extrapolation capabilities.

\end{abstract}

\section{Introduction}\label{sec:intro}
\vspace{-2mm}
The Transformer architecture~\citep{vaswani-neurips17a} has revolutionized natural language processing through its self-attention mechanism, enabling both parallel computation across the sequence length and effective context retrieval. Consequently, Transformers have largely replaced recurrent models like LSTMs~\citep{hochreiter1997long}, which exhibit slower training and poorer retrieval performance. However, the quadratic computational complexity of Transformers with sequence length presents challenges when dealing with longer sequences. Linear RNNs have emerged as a promising solution that combines parallel training across the sequence length with linear inference-time complexity. At the core of these models are the state-transition matrices governing the recurrence, which fundamentally determine the expressivity of a linear RNN~\citep{merrill-icml24a}. Early linear RNNs like S4~\citep{gu-iclr22a} and LRU~\citep{orvieto-icml23a} used token-independent state-transition matrices. For superior expressivity, current linear RNNs now exclusively use token-dependent matrices. Among these Mamba~\citep{gu2023mamba,dao-icml24a}, GLA~\citep{yang-icml24a}, and mLSTM~\citep{beck-neurips24a} use diagonal state-transition matrices for efficient sequence processing. Newer architectures have incorporated non-diagonal structures, often diagonal plus rank-1, enabling simultaneous mixing of information across both tokens and channels of the hidden state. This innovation has led to more expressive models such as (Gated) DeltaNet~\citep{yang-neurips24a, yang-iclr25a}, TTT-Linear~\citep{sun-arxiv24a}, RWKV-7~\citep{peng2025rwkv7gooseexpressivedynamic}, and Titans~\citep{behrouz2024titans}, which demonstrate superior language modeling and in-context retrieval performance, often with only a reasonable decrease in training efficiency.

Recent work has revealed a fundamental trade-off  between training efficiency and expressivity of linear RNNs, dictated by the structure of their state-transition matrices~\citep{merrill-icml24a,sarrof-neurips24a,grazzi-iclr25a}. Models with diagonal  state-transition matrices, such as Mamba and GLA, are highly efficient to train but face severe expressivity limitations - for instance, they cannot perform addition modulo 3 on sequences of arbitrary length in finite precision \citep[Theorem 2]{grazzi-iclr25a}.
Transformers also face similar limitations \citep{hahn2020theoretical,merrill2023parallelism}, since they can be seen as special linear RNNs with a state-transition matrix equal to the identity, albeit with an infinite dimensional state \citep{katharopoulos-icml20a}.
DeltaNet partially overcomes these limitations through generalized Householder matrices, achieving greater expressivity, though it still requires multiple layers for some tasks. At the other extreme, linear RNNs with full state-transition matrices offer maximal expressivity~\citep{cirone-neurips24a}, capable of recognizing any regular language with a single layer~\citep{merrill-icml24a}, but are prohibitively expensive to train.
\begin{figure}[t]
\centering
\begin{minipage}[b]{0.62\textwidth}
  \centering
  \includegraphics[width=0.44\linewidth, trim={4 2 3 0}, clip]{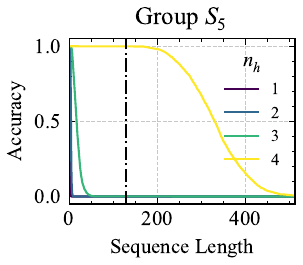} \hspace{2mm}
  \includegraphics[width=0.44\linewidth, trim={0 3 0 0}, clip]{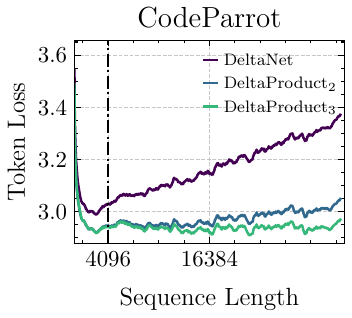}
\end{minipage}
\hfill
\begin{minipage}[b]{0.34\textwidth}
  \caption{(\textit{Left}) DeltaProduct$_{n_h}$ learns higher-order permutation groups like $S_5$ in one layer, 
    while DeltaNet ($n_h{=}1$) is limited to $S_2$ (parity). 
    (\textit{Right}) Length extrapolation of DeltaProduct improves significantly with higher $n_h$.}\vspace{3mm}
  \label{fig:motivation}
\end{minipage}
\end{figure}

To bridge this gap, we propose \textit{DeltaProduct}, a method that balances expressivity and efficiency of the recurrence computation. 
While DeltaNet's recurrence performs a single gradient descent step per token on the squared loss of a linear key-to-value mapping~\citep{wang2025test,yang-neurips24a}, DeltaProduct takes $n_h$ gradient steps using additional keys and values, yielding state-transition matrices that are products of $n_h$ generalized Householder matrices.
This connection between the number of optimization steps and the matrix structure provides an elegant way to interpolate between diagonal and dense matrices: increasing the number of gradient steps automatically increases the number of Householder matrices in the product, providing a tunable mechanism to control the recurrence's expressivity. DeltaProduct enables precise control over the norm of state transition matrices, ensuring it remains $\leq1$ to maintain stability during training on long sequences.  We contribute \href{https://github.com/fla-org/flash-linear-attention/blob/main/fla/layers/gated_deltaproduct.py}{DeltaProduct} to the flash-linear-attention library~\citep{yang2024fla}, our experiment code is provided \href{https://github.com/automl/DeltaProduct}{here}.

Concretely, we make the following contributions:
\vspace{-2mm}
\begin{itemize}[leftmargin=7mm]
    \item We propose \textit{(Gated) DeltaProduct}, which generalizes (Gated) DeltaNet by using products of generalized Householder transformations as state-transition matrices (\Cref{sec:DeltaProduct}). 
    \item We provide a detailed theoretical characterization of the expressivity of DeltaProduct in finite precision and how it improves by increasing $n_h$ (\Cref{sec:expressivity}). Notably, we prove that for any $n_h \geq 1$, DeltaProduct with at most $4$ layers ($3$ if $n_h \geq 2$) can solve any group word problem, and Gated DeltaProduct with a finite number of layers can recognize any regular language. 
    
    \item We empirically validate DeltaProduct's superior performance across multiple domains: solving complex state-tracking tasks beyond DeltaNet's capabilities (see~\Cref{fig:motivation}) and improving language modeling performance with significantly enhanced length extrapolation (\Cref{sec:experiments}), which we study through analysis of the hidden state's effective rank.
\end{itemize}

\vspace{-4mm}
\section{Background}
\vspace{-2mm}
\textbf{Linear RNNs.} Linear RNNs consist of stacked layers, each processing an input sequence of vectors $\vx_1, \dots, \vx_t\in\R^l$ (output of the previous layer) to produce an output sequence $\hat{\vy}_1,\dots,\hat{\vy}_t\in\R^p$. 
We write the forward pass of each layer placing emphasis on the linear recurrence (as in \citep{grazzi-iclr25a})
\begin{equation}\label{eq:linrnn}
\begin{aligned}
\mH_i = \mA(\vx_i) \mH_{i-1} + \mB(\vx_i),\qquad
\hat{\vy}_i = \mathrm{dec}(\mH_i, \vx_i)\qquad
\text{where } i \in {1, \dots, t}
\end{aligned}
\end{equation}
$\mH_0 \in \R^{n \times d}$ is the initial hidden state, $\mA: \R^l \to \R^{n \times n}$ maps the input to a state-transition matrix, $\mB: \R^l \to \R^{n \times d}$ controls the information added to the new hidden state, and $\mathrm{dec}: \R^{n \times d} \times \R^l \to \R^p$ determines the output of the recurrence. The functions $\mA$, $\mB$, and $\mathrm{dec}$ are learnable, with $\mathrm{dec}$ typically containing a feedforward neural network. Different linear RNN variants are distinguished by their specific implementations of these functions. For example, Mamba~\citep{gu2023mamba,dao-icml24a}, GLA~\citep{yang-icml24a}, and mLSTM~\citep{beck-neurips24a} use variations of a diagonal state-transition matrix $\mA(\vx_i)$. The linearity of the recurrence allows it to be parallelized along the sequence length, either via a chunkwise parallel form~\citep{hua-icml22a, sun2023retentive, yang-icml24a} or using a parallel scan~\citep{Blelloch1990,martin-iclr18a,smith-iclr23a,fan2024advancing,gu2023mamba}. For a comparison of different linear RNN architectures see~\citet[Table 4]{yang-neurips24a}.

\textbf{DeltaNet.} We base our work on the DeltaNet architecture~\citep{schlag-icml21a,schlag-iclr21a}, which has recently seen renewed interest through the work of \citet{yang-neurips24a,yang-iclr25a} who demonstrate how to parallelize DeltaNet across the sequence length on GPUs. The DeltaNet recurrence is parameterized as
\begin{equation}
~\mA(\vx_i) = \mI -\beta_i \vk_i\vk_i^\top,~\mB(\vx_i)=\beta_i \vk_i\vv_i^\top,~\mathrm{dec}(\mH_i, \vx_i)=\psi(\mH_i^\top\vq_i)
\label{eq:deltanet-recurrence}
\end{equation}
where 
$\beta_i \in [0,1]$,
$\vq_i, \vk_i \in \R^n$ (with $\norm{\vq_i} =\norm{\vk_i} = 1$), $ \vv_i \in \R^d$ are outputs of learnable functions of $\vx_i$. 
$\mA(\vx_i)$ is a generalized Householder transformation~\citep{householder1958unitary}, which is symmetric and has eigenvalues 1 (multiplicity $n-1$) and $1-\beta_i$ (multiplicity 1). 
From a geometric perspective, $\beta_i$ determines the transformation type, interpolating between identity ($\beta_i=0$) and  projection ($\beta_i=1$).
DeltaNet also has a natural interpretation from an online learning perspective~\citep{yang-neurips24a},
as each step of its recurrence is one step of online gradient descent on a quadratic loss with step size $\beta_i$:
\begin{align*}
    \mathcal{L}_i(\mH) = 1/2\|\mH^\top\vk_i - \vv_i\|_2^2;~ 
    \mH_i &= \mH_{i-1} - \beta_i \,\nabla\mathcal{L}_i(\mH_{i-1})
    = \mH_{i-1} - \beta_i\,\vk_i(\vk_i^\top\mH_{i-1} - \vv_i^\top) \\
    \text{DeltaNet:}~\mH_{i} &= (I - \beta_i \vk_i\vk_i^\top)\mH_{i-1} + \beta_i\vk_i\vv_i^\top
    \label{eq:delta_opt}
\end{align*}
\textbf{State-Tracking and Word Problems.} 
State-tracking is the ability of a model to keep track of the state of a system while only observing the updates that are applied to it. It can be modeled as a monoid word problem, which consists in mapping sequences $x_1,\dots, x_t$, with $x_i$ being an element of a  monoid $G$, into sequences $y_1,\dots, y_t$, where $y_i = x_i \cdot x_{i-1} \cdots x_1$ and $\cdot$ is the associative operation of the monoid. Recognizing a regular language can be accomplished by solving the word problem of a finite monoid associated to the language and will be the focus of this work.
Problems where $G$ is also a  group (group word problems) with finite elements,  are notoriously hard to solve for both Transformers and linear RNNs. Group word problems for the symmetric or permutation groups are particularly important, since any group is isomorphic to a subgroup of a symmetric group. For instance, if we denote with $S_n$ the group of permutations of $n$ elements, parity corresponds to the $S_2$ word problem, which cannot be solved in finite precision by Transformers \citep{hahn2020theoretical} and diagonal Linear RNNs \citep{sarrof-neurips24a} with positive values, while the $S_5$ word problem cannot be solved by these models even when the precision can grow logarithmically with the sequence length, since both Transformers and Linear RNNs belong to the TC$^0$ circuit complexity class while $S_5$ is in NC$^1$ \citep{liu-iclr23a,merrill2023parallelism,merrill-icml24a}. In contrast, with unconstrained, full state-transition matrices any regular language can be recognized in one layer (see e.g.\@ \citep[Theorem 5.2]{merrill-icml24a}). However, training using full unstructured matrices is very inefficient and also unstable without any control on the norm.

\vspace{-3mm}
\section{Related Work}\label{sec:related_work}
\vspace{-2mm}
Linear RNNs have recently been studied from two main perspectives: state-space models and causal linear attention. State-space models, originating from continuous dynamical systems, inspired variants such as S4 \citep{gu-iclr22a}, H4 \citep{fu-iclr23a}, and LRU \citep{orvieto-icml23a} (see \citet{tiezzi2024statespacemodelinglongsequence} for a comprehensive survey). Models like Mamba \citep{gu2023mamba, dao-icml24a} further enhance these by incorporating input-dependent gating mechanisms, significantly improving language modeling performance.
In parallel, \citet{katharopoulos-icml20a} showed that causal linear attention Transformers can be reformulated as RNNs with linear sequence-length scaling. Following this, Gated Linear Attention (GLA) \citep{yang-icml24a} introduced gating mechanisms similar to Mamba. Recent studies explored more expressive recurrences via non-diagonal transition matrices, such as DeltaNet \citep{schlag-icml21a,irie2023practical,yang-neurips24a}, TTT-Linear \citep{sun-arxiv24a}, RWKV-7 \citep{peng2025rwkv7gooseexpressivedynamic}, B'MOJO \citep{zancato-neurips24a}, and Titans \citep{behrouz2024titans}. Additionally, \citet{beck-neurips24a} introduced xLSTM, combining linear and nonlinear RNN architectures inspired by LSTM \citep{hochreiter1997long}.
Another line of work explores recurrences in depth, which have been shown to increase the expressivity and reasoning capabilities \citep{dehghani-iclr19a,geiping2025scaling}.
For instance, concurrent work explores how fixed-point iterations of a diagonal linear RNN can increase its expressivity turning it non-linear at the fixed-point~\citep{schone2025implicit, movahedi2025fixedpoint}. Unlike our approach, which enhances the expressivity by increasing the complexity of the linear recurrence, their approach works by applying the same recurrence multiple times, effectively increasing the depth of the model without increasing the parameter count.
This approach is orthogonal to ours and the two can be potentially combined.

Products of structured matrices~\citep{kissel2023structured} have previously been used in the state-update of \textit{non-linear} RNNs -- including (Givens) rotation matrices \citep{Dorobantu2016DizzyRNNRR, jing-icml17a, dangovski2019rotational}, Kronecker products~\citep{jose-icml18a}, Householder reflections \citep{mhammedi-icml17a}—chosen for their orthogonal, norm-preserving properties that encourage long-term dependency learning~\citep{hochreiter1991untersuchungen, bengio1994learning}. Recently, \citet{biegun2024rotrnn} applied rotation matrices as state-transition matrices in non-selective state-space models. In contrast, DeltaProduct uses a linear recurrence with state-transition matrices adaptive to the current token, expressed as products of generalized Householder matrices.

\textbf{State-Tracking.} Recent work by~\citet{grazzi-iclr25a} demonstrates that expanding the eigenvalue range of linear RNNs' state transition matrices from $[0,1]$ to $[-1,1]$ significantly enhances their expressivity.
They show how DeltaNet's eigenvalue range can be extended from $[0, 1]$ to $[-1, 1]$, simply by multiplying $\beta_i$ by 2, allowing DeltaNet to perform reflections when $\beta_i=2$, which enables it to handle state-tracking tasks such as parity checking and, more generally, any \textit{group word problem} where each element of the input sequence corresponds to a permutation of at most two elements, while for other tasks, DeltaNet requires multiple layers \citep[Theorem 2 and 6]{grazzi-iclr25a}.
Their theoretical results also cover the products of Householders used as state-transition matrices of DeltaProduct, showing that they allow to solve any group word problem in one layer (Theorem 3) and recognize any regular language (Theorem 4), when $n_h$ is large enough and with a finite number of layers. Here, we extend that work by providing experimental evidence of the benefit of larger $n_h$ and, leveraging the analysis by \citet{peng2025rwkv7gooseexpressivedynamic}, more refined theoretical results on the expressivity, e.g.\@ with a greatly improved dependency on $n_h$. 
The recent RWKV-7 \citep{peng2025rwkv7gooseexpressivedynamic} uses a  state transition matrix of the form $\mathrm{diag}(\vw_t) - c \vk_t(\vk_t \odot \va_t)^\top$, with $\norm{\vk_t} =1, \va_t, \vw_t \in [0,1]^n$ and $c \in \{1,2\}$, which provides a potentially asymmetric rank-1 update in contrast to DeltaNet's symmetric update, allowing it to recognize any regular language with only 4 layers. However, the increased expressivity comes at the cost of losing the guarantee on the stability of the recurrence.
In contrast, (Gated) DeltaProduct has a stable  recurrence since the spectral norm of every state-transition matrix is always $\leq 1$. 
\vspace{-3mm}
\section{DeltaProduct}\label{sec:DeltaProduct}
\begin{figure}[b]
\vspace{-4.5mm}
\begin{adjustbox}{width=1.0\textwidth}
\hspace{18.8mm}
\begin{adjustbox}{width=0.114\textwidth}
\begin{minipage}[c]{0.131\linewidth} %
\begin{adjustbox}{width=1\textwidth}
\begin{tikzpicture}[node distance=2.0cm]
    \tikzset{
        gridMatrix/.style={
            matrix of nodes,
            nodes={
                minimum size=0.7cm,
                anchor=center,
                inner sep=0pt,
                font=\tiny
            },
            column sep=-\pgflinewidth,
            row sep=-\pgflinewidth,
            nodes in empty cells,
            inner sep=0pt,
            rounded corners=1pt
        }
    }
    \matrix[gridMatrix, left delimiter={[}, right delimiter={]}] (diag) {
        |[fill=blue!20]| {} & |[fill=white]| {} & |[fill=white]| {} & |[fill=white]| {} \\
        |[fill=white]| {} & |[fill=blue!40]| {} & |[fill=white]| {} & |[fill=white]| {} \\
        |[fill=white]| {} & |[fill=white]| {} & |[fill=blue!60]| {} & |[fill=white]| {} \\
        |[fill=white]| {} & |[fill=white]| {} & |[fill=white]| {} & |[fill=blue!80]| {} \\
    };
\end{tikzpicture}
\end{adjustbox}
\end{minipage}
\end{adjustbox}
\hspace{10.3mm}
\begin{adjustbox}{width=0.32\textwidth}
\begin{minipage}[c]{0.335\linewidth} %
\begin{adjustbox}{width=1.0\textwidth}
\begin{tikzpicture}[node distance=2cm]
    \tikzset{
        gridMatrix/.style={
            matrix of nodes,
            nodes={
                minimum size=0.7cm,
                anchor=center,
                inner sep=0pt,
                font=\tiny
            },
            column sep=-\pgflinewidth,
            row sep=-\pgflinewidth,
            nodes in empty cells,
            inner sep=0pt,
            rounded corners=1pt
        }
    }
    \matrix[gridMatrix, left delimiter={[}, right delimiter={]}] (diag) {
        |[fill=blue!20]| {} & |[fill=white]| {} & |[fill=white]| {} & |[fill=white]| {} \\
        |[fill=white]| {} & |[fill=blue!40]| {} & |[fill=white]| {} & |[fill=white]| {} \\
        |[fill=white]| {} & |[fill=white]| {} & |[fill=blue!60]| {} & |[fill=white]| {} \\
        |[fill=white]| {} & |[fill=white]| {} & |[fill=white]| {} & |[fill=blue!80]| {} \\
    };
    \node[right=0.35cm of diag] (plus) {\Large $+$};
    \matrix[gridMatrix, left delimiter={[}, right delimiter={]}] (colVec) 
       at ($(plus.east)+(0.75,0)$) {
        |[fill=red!30]| {} \\
        |[fill=red!50]| {} \\
        |[fill=red!70]| {} \\
        |[fill=red!90]| {} \\
    };
    \node at ($(colVec.east)!0.5!(colVec.east)+(0.5,0)$) {\Large $\times$};
    \matrix[gridMatrix, left delimiter={[}, right delimiter={]}, anchor=north west, baseline=0.3cm]
       (rowVec) at ($(colVec.north east)+(1.0cm,0)$) {
        |[fill=purple!30]| {} & |[fill=purple!50]| {} & |[fill=purple!70]| {} & |[fill=purple!90]| {} \\ 
        |[fill=white, opacity=0, minimum size=0.05cm]| {} & |[fill=white, opacity=0, minimum size=0.05cm]| {} & |[fill=white, opacity=0, minimum size=0.05cm]| {} & |[fill=white, opacity=0, minimum size=0.05cm]| {} \\
    };
\end{tikzpicture}
\end{adjustbox}
\end{minipage}
\end{adjustbox}
\hspace{3.6mm}
\begin{adjustbox}{width=0.36\textwidth}
\begin{minipage}[c]{0.355\linewidth} %
\vspace{5.5mm}
\begin{adjustbox}{width=1.0\textwidth}
\begin{tikzpicture}[node distance=2cm]
    \tikzset{
        gridMatrix/.style={
            matrix of nodes,
            nodes={
                minimum size=0.7cm,
                anchor=center,
                inner sep=0pt,
                font=\tiny
            },
            column sep=-\pgflinewidth,
            row sep=-\pgflinewidth,
            nodes in empty cells,
            inner sep=0pt,
            rounded corners=1pt
        }
    }
    \matrix[gridMatrix, left delimiter={[}, right delimiter={]}] (diag) {
        |[fill=blue!20]| {} & |[fill=white]| {} & |[fill=white]| {} & |[fill=white]| {} \\
        |[fill=white]| {} & |[fill=blue!40]| {} & |[fill=white]| {} & |[fill=white]| {} \\
        |[fill=white]| {} & |[fill=white]| {} & |[fill=blue!60]| {} & |[fill=white]| {} \\
        |[fill=white]| {} & |[fill=white]| {} & |[fill=white]| {} & |[fill=blue!80]| {} \\
    };
    \node[right=0.35cm of diag] (plus) {\Large $+$};
    \matrix[gridMatrix, left delimiter={[}, right delimiter={]}] (colVec) 
       at ($(plus.east)+(1.0,0)$) {
        |[fill=red!30]| {} & |[fill=violet!30]| {} \\
        |[fill=red!50]| {} & |[fill=violet!50]| {} \\
        |[fill=red!70]| {} & |[fill=violet!70]| {} \\
        |[fill=red!90]| {} & |[fill=violet!90]| {} \\
    };
    \draw[decorate, decoration={brace, mirror, amplitude=7.5pt, raise=2pt, aspect=0.5pt}, line width=1.5pt]
        (colVec.south west) -- (colVec.south east)
        node[midway, below=10pt] {\huge $n_h$};
    \node at ($(colVec.east)!0.5!(colVec.east)+(0.5,0)$) {\Large $\times$};
    \matrix[gridMatrix, left delimiter={[}, right delimiter={]}, anchor=north west, baseline=0.3cm]
       (rowVec) at ($(colVec.north east)+(1.0cm,0)$) {
        |[fill=purple!30]| {} & |[fill=purple!50]| {} & |[fill=purple!70]| {} & |[fill=purple!90]| {} \\ 
        |[fill=orange!30]| {} & |[fill=orange!50]| {} & |[fill=orange!70]| {} & |[fill=orange!90]| {} \\
        |[fill=white, opacity=0, minimum size=0.05cm]| {} & |[fill=white, opacity=0, minimum size=0.05cm]| {} & |[fill=white, opacity=0, minimum size=0.05cm]| {} & |[fill=white, opacity=0, minimum size=0.05cm]| {} \\
    };
\end{tikzpicture}
\end{adjustbox}
\end{minipage}
\end{adjustbox}
\end{adjustbox} \\

\vspace{-1mm}
\begin{minipage}[b]{0.13\linewidth}
{\footnotesize
    \textcolor{white}{Diagonal}\\
    \textit{Token Mix:} \\
    \textit{Channel Mix:}\\
    \textit{Expressivity:}\\
    \textit{Examples: }}
    \end{minipage}
\begin{minipage}[b]{0.16\linewidth}
{\footnotesize
    \textbf{Diagonal:}\\
    $\checkmark$ \\
    $\times$ \\
    Parity \\
    Mamba, GLA}
    \end{minipage}
\hspace{2.0mm}
\begin{minipage}[b]{0.23\linewidth}
{\footnotesize
    \textbf{Rank 1 Update:}\\
    $\checkmark$ \\
    $\checkmark$ \\
    Reflections \\
    DeltaNet, TTT, RWKV-7}
    \end{minipage}
\hspace{13.5mm}
\begin{minipage}[b]{0.2\linewidth}
{\footnotesize
\textbf{Rank $n_h$ Update:}\\
$\checkmark$ \\
$\checkmark$ \\
Reflections, Rotations \\
\textbf{DeltaProduct}}
\end{minipage}
\vspace{-0mm}
\caption{Overview of state-transition matrices $\mA(\vx_i)$ in linear RNNs.
}
\label{fig:rank_update_overview}
\end{figure}

\vspace{-2mm}
While DeltaNet's recurrence can be seen as performing one step of online gradient descent per token, DeltaProduct builds upon DeltaNet by further refining the hidden state by taking \emph{multiple steps per token}. This naturally leads to a more expressive state-transition matrix formed as a product of generalized Householder matrices, where each additional step expands the range of achievable linear transformations.
Formally, for each input token $\vx_i$ to the layer we generate $n_h$ keys as $\vk_{i,j}$ = $\psi(\mW_j \vx_i)/\lVert \psi(\mW_j \vx_i) \rVert_2$, $n_h$ values as $\vv_{i,j} = \mV_j \vx_i$, and $n_h$ betas as $\beta_{i,j} = \phi(\mU_j \vx_i)$  where $\mW_j, \mV_j, \mU_j$,  are learnable weight matrices specific to the $j$-th gradient step, $\psi$ is a nonlinearity (we pick $\mathrm{SiLU}$ as in DeltaNet), while $\phi$ is either the sigmoid or $2 \times$ the sigmoid to increase the expressivity. 
Then, we compute $n_h$ gradient descent steps using the losses $\mathcal{L}_{i,j}(\mH) =\|\mH^\top\vk_{i,j} - \vv_{i,j}\|_2^2/2$, i.e.,\@ for $j=1\dots n_h$
\begin{align*}
    \mH_{i, j} =  \mH_{i,j-1} - \beta_{i, j} \nabla \mathcal{L}_{i,j}(\mH_{i,j-1})  
    = \left(\mI - \beta_{i, j} \vk_{i, j} \vk_{i, j}^\top\right) \mH_{i,j-1} + \beta_{i, j} \vk_{i, j} \vv_{i, j}^\top,
\end{align*}
where $\mH_{i,0} = \mH_{i-1}$ and $\mH_{i, n_h} = \mH_i$. Unrolling, we get $\mH_i = \mA(\vx_i) \mH_{i-1} + \mB(\vx_i)$ with
\begin{tcolorbox}[
  colback=white!3!white,
  colframe=red,
  boxrule=0.5pt,
  arc=2mm,
  left=1mm,
  right=1mm,
  top=-2.6mm,
  bottom=0mm]
\begin{equation}\label{eq:abdeltaprod}
\mA(\vx_i) = \prod_{j=1}^{n_h} \Bigl(\mI - \beta_{i,j}\,\vk_{i,j}\vk_{i,j}^\top\Bigr), \quad
\mB(\vx_i) = \sum_{j=1}^{n_h} \Bigl(\prod_{k=j+1}^{n_h} \bigl(\mI - \beta_{i,k}\,\vk_{i,k}\vk_{i,k}^\top\bigr)\Bigr)\,\beta_{i,j}\,\vk_{i,j}\vv_{i,j}^\top.
\end{equation}
\end{tcolorbox}

Hence, by taking multiple gradient descent steps per token, DeltaProduct's state-transition matrices are \textit{products of generalized Householder transformations}, and by expanding such a product, $\mA(\vx_i)$ takes the form of identity plus a matrix of rank at most $n_h$ as shown in~\Cref{fig:rank_update_overview}. 
As DeltaNet extends to Gated DeltaNet by incorporating a forget gate~\citep{yang-iclr25a}, DeltaProduct can similarly be extended to \textit{Gated DeltaProduct} by letting $\mA(\vx_i) = \textcolor{red}{g_i} \prod_{j=1}^{n_h} (\mI - \beta_{i,j}\,\vk_{i,j}\vk_{i,j}^\top)$ where the scalar gate $g_i \in [0, 1]$ is adopted from Mamba 2~\citep{dao-icml24a} and $\mB(\vx_i)$ remains unchanged.
This formulation enables DeltaProduct to interpolate between generalized Householder ($n_h = 1$ as in DeltaNet) and dense 
matrices (of norm $\leq$ 1), since increasing $n_h$ can increase  the rank of $\mA(\vx_i)$.

\begin{wrapfigure}[18]{r}{0.22\linewidth}
  \centering
  \vspace{-4.5mm}
  \adjustbox{width=\linewidth}{\begin{tikzpicture}[>=latex, scale=1.2,
                    every label/.style={font=\scriptsize},
                    annotation/.style={font=\scriptsize, align=center}]

  \draw (0,0) circle (1);

  \draw[dashed,red] (-0.8,0.8) -- (0.8,-0.8)
    node[pos=0.3, above left, xshift=-4.2mm, yshift=5.3mm] {$H_0$};

  \draw[dashed,blue] (-0.4,1) -- (0.4,-1)
    node[pos=0.3, above right, xshift=-5mm, yshift=6.7mm] {$H_1$};

  \draw[->,thick,red] (0,0) -- (0.7,0.7)
    node[above left=-3mm, xshift=2.3mm, yshift=1.3mm] {$\vk_0$};
  \draw[->,thick,blue] (0,0) -- (0.95,0.3)
    node[right=0mm] {$\vk_1$};

  \draw[-,thick,black!50!black] (-0.39, 1.0) arc (100:133.5:0.8)
    node[pos=0.2, right, xshift=-2.5mm, yshift=-3mm] {$\theta$};
  
  \draw[<-,thick,black!50!black] ({0.5*cos(-20)},{0.5*sin(-20)}) 
      arc (-20:14.5:0.8)
      node[pos=0.5, above, xshift=3mm, yshift=-2.5mm] {$2\theta$};

  \coordinate (X) at (0.85,0.5);
  \draw[fill] (X) circle (1.2pt) node[above right=-0.5pt] {$x$};

  \coordinate (X') at  (-0.45,-0.9);
  \draw[fill, red] (X') circle (1.2pt) node[below] {$x'$};
  \draw[dotted,->] (X) -- (X'); 

  \coordinate (X'') at (0.93,-0.35);
  \draw[fill, violet] (X'') circle (1.2pt) node[below, right=-0.5pt] {$x''$};
  \draw[dotted,->] (X') -- (X''); 
  \draw[gray] (0,0) -- (X);

\draw[gray] (0,0) -- (X'');

\end{tikzpicture}}
  \vspace{-5 mm}
  \caption{Two reflections produce a 2D rotation: Reflecting \(x\) across planes \(H_0\) and \(H_1\) (with normals \(\vk_0\) and \(\vk_1\)) yields a rotation by \(2\theta\), where \(\theta\) is the angle between the planes.}
  \label{fig:rotation}
\end{wrapfigure}
\textbf{Expressivity of Householder products.}
While any state transition matrix of DeltaNet can model a single Householder reflection (with $\beta_i = 2$), DeltaProduct's can model any orthogonal matrix. 
This is a consequence of the Cartan-Dieudonné theorem, 
which establishes that any $n \times n$ orthogonal matrix can be expressed as a product of at most $n$ reflections (as illustrated in~\Cref{fig:rotation} for $n=2$).
The Householder product exhibits interesting properties in special cases. When all Householder keys are identical, the product simplifies to a single Householder with a scaled beta parameter, offering no additional expressivity (Prop. \ref{proposition:householder}.1). Conversely, when the keys are mutually orthogonal, the Householder product simplifies to an identity plus a symmetric rank $n_h$ matrix (Prop. \ref{proposition:householder}.2). Only when the keys are non-trivially linearly dependent can we obtain non-symmetric matrices, potentially yielding complex eigenvalues (Prop. \ref{proposition:householder}.3).
An important consequence of using Householder products is that it allows us to effectively bound the norm of $\mA(\vx_i)$. This is because the norm of the product is upper bounded by the product of the norms (each $\leq 1$), which ensures the stability of the recurrence~\citep[Prop. 1.1]{grazzi-iclr25a}. This bound would not be possible with the more direct parametrization $\mA(\vx_i) = I - \sum_{j=1}^{n_h} \beta_{i,j}\vk_{i,j}\vk_{i,j}^\top$, which also restricts the matrix to be symmetric. 

\subsection{State-Tracking Capabilities of (Gated) DeltaProduct}\label{sec:expressivity}
\vspace{-2mm}
We present two theorems that characterize the state-tracking capabilities of DeltaProduct. Compared to \citet[Theorem 3 and 4]{grazzi-iclr25a}, we focus on results that hold for any $n_h \geq 1$.
We defer proofs, and more details to \Cref{app:expressivity}, where we also include results on dihedral groups (\Cref{thm:dihedral}) 
and on finite subgroups of the orthogonal and special orthogonal groups (\Cref{th:suborth}).

\begin{theorem}\label{th:groups} For any $n \in \N$ there exists a DeltaProduct model with one of the following configurations that can solve the \textbf{word problem} of the symmetric group $S_n$: (i) one layer with $n_h =n{-}1$ \citep[Theorem 3]{grazzi-iclr25a} (ii) 3 layers with $n_h{>}1$ (iii) 4 layers with $n_h = 1$.
The construction for (ii) and (iii) requires that the MLP at the second last layer computes a lookup-table of size $2m \times (n!)^{2m}$, function of the last $2m$ input tokens and the position modulo $2m$ with $m = \ceil{(n{-}1)/n_h}$.
\end{theorem}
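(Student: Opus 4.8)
The plan is to treat the three configurations separately, disposing of (i) by the cited decomposition result and building (ii)--(iii) on a common ``committed product plus lookup correction'' scheme.

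For (i) I would use that every $\sigma\in S_n$ is a product of at most $n-1$ transpositions, and that a transposition is realized by a single Householder reflection, i.e. the generalized Householder $\mI-\beta\vk\vk^\top$ with $\beta=2$ and $\vk$ the normalized difference of the two swapped basis vectors. With $n_h=n-1$ factors available per token, one layer can set $\mA(\vx_i)$ equal to the permutation matrix of the input symbol acting on an $n$-dimensional one-hot encoding of the running state, so that $\mH_i$ carries the prefix product $\sigma_i\cdots\sigma_1$ exactly; this is precisely \citet[Theorem 3]{grazzi-iclr25a}, so (i) needs nothing new.

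For (ii)--(iii) the obstruction is that when $n_h<n-1$ a single token cannot apply an arbitrary $\sigma_i$, since consecutive prefix products $P_i=\sigma_i\cdots\sigma_1$ and $P_{i-1}$ differ by $\sigma_i$, which may require up to $n-1$ transpositions. The idea is to split the target output $P_i$ into a long-range part carried by the recurrence state and a short-range correction produced by a lookup table. I would group positions into blocks of size $m=\ceil{(n-1)/n_h}$ and let $\Pi_c$ be the product of the $m$ symbols in block $c$. Since $m\,n_h\ge n-1$, the at most $n-1$ transpositions of $\Pi_c$ can be spread across the $m$ tokens of a block, $n_h$ per token, so the final DeltaProduct layer, emitting $\le n_h$ reflections per step, accumulates exactly one block product per block into its state. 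Crucially, I would apply the transpositions of $\Pi_c$ during the \emph{following} block, so that at position $i=cm+r$ the state $G_i$ equals a known partial decomposition of $\Pi_c$ composed with the committed prefix $\Pi_{c-1}\cdots\Pi_1$. Writing the desired output as $P_i=C_i\,G_i$, the correction $C_i$ depends only on block $c$ (now entirely in the past) and on the partial product of block $c+1$ up to position $r$; hence $C_i$ is a function of the last $m+r\le 2m$ input symbols together with the phase $r$ --- exactly what the second-to-last layer's decoder MLP computes as a table of size $2m\times(n!)^{2m}$. The same table also emits the reflections the last layer must apply.

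The causal aspect is the heart of the argument: decomposing $\Pi_c$ needs all of block $c$, including symbols lying in the future of interior positions of that block, so the decomposition is deferred by one block and the window must reach back across both the current and the previous block --- this is the origin of the $2m$ window and the period-$2m$ schedule. To supply this window to the second-to-last layer, earlier layers must assemble at each position an encoding of the last $2m$ symbols; I would do this with one DeltaProduct layer when $n_h>1$ and with two when $n_h=1$, the extra reflection per step available for $n_h>1$ letting a single layer both overwrite the stale buffer slot and re-index it, which accounts for the extra layer in (iii) relative to (ii). The final decoder then applies the input-dependent $C_i$ to the permutation stored in $G_i$; as there are only finitely many ($\le n!$) possible $C_i$, this is a finite gated selection of linear maps, implementable by the decoder. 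The step I expect to cost the most care is verifying this causal deferral rigorously --- establishing that $C_i$ genuinely depends on only $2m$ past symbols and the phase, independent of the unbounded history, so that a bounded lookup table suffices while the recurrence state alone carries the long-range information --- together with pinning down the exact layer budget and confirming that the spectral norm of every $\mA(\vx_i)$ stays $\le 1$ throughout.
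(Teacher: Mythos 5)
Your core construction coincides with the paper's: block the input into segments of length $m=\ceil{(n-1)/n_h}$, factor each block's permutation product into $m$ pieces, each a product of at most $n_h$ generalized Householders, apply those pieces with a one-block delay in the last layer's recurrence, and let the decoder apply a correction depending only on the phase $i \bmod 2m$ and the last at most $2m$ tokens, supplied as a lookup table of size $2m\times(n!)^{2m}$ by the second-to-last layer's MLP. Your causal-deferral argument, the $2m$ window, and part (i) all match the paper's proof.

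The genuine gap is in how the earlier layers supply that window --- exactly the step you flagged as delicate, but the mechanism you propose does not work. Your claim that for $n_h>1$ a single layer can ``overwrite the stale buffer slot and re-index it'' fails for two structural reasons. First, in a linear RNN the keys $\vk_{i,j}$ and the transition $\mA(\vx_i)$ are functions of the current input token only: they cannot read the layer's own state, so a write address cannot be position-dependent unless the position is already present in that layer's \emph{input}. Second, the alternative of moving the slots instead of the write head (a conveyor belt) requires the transition to contain a cyclic shift of $2m$ one-hot slots; since $\mC-\mI$ has rank $2m-1$ for a $2m$-cycle $\mC$, this needs at least $2m-1$ generalized Householder factors, far more than $n_h$ in general (e.g.\ $n_h=2$ with $n$ large). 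Similarly, for $n_h=1$ your two-layer window assembly cannot work: a position counter modulo $2m$ already costs two layers when $n_h=1$, because a single fixed generalized Householder is symmetric with eigenvalues $1$ and $1-\beta\in[-1,1]$, so the state sequence it generates has period at most $2$ (this is why \Cref{lm:countmodm} uses parity followed by modular addition), and the buffer write needs a further layer receiving that counter as input. The paper's proof meets the 3/4-layer budget with a different division of labor than your strictly sequential ``window layers $\to$ table layer $\to$ final layer'': the first layer(s) compute \emph{only} the counter $i\bmod 2m$ (one layer via a 2D rotation when $n_h\geq 2$, two layers when $n_h=1$); the second-to-last layer then writes token $a_t$ into slot $\ve_{(t\bmod 2m)+1}$ using the counter from its input (\Cref{lm:lastmtokens}); and, crucially, the lookup table is computed by this same layer's output MLP rather than by a separate layer. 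Without this fusion, your architecture either rests on the impossible single-layer window or exceeds the layer count claimed in the theorem.
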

\vspace{-1mm}
\begin{theorem}\label{th:regular}
For any $n_h \geq 1$ and any regular language, there exists a Gated DeltaProduct model with a finite number of layers (dependent on the language) that recognizes it.
\end{theorem}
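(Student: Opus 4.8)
The plan is to reduce the problem to tracking a running product in a finite monoid and then to realize that monoid's structure layer by layer, using the scalar gate $g_i$ to supply exactly the non-invertible (``reset'') behaviour that ungated DeltaProduct lacks. As recalled in the Background, a regular language $L$ is recognized by its finite syntactic monoid $M$: there is a morphism $\mu$ from input symbols to $M$ and an accepting subset $F \subseteq M$ such that a prefix lies in $L$ iff its image under $\mu$ lies in $F$. It therefore suffices to build a model whose top layer, at each position $i$, encodes the running product $\mu(x_i)\cdots\mu(x_1)\in M$ in a one-hot vector, after which a final MLP inside $\mathrm{dec}$ reads off membership via the indicator of $F$. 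To track this product I would invoke the Krohn--Rhodes decomposition: every finite $M$ divides an iterated wreath (cascade) product of finite simple groups $G_1,\dots,G_r$ and copies of the two-state flip-flop (reset) monoid $U$, with a finite number $r$ of factors depending only on $M$, hence only on $L$. Each factor is implemented by a constant-size block of Gated DeltaProduct layers, so the total depth is finite and language-dependent, matching the statement.

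I would realize the two kinds of factors separately. For a \emph{group factor} $G_k$, embedding $G_k$ into a symmetric group and applying the construction behind \Cref{th:groups} tracks it with orthogonal state-transition matrices---products of generalized Householder reflections via Cartan--Dieudonn\'e---with the gate fixed to $g_i=1$; one-hot / orthonormal encodings keep the hidden state on a finite orbit so the computation is exact in finite precision. For $n_h=1$ this costs a few extra layers per factor (as in \Cref{th:groups}(iii)), but still a constant number, so the dependence on $n_h$ only affects the per-factor layer count. For a \emph{flip-flop factor} the gate is essential: to \emph{reset} the component I set $g_i=0$, clearing that layer's state, while $\mB(\vx_i)$ writes the fresh one-hot target; to \emph{hold}, I set $g_i=1$ with all $\beta_{i,j}=0$ (identity transition, $\mB=0$). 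Ungated DeltaProduct cannot do this because its state-transition matrices are invertible, whereas the scalar gate supplies exactly the rank-dropping ``forget'' needed for the aperiodic building blocks.

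It remains to wire the cascade through depth. In a wreath product the action of an upper factor depends on the current input together with the running states of the lower factors; since every such quantity ranges over a finite set, the inter-layer MLP computes the required map from (lower-layer output, original input) to the keys, values, betas, and gate that drive the next factor's layer---this is precisely the finite lookup-table mechanism already used in \Cref{th:groups}(ii)--(iii). Stacking these blocks reproduces the cascade, and the top block outputs the running product in $M$ demanded by the reduction.

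The delicate points are the flip-flop realization and the cascade wiring. First, I must check that resets which are \emph{local} to one cascade component can be driven by the \emph{scalar} gate, which multiplies a layer's entire state uniformly; this forces each reset-bearing factor into its own layer so that clearing the whole layer state realizes the local reset, and one must verify the spectral-norm-$\le 1$ guarantee is preserved throughout (it is, since $g_i\in\{0,1\}$ and each Householder factor has norm $\le 1$). Second, and hardest, is establishing that the finite-valued, finite-precision invariant survives across all layers: the encodings must remain on finite orbits and the MLP lookup tables must be realizable exactly, so that the faithful simulation of the Krohn--Rhodes cascade accumulates no error. Making this last point rigorous is the crux that underwrites the ``finite precision'' claim.
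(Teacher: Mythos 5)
Your proposal is correct and follows essentially the same route as the paper: Krohn--Rhodes decomposition, permutations realized by Householder products with gate $g_i=1$ (reusing the machinery of \Cref{th:groups} and its lookup-table layers), resets realized by setting $g_i=0$ while $\mB(\vx_i)$ writes the target state, and feedforward wiring of the cascade through depth. The only cosmetic difference is that you split the decomposition into pure group factors and flip-flop factors, whereas the paper's \Cref{lm:permreset} simulates each mixed permutation-reset automaton directly, choosing gate $1$ or $0$ per input symbol according to whether its transition is bijective or constant.
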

\vspace{-2mm}
The proof for \Cref{th:groups} uses the same idea as the construction for the theoretical results of \citet{peng2025rwkv7gooseexpressivedynamic} for RWKV-7. Each element of $S_n$ can be mapped to a permutation matrix, but DeltaProduct's state transition matrices can only model permutations of up to $n_h+1$ elements. Therefore, if $n_h +1< n$, early layers decompose each product of $m = \ceil{(n-1) / n_h}$ consecutive permutations into $m$ simpler permutations, which are applied in the recurrence of the last layer but in a delayed fashion. To get such a decomposition and account for the delay, the MLP at the second-last layer computes a potentially large lookup table, function of the past $2m$ tokens and the position modulo $2m$. To prove \Cref{th:regular}, we use the Krohn-Rhodes decomposition~\citep{krohn1965algebraic}, similarly to \citet[Theorem 4]{grazzi-iclr25a}, where each automaton is decomposed into multiple \textit{permutation-reset} automata, and model each using the same technique of \Cref{th:groups}, exploiting gates for the resets.%

\textbf{Comparison to other non-diagonal Linear RNNs.}
\Cref{tab:expressivity} provides a comparison of the expressivity of different non-diagonal linear RNNs. (Gated) DeltaProduct with $n_h > 1$ has improved expressivity compared to DeltaNet, and, up to 3 layers, even compared to RWKV-7. Moreover, increasing $n_h$ has clear benefits: reducing the number of layers or the size of the lookup table. Since DeltaProduct can solve the $S_5$ word problem, it is outside of the TC$^0$ complexity class, just as RWKV-7.
One might expect DeltaProduct to be able to model any useful state-transition matrix since it can model updates of arbitrarily high rank when $n_h$ is equal to the number of rows of the hidden state. 
This is because DeltaProduct's state-transition matrices $\mA(\vx_i)$ satisfy the spectral norm condition $\norm{\mA(\vx_i)} = \max_{\norm{\vy} = 1} \norm{\mA(\vx_i)\vy}_2 \leq 1$, ensuring a stable recurrence. RWKV-7 relaxes this constraint and can represent matrices with higher spectral norms. In particular, copy matrices -- identity matrices where one column is replaced by another -- with $\norm{\mA(\vx_i)} = \sqrt{2}$ (when $c=2$), allowing RWKV-7 to recognize any regular language in just four layers. 
However, this may lead to instability in the recurrence (see~\Cref{app:stabilityvsexpressivity}). 
We could enhance expressivity at the cost of stability by replacing the Householder matrices in DeltaProduct with RWKV-7's state transition matrices. This modification enables us to prove a result analogous to \Cref{th:groups}, but for regular languages rather than group word problems---see~\Cref{app:rwkv_7_regular} for details. Specifically, this approach allows the resulting linear RNN to recognize any regular language within a \textit{single layer}, provided $n_h$ is sufficiently large.
\begin{remark}\label{th:rwkv7deltaprod} For any \textbf{regular language} recognized by a finite-state automaton (FSA) having $n$ states there exists a \textbf{one layer linear RNN} using $n_h =n$ products of RWKV-7 matrices as state-transition matrices that can recognize it.
This is because a linear RNN with unconstrained state-transition matrices can recognize any regular language in a single layer~\citep[Theorem 5.2]{merrill-icml24a} by modeling FSA evaluation through matrix-vector products~\citep{nerode1958linear}. \citet[Lemma 3]{peng2025rwkv7gooseexpressivedynamic} further showed that any transition matrix of an FSA with $n$ states can be expressed as products of $n$ matrices, each of which is either a swap, copy, or the identity matrix, all of which are representable by an RWKV-7 matrix.
\end{remark}
The above discussion oulines a trade-off between the expressivity of RWKV-7 matrices and the guaranteed stability of generalized Householders used in DeltaProduct. It is an open question whether there exists a continuous parameterization of state-transition matrices which yields stable recurrences and still allows to recognize any regular language in a finite and fixed number of layers.

\begin{table}[]
\captionsetup{skip=2pt}
\small
\begin{threeparttable}
\caption{Expressivity of non-diagonal Linear RNNs shown through the formal language problems they can solve in finite precision. $S_n$, $\mathbb{Z}_n$, $D_n$ are the symmetric, cyclic and dihedral groups of order $n$, while $\mathrm{O}(n),\mathrm{SO}(n)$ are the orthogonal and special orthogonal group of order $n$. For $S_n$, ``only $k$-permutations'' means that input sequences can contain only permutations of up to $k$ elements.
LS is the size of the lookup table computed in the second-last layer's MLP. $|\Sigma|$ and $|Q|$ are the sizes of the alphabet and set of states of a finite state automaton recognizing the regular language. Gated variants' state updates can also model constant (\textit{reset}) transitions.}\label{tab:expressivity}
\begin{tabularx}{\textwidth}{@{}c>{\hsize=.9\hsize\raggedright\arraybackslash}X>{\hsize=1.1\hsize\raggedright\arraybackslash}X>{\hsize=1.\hsize\raggedright\arraybackslash}X@{}}\toprule
     Layers     & \textbf{(Gated) DeltaNet} & \textbf{RWKV-7} & \textbf{\color{blue}(Gated) DeltaProduct$_{n_h > 1}$} \\
\midrule
1   & $S_{n}$ only $2$-permutations.\tnote{\color{red}a} & $S_{n}$ only $2$-permutations & $S_{n}$ only $(n_h+1)$-permut.\tnote{\color{red}a} Finite subgroups of $\mathrm{O}(n_h)$, $\mathrm{SO}(n_h+1)$ if $n_h$ is even.\tnote{\color{blue}g} \\
\midrule
2  & $\mathbb{Z}_n$\tnote{\color{red}b},$ D_n$\tnote{\color{blue}c} & $\mathbb{Z}_n$, $D_n$ & \\
\midrule
3  & & & $S_{n}$ with LS $2m \times (n!)^{2m}$ where $m = \lceil (n-1)/n_h \rceil$.\tnote{\color{blue}d} \\
\midrule
4  & $S_{n}$ with LS $2(n-1) \times (n!)^{2(n-1)}$.\tnote{\color{blue}d} & $S_n$ with LS as DeltaNet. Reg. lang. with LS $2|Q| {\times} (|\Sigma|)^{2|Q|}$.\tnote{\color{red}e} & \\
\midrule
$f(|Q|)$  & Gated: Regular languages\tnote{\color{blue}f} & & Gated: Regular languages\tnote{\color{blue}f} \\
\bottomrule
\end{tabularx}
\begin{tablenotes}
\item[\color{red}a]
\citep[Thm. 3]{grazzi-iclr25a}
\item[\color{red}b]
\citep[Thm. 6]{grazzi-iclr25a}
\item[\color{blue}c]
Thm. \ref{thm:dihedral}. 
\item[\color{blue}d]
Thm. \ref{th:groups}.
\item[\color{red}e] 
\citep[Thm. 3]{peng2025rwkv7gooseexpressivedynamic}
\item[\color{blue}f] Thm. \ref{th:regular}
\item[\color{blue}g] Thm. \ref{th:suborth}
\end{tablenotes}
\end{threeparttable}
\end{table}
\vspace{-3mm}
\section{Experiments}\label{sec:experiments}
\vspace{-2mm}
We evaluate DeltaProduct on state-tracking and standard language modeling to assess its expressivity and efficiency. Throughout the experiments we use either the suffix $[-1,1]$ or $[0,1]$ after each method, to denote the eigenvalue ranges of its state transition matrices. We present additional experiments on languages of different levels of the Chomsky hierarchy~\citep{deletang-iclr23a} in~\Cref{app:chomsky}.
\vspace{-2mm}
\subsection{Implementation}
\vspace{-2mm}
We use the same macro architecture used by Gated DeltaNet. Since each step of (Gated) DeltaProduct follows the same recurrence structure as (Gated) DeltaNet, we can reuse its implementation written in Triton~\citep{tillet2019triton}, available through the \textsc{Flash-Linear-Attention} library~\citep{yang2024fla}, which uses the chunk-wise parallel form for the recurrence. 

\begin{wrapfigure}[20]{r}{0.33\linewidth}
    \centering
    \vspace{-1mm}
    \includegraphics[width=1.0\linewidth]{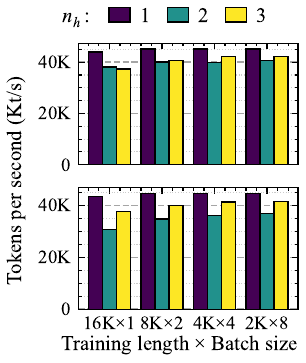}
    \vspace{-6mm}
    \caption{Training throughput of parameter matched 1.3B DeltaProduct$_{n_h}$ on a H100. Matched via: \textit{(Top)} scaling the number of heads, \textit{(Bottom)} scaling the head dimension.}
    \label{fig:runtime_barplot}
\end{wrapfigure}However, DeltaProduct differs by using $n_h$ keys, values and betas per token, resulting in a recurrence $n_h$ times longer than DeltaNet's. Therefore, we arrange keys (and similarly values and betas) as  
$
\left[\mathbf{k}_{1,1}, \ldots, \mathbf{k}_{1,n_h}, \mathbf{k}_{2,1}, \ldots, \mathbf{k}_{2,n_h}, \ldots \right],
$
while for gating
we construct the expanded sequence of gates as:  
$
\left[ g_1, 1, \ldots, 1, g_2, 1, \ldots, 1, \ldots \right]
$ 
where each gate \( g_i \) is followed by \( (n_h-1) \) ones to match the number of keys and values, so that we use only one gate for each token. Once the recurrence is evaluated, we keep only every $n_h$-th element of the output,
so that the output sequence retains the same length as the input sequence.

\textbf{Throughput.} The training (and prefill time) required for the recurrence increases linearly with $n_h$, since we use   the same chunk size for the chunkwise parallel form. In contrast, since we keep the embedding dimension fixed, the cost for the MLP following the recurrence does not vary with $n_h$. To remedy the parameter-overhead introduced by the additional key and value projections due to increased $n_h$, we demonstrate the throughput when matching parameters in~\Cref{fig:runtime_barplot}. Matching parameters simply by scaling the head dimension is unfavorable (bottom subplot, $n_h=2$) since head dimensions that are not a power of 2 will get padded to the next power thereof, effectively giving up the remaining dimensions at no reduction in runtime. See~\Cref{app:throughput} for additional results on smaller models.
Note that if $n_h\geq1$, we could parallelize the recurrence to have a faster runtime also during autoregressive generation. Note that the throughput results are obtained using an optimized Triton kernel implementation (developed by Songlin Yang and Yu Zhang, available in the flash-linear-attention library) that achieves a 20\% faster forward pass than DeltaNet's kernel.

\vspace{-2mm}
\subsection{State-Tracking}\label{sec:state-tracking}
\vspace{-2mm}
\begin{figure}
    \centering
    \adjustbox{width=1.0\textwidth, trim={1 0 2.5 0}, clip=true}{
    \includegraphics[width=0.2405\textwidth, trim={4 26 3 0}, clip=true] {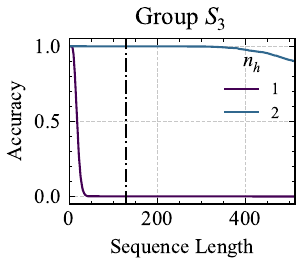}
    \includegraphics[width=0.195\textwidth, trim={30 26 3 0}, clip=true] {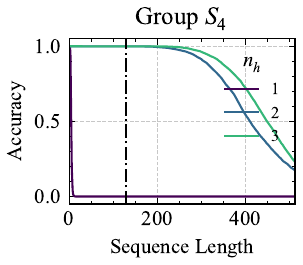}
    \includegraphics[width=0.195\textwidth, trim={30 26 3 0}, clip=true] {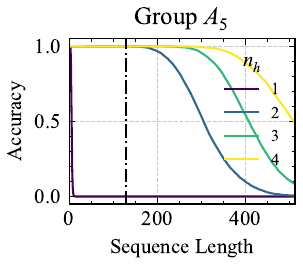}
    \includegraphics[width=0.195\textwidth, trim={30 26 3 0}, clip=true] {figures/state_tracking/S5.pdf}
    }
        \adjustbox{width=1.0\textwidth, trim={1 0 2.5 0}, clip=true}{
    \includegraphics[width=0.2405\textwidth, trim={4 0 3 18}, clip=true] {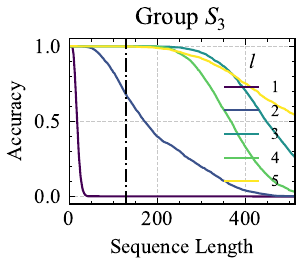}
    \includegraphics[width=0.195\textwidth, trim={30 0 3 18}, clip=true] {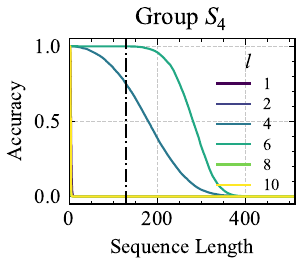}
    \includegraphics[width=0.195\textwidth, trim={30 0 3 18}, clip=true] {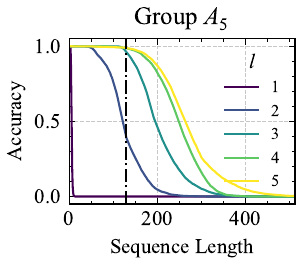}
    \includegraphics[width=0.195\textwidth, trim={30 0 3 18}, clip=true] {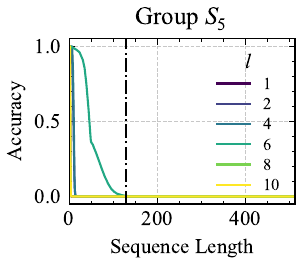}
    }
    \vspace{-6mm}
    \caption{Accuracy on state-tracking tasks for permutation groups \( S_3 \), \( S_4 \), \( A_5 \), and \( S_5 \), plotted against sequence length (x-axis). \textit{(Top row)} Varying the number of Householder products \(n_h \) for a single layer DeltaProduct$_{n_h}[-1,1]$. \textit{(Bottom row)} Varying the number of layers $l$ of DeltaProduct$_{1}[-1,1]$/DeltaNet$[-1,1]$ (single Householder). Dashed vertical line at training context length 128. Higher \( n_h \) improves extrapolation to longer sequences of permutations, e.g., \( S_3 \) can be learned with \( n_h=2 \) with a single layer while three layers are required when keeping $n_h=1$.
    } 
    \label{fig:state_tracking_length_extrapolation}
    \vspace{-5mm}
\end{figure}

\textbf{Setup.} 
We evaluate DeltaProduct's ability to capture complex state dynamics using group word problems of increasing difficulty, specifically on the permutation groups \( S_3\), \( S_4\), \( A_5\), and \( S_5\), as implemented by~\citet{merrill-icml24a}. These tasks consist in tracking how a sequence of permutations rearranges elements. An intuitive parallel is the shell game, where one needs to track the position of a hidden object after each shuffle. We train on sequences of 128 permutations and measure extrapolation up to 512. Throughout, we use the extended eigenvalue range, allowing eigenvalues in \([-1,1]\). We find that DeltaProduct models fail to learn even the training context-length when restricted to the standard eigenvalue range $[0, 1]$, regardless of the number of Householder transformations $n_h$ 
as shown in~\Cref{fig:state_tracking_length_extrapolation_neg_eigval_false}. See~\Cref{app:state_tracking} for details on the experiments.

\textbf{Single layer, varying $\boldsymbol{n_h}$.}  \Cref{fig:state_tracking_length_extrapolation} (top row) demonstrates the benefits of increasing the number of Householders \(n_h\) per token for a single layer DeltaProduct.
\citet[Theorem 3]{grazzi-iclr25a} presents a construction for permutations of $n$ elements requiring $n-1$ Householders and keys of size $n$. In agreement, we find that for \(S_3\) achieving reliable performance beyond sequence lengths of 128 requires \(n_h=2\), while \(S_5\) needs \(n_h=4\). 
Unexpectedly, \(S_4\) and \(A_5\) can extrapolate robustly using only \(n_h=2\) despite the theorem suggesting 3 and 4, respectively. 
This efficiency arises from their isomorphism to subgroups of \(\mathrm{SO}(3,\mathbb{R})\), i.e.\@ the group of 3D rotations, \citep[Ch. 1, Sec. 2.4]{schwarzbach2010groups} which only require $n_h=2$ and keys of size 3 (see \Cref{th:suborth}).
Specifically, $S_4$ is isomorphic to the rotation group of the cube (illustrated in~\Cref{fig:cube_labeled}) and $A_5$ to the rotation group of the dodecahedron. See~\Cref{app:state_tracking} for details on the isomorphisms.

\begin{wrapfigure}[12]{r}{0.441\linewidth}
\vspace{-5mm}
  \centering
\tdplotsetmaincoords{75}{120}
\adjustbox{width=1.0\linewidth}{\begin{tikzpicture}[
    tdplot_main_coords, %
    scale=1.1, %
    line cap=round,
    line join=round,
    rounded corners=0.5pt,
    node distance=0.1cm, %
    label_node/.style={font=\tiny, inner sep=1pt} %
    ]

    \tikzset{
        cup/.style={
            trapezium,
            trapezium left angle=85,
            trapezium right angle=85,
            trapezium stretches=true,
            minimum height=0.2cm,
            minimum width=0.06cm,
            draw=black!70,
            rounded corners=1pt,
            rotate=0,
            path picture={
                \fill[#1!60] ([xshift=-0.1cm, yshift=-0.075cm]path picture bounding box.south west) --
                             ([xshift=0.1cm, yshift=-0.075cm]path picture bounding box.south east) --
                             ([xshift=0.035cm, yshift=0.075cm]path picture bounding box.north east) --
                             ([xshift=-0.035cm, yshift=0.075cm]path picture bounding box.north west) -- cycle;
            }
        },
        swap arrow/.style={<->, thick, red, line width=1pt, shorten >=4pt, shorten <=4pt},
        uni arrow/.style={->, thick, red, line width=1pt, shorten >=4pt, shorten <=4pt},
        uni3 arrow/.style={->, thick, red, line width=1pt, shorten >=4pt, shorten <=4pt},
        uni2 arrow/.style={->, thick, red, line width=1pt, shorten >=4pt, shorten <=4pt},
    }

    \begin{scope}[shift={(0, 0.0, 0)}, scale=1.0]
    \coordinate (A) at (0,0,0); \coordinate (B) at (1,0,0);
    \coordinate (C) at (1,1,0); \coordinate (D) at (0,1,0);
    \coordinate (E) at (0,0,1); \coordinate (F) at (1,0,1);
    \coordinate (G) at (1,1,1); \coordinate (H) at (0,1,1);

    \draw[opacity=0.3, dashed] (A) -- (B); \draw[] (B) -- (C);
    \draw[] (C) -- (D); \draw[opacity=0.3, dashed] (D) -- (A);
    \draw[opacity=0.3, dashed] (A) -- (E); %
    \draw[] (D) -- (H); %

    \draw[red!70, thick] (A) -- (G) node[label_node, pos=0.85, above right, black] {1};
    \draw[blue!70, thick] (B) -- (H) node[label_node, pos=0.85, above left, black] {2};
    \draw[yellow!70, thick] (C) -- (E) node[label_node, pos=0.15, below right, black] {3};
    \draw[orange!70, thick] (D) -- (F) node[label_node, pos=0.85, below right, black] {4};

    \draw[] (B) -- (F); \draw[] (C) -- (G);

    \draw[] (E) -- (F) -- (G) -- (H) -- cycle;

    \node[cup=red]    (lcup1) at (2.65, 1) {\tiny 1};
    \node[cup=blue]   (lcup2) [right=of lcup1] {\tiny  2};
    \node[cup=yellow] (lcup3) [right=of lcup2] {\tiny  3};
    \node[cup=orange] (lcup4) [right=of lcup3] {\tiny  4};
    \end{scope}

    \begin{scope}[shift={(-1.4, 0.9, 0)}, scale=0.4]
        \coordinate (A') at (0,0,0); \coordinate (B') at (1,0,0);
        \coordinate (C') at (1,1,0); \coordinate (D') at (0,1,0);
        \coordinate (E') at (0,0,1); \coordinate (F') at (1,0,1);
        \coordinate (G') at (1,1,1); \coordinate (H') at (0,1,1);
        \coordinate (BottomCenter') at (0.5, 0.5, -0.4);
        \coordinate (TopCenter') at (0.5, 0.5, 1.4);
        \draw[gray, dashed, thick] (BottomCenter') -- (TopCenter'); %
        \draw[opacity=0.3, dashed] (A') -- (B'); \draw[] (B') -- (C');
        \draw[] (C') -- (D'); \draw[opacity=0.3, dashed] (D') -- (A');
        \draw[opacity=0.3, dashed] (A') -- (E'); \draw[] (D') -- (H');
        \draw[] (B') -- (F'); \draw[] (C') -- (G');
        \draw[] (E') -- (F') -- (G') -- (H') -- cycle; %
        \coordinate (ArcCenter') at (0.5, 0.5, -0.45);
        \tdplotsetrotatedcoords{0}{0}{0}
        \tdplotdrawarc[tdplot_rotated_coords, ->, thick, cyan]{(ArcCenter')}{0.6}{-60}{60}{}{}
        \node[cyan, font=\tiny, below right=-1pt of ArcCenter'] {$90^\circ$};
    \end{scope}

    \begin{scope}[shift={(-1.3, 2.4, 0)}, scale=1.0]
    \coordinate (A) at (0,0,0); \coordinate (B) at (1,0,0);
    \coordinate (C) at (1,1,0); \coordinate (D) at (0,1,0);
    \coordinate (E) at (0,0,1); \coordinate (F) at (1,0,1);
    \coordinate (G) at (1,1,1); \coordinate (H) at (0,1,1);

    \draw[opacity=0.3, dashed] (A) -- (B); \draw[] (B) -- (C);
    \draw[] (C) -- (D); \draw[opacity=0.3, dashed] (D) -- (A);
    \draw[opacity=0.3, dashed] (A) -- (E); %
    \draw[] (D) -- (H); %

    \draw[orange!70, thick] (A) -- (G) node[label_node, pos=0.85, above right, black] {4}; %
    \draw[red!70, thick] (B) -- (H) node[label_node, pos=0.85, above left, black] {1}; %
    \draw[blue!70, thick] (C) -- (E) node[label_node, pos=0.15, below right, black] {2}; %
    \draw[yellow!70, thick] (D) -- (F) node[label_node, pos=0.85, below right, black] {3}; %

    \draw[] (B) -- (F); \draw[] (C) -- (G);

    \draw[] (E) -- (F) -- (G) -- (H) -- cycle;

    \node[cup=orange] (rcup1) at (2.65, 1) {\tiny  4};
    \node[cup=red]    (rcup2) [right=of rcup1] {\tiny 1};
    \node[cup=blue]   (rcup3) [right=of rcup2] {\tiny  2};
    \node[cup=yellow] (rcup4) [right=of rcup3] {\tiny  3};
    \end{scope}

\end{tikzpicture}}
  \vspace{-5mm}
\caption{Rotating a cube permutes its diagonals according to the $S_4$ group. This example shows how a $90^{\circ}$ rotation of the cube leads to the 4-cycle ($1\to2\to3\to4\to1$).} \label{fig:cube_labeled} %
\end{wrapfigure}

To empirically validate whether DeltaProduct$_2[-1,1]$ exploits the isomorphism of $S_4$ to the rotation group of the cube, we verified two hypotheses: whether both Householders act as reflections ($\beta_{i, 0}=\beta_{i, 1}=2$) composing to form rotations (see~\Cref{fig:rotation}), and whether the keys are in a three-dimensional subspace. 
By recording $\beta_{i, 0}$ and $\beta_{i, 1}$ values (representing the first and second Householder in the product) across all 24 permutations of $S_4$, we find that a single head has indeed learned to use both Householder transformations as reflections where $\beta_{i, 0}=\beta_{i, 1}=2$, effectively creating rotation matrices as shown in \Cref{app:state_tracking}. 
This pattern is evident in Figure \ref{fig:S4_beta_inv} (left), where this head predicts both $\beta_{i, 0}$ and $\beta_{i, 1}$ approximately at 2, confirming that the model successfully learns to 
approximate rotations by combining two reflections. Note that the eigenvalues of the Householder product become complex in this case allowing it to perform rotations (Prop~\ref{proposition:householder}.3).
To further verify whether the keys are in a three-dimensional subspace, we apply Principal Component Analysis~\citep{pearson1901liii}  to the key vectors of this head. The results in Figure \ref{fig:S4_beta_inv} (right) demonstrate that three principal components account for over 95\% of the variance in the key space. This finding strongly supports our theoretical understanding, as it indicates that the model primarily operates in a three-dimensional subspace, which aligns with the structure of $\mathrm{SO}(3,\mathbb{R})$.

\begin{wrapfigure}[13]{r}{0.43\linewidth}
  \centering
  \vspace{-5mm}
  \adjustbox{width=\linewidth}{
  \includegraphics[width=0.423\linewidth, trim={3.5 0 0 0}, clip=true]{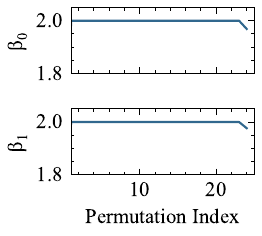}
  \includegraphics[width=0.51\linewidth, trim={0 0 3 0}, clip=true]{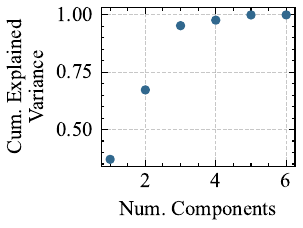}}
  \vspace{-4mm}
  \caption{\emph{(Left)} Estimated $\beta$ values for DeltaProduct$_2[-1,1]$ on all permutations of $S_4$, clustering near 2 (reflection). \emph{(Right)} PCA of key vectors shows that the first three components explain most of the variance.}
  \label{fig:S4_beta_inv}
\end{wrapfigure}
\textbf{Multiple layers, $n_h=1$.}
The bottom row of \Cref{fig:state_tracking_length_extrapolation} explores the expressivity of multi-layered DeltaNet$[-1, 1]$ (i.e., $n_h=1$). While increasing layers with $n_h=1$ improves performance, it is less effective than increasing $n_h$ and degrades length-extrapolation performance. Specifically, to fit the training context length, $S_3$ required 3 layers, $S_4$ needed 6 layers, and $A_5$ required 3. For $S_5$, even 10 layers proved insufficient. This suggests that simply adding depth is less effective in practice than increasing $n_h$, despite theoretical constructions showing that $S_3$ can be solved with just 2 layers (\Cref{thm:dihedral}) and any group word problem can be solved with 4 layers (with a very wide MLP).

\vspace{-4mm}
\subsection{Language Modeling}\label{sec:language_modelling}
\vspace{-2mm}

\begin{figure}[t]
    \centering\includegraphics[width=0.92\textwidth,trim={4 0 0 3}, clip=true]{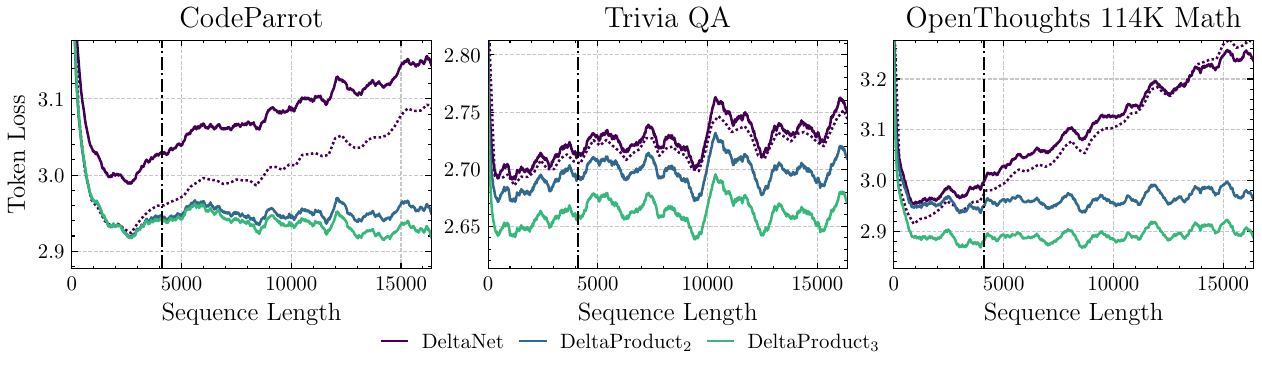}
    \vspace{-3mm}
    \caption{Length extrapolation results. 
    Solid and dashed lines represent models with 8 and 12 heads respectively.
    Note that DeltaProduct$_2[-1,1]$ with 8 heads (392M parameters) matches the parameter count of DeltaNet ($n_h=1$) with 12 heads (dotted line), while achieving significantly better length extrapolation. For each index of the sequence, we report the moving average over 501 tokens as suggested by \citet{lin2025forgetting}.}
\label{fig:deltaproduct_length_extrapolation}
\end{figure}

\textbf{Setup.} We trained two model variants: $\text{DeltaProduct}_{n_h}[-1, 1]$ and $\text{Gated DeltaProduct}_{n_h}[-1, 1]$ using the FineWeb dataset~\citep{penedo2024finewebdatasetsdecantingweb}.
We provide details about the training pipeline and hyperparameters in~\Cref{app:lm-experimental-setup}. To assess length extrapolation, we measured the cross-entropy loss beyond the training context length of 4096 tokens on CodeParrot~\citep{tunstall2022natural} for coding, OpenThoughts-114k-Math~\citep{openthoughts} for math, and TriviaQA~\citep{2017arXivtriviaqa} for knowledge retrieval. %
We evaluated the models using language understanding, reasoning, and retrieval benchmarks from lm-eval-harness~\citep{eval-harness}, with task specifics in~\Cref{app:task_details}. Throughout our experiments we find that the training process remained stable even as we increased $n_h$ (see~\Cref{app:training-behavior}).

\textbf{Length extrapolation results.} Remarkably, as shown in~\Cref{fig:deltaproduct_length_extrapolation}, DeltaProduct's length extrapolation performance increases sharply when going from one to two Householders, and at $n_h=3$, the performance degradation is minimal across the sequence length. We hypothesize that DeltaProduct achieves better length extrapolation by enhancing DeltaNet's forgetting mechanism. While DeltaNet requires $n$ rank-1 updates to reset its state to zero, DeltaProduct can accelerate this forgetting process by a factor of $n_h$. 
However, our experiments show that DeltaProduct$_2[-1,1]$ still performs better with a forget gate, as demonstrated by its improved results when compared to the non-gated version (see \Cref{app:length_extrapolation}). 

\begin{figure}[t]
\vspace{-5mm}
\centering
    \centering
    \adjustbox{width=1.0\linewidth}{\includegraphics[width=0.438\linewidth, trim={0 0 0 0}, clip]{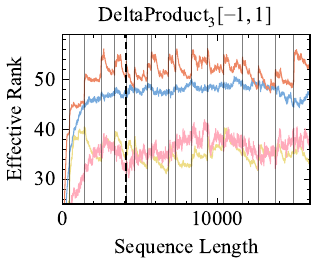} \hspace{-1.5mm} 
    \includegraphics[width=0.4\linewidth, trim={13 0 0 0}, clip]{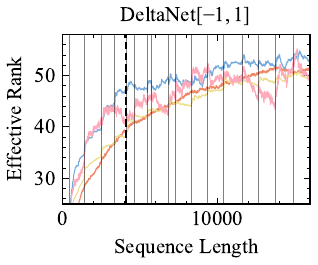}\includegraphics[width=0.4\linewidth, trim={13 0 0 0}, clip]{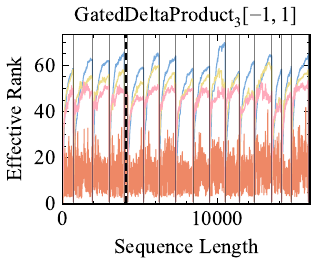} \hspace{-1.5mm}
    \includegraphics[width=0.4\linewidth, trim={13 0 0 0}, clip]{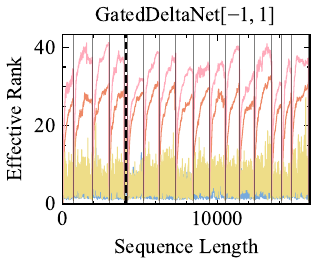}\hspace{-1mm}}
    \vspace{-6mm}
    \caption{Effective rank of $\mH_i$ for 4 of 8 heads in layer 20/24 
    on trivia-qa sequences. Solid vertical lines mark new question-answer pairs; dashed vertical line indicates 4096-token training context length; colored lines show effective rank per head over the sequence.}
    \vspace{-7mm}
    \label{fig:effective_rank_main_paper}
\end{figure}
\begin{wrapfigure}[18]{r}{0.22\linewidth}
\centering
\vspace{-12.6pt}
\adjustbox{width=\linewidth}{
\includegraphics[width=\linewidth, trim={4 0 0 30}, clip=true]{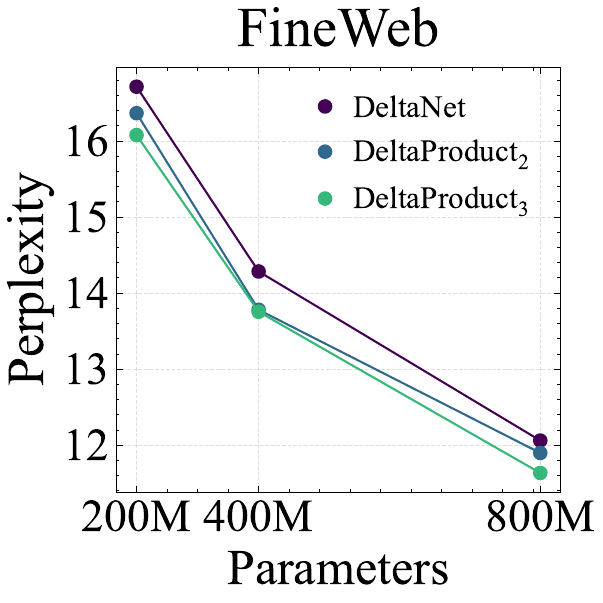}} \\
\adjustbox{width=\linewidth}{
\includegraphics[width=0.3\linewidth, clip=true, trim={3 5 3 2}]{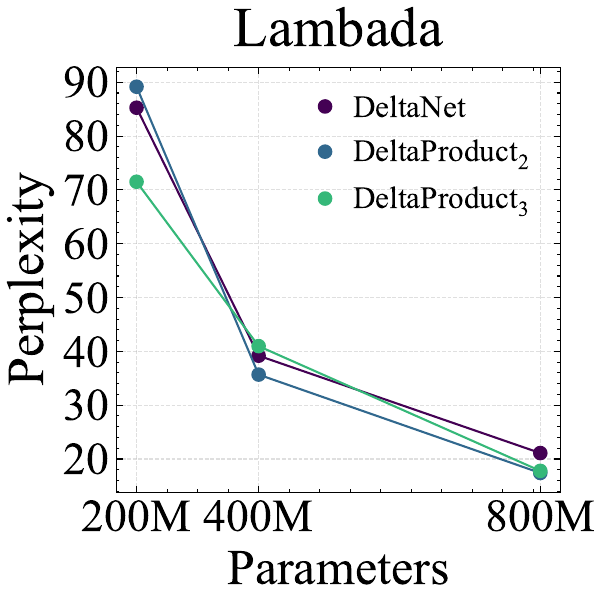}
\includegraphics[width=0.285\linewidth, clip=true, trim={26 5 3 2}]{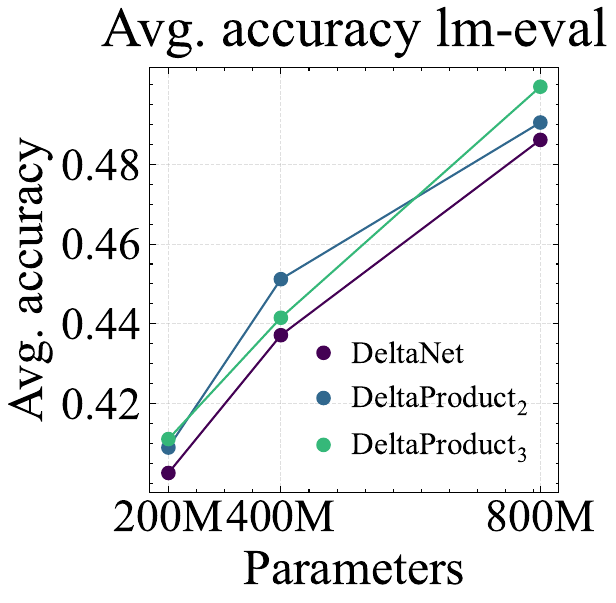}
}
\vspace{-15pt}
\caption{Scaling analysis w.r.t. (\textit{top}) final perplexity on FineWeb, (\textit{bottom}) Lambada and lm-eval tasks.}
\label{fig:scaling-plot-head-dim-scaling}
\end{wrapfigure}
\textbf{Analyzing state dynamics through effective rank.}
To test our hypothesis towards a better forgetting mechanism, we compare how the information density of the hidden state $\mH_i$ changes over time for (Gated) DeltaProduct$_3[-1, 1]$ and (Gated) DeltaNet$[-1,1]$. We propose to measure the information density of $\mH_i$ using its effective rank~\citep{roy2007effective} defined as
$\operatorname{erank}(\mH_i) = \exp(-\sum_{k} p_k \log p_k)$, where $p_k = \sigma_k/\sum_i |\sigma_i|$ and $\sigma_i$ is the $i$-th singular value of $\mH_i$, which satisfies $1 \le \operatorname{erank}(\mH_i) \le \operatorname{rank}(\mH_i)$. 
Note that \citet{parnichkun2025quantifying} also used the effective rank in the context of linear RNNs, but measured on a different quantity: a linear operator associated to the recurrence. Similarly,~\citet{peng2025rwkv7gooseexpressivedynamic} conducted an interpretability study on the hidden state of RWKV-7 using the average stable rank~\citep{rudelson2007sampling} across layers.
In \Cref{fig:effective_rank_main_paper}, we present the effective rank of DeltaProduct$_3$, Gated DeltaProduct$_3$, Gated DeltaNet, and DeltaNet across a sequence of tokens from the TriviaQA dataset, which consists of question-answer pairs that test common knowledge. We observe that some heads of DeltaProduct learn to update their state with new information at the beginning of sequence (BOS) tokens and then decay over the rest of the sequence. In comparison, a few heads of Gated DeltaNet and Gated DeltaProduct learn to reduce the effective rank of the hidden state close to zero after each BOS token, while the other heads maintain a very low effective rank throughout the sequence.
In contrast, DeltaNet's effective rank increases substantially beyond the training context.
We attribute DeltaNet's inability to extrapolate to longer sequences to this issue, as the training and extrapolation regimes differ, resulting in a distribution shift. We present additional results for other layers and the CodeParrot dataset in~\Cref{app:length_extrapolation}.

\textbf{Scaling Analysis.} 
We test whether it is favorable to increase model size through Householder products as opposed to increasing model capacity through e.g., the head dimension. We adjust either the head dimension in DeltaNet, or the number of Householder products ($n_h$) in DeltaProduct to reach a given size.~\Cref{fig:scaling-plot-head-dim-scaling} (top) shows that DeltaProduct scales better in terms of training perplexity. The results on lm-eval-harness tasks, in~\Cref{fig:scaling-plot-head-dim-scaling} (bottom), reinforce our findings as DeltaProduct maintains its performance advantage at the largest scale. In total, we test two methods to reach parameter equivalence at each model scale: scaling the number of heads or the head dimension. We find that the latter shows more consistent scaling. Results for scaling the number of heads can be found in~\Cref{app:scaling-behavior}.
In \Cref{app:task_details} we report additional results, including gated variants, where we find that both DeltaProduct and Gated DeltaProduct on average outperform their baseline counterparts ($\text{DeltaNet}[-1,1]$ and $\text{Gated DeltaNet}[-1,1]$) across the considered language modeling benchmarks from lm-eval harness when we increase $n_h$.
Interestingly, $\text{DeltaProduct}_{3}[-1,1]$ achieves comparable performance to $\text{Gated DeltaNet}[-1,1]$, despite lacking a forget gate.
\vspace{-2mm}
\section{Conclusion and Future Work}\label{sec:conclusion}
\vspace{-2mm}
We present DeltaProduct, an extension of DeltaNet that uses products of Householder transformations as state-transition matrices. Our approach bridges the gap between structured and dense matrices, with each recurrence step interpretable as multiple steps of gradient descent on an associative recall loss (compared to DeltaNet's single step). The number of Householder matrices ($n_h$) serves as a tunable parameter balancing expressivity and computational efficiency. Our experiments demonstrate DeltaProduct's superior performance over DeltaNet in state tracking, formal language recognition, and language modeling, with particularly strong length extrapolation results.
DeltaProduct represents a promising step towards developing sequence models that are more capable while still remaining scalable.
\textbf{Limitation.}
The main limitation of DeltaProduct is its increased computational cost, which scales linearly with $n_h$ during training. %
\textbf{Future Work.} Future research could explore more expressive and possibly stable matrix parameterizations or an adaptive version of DeltaProduct determining the number of Householders per token similar to~\citet{graves2016adaptive} in order to  reduce computation. The additional parameters introduced with higher $n_h$ could be reduced through LoRA MLPs as done in RWKV-7~\citep{peng2025rwkv7gooseexpressivedynamic}. In addition, one could combine DeltaProduct with fixed point RNNs~\citep{schone2025implicit, movahedi2025fixedpoint}. Our DeltaProduct implementation could be further optimized through custom kernels as suggested in the recent works by~\citet{cirone2025parallelflow} or~\citet{beck2025tiled}. We also identify promising applications for DeltaProduct in reasoning tasks, where the higher token counts align well with the strength of linear RNNs. Given that state-tracking benefits reasoning tasks~\citep{schone2025implicit}, future work should examine how increasing $n_h$ affects reasoning.
\vspace{-4mm}
\section*{Acknowledgements}
\vspace{-2mm}
We would like to thank Songlin Yang, Eddie Bergman, Arjun Krishnakumar, Alma Lindborg, and Julie Naegelen for constructive discussions and feedback. We acknowledge the support and assistance of the Data Science and Computation Facility and its Support Team, in particular Mattia Pini, in using the IIT High-Performance Computing Infrastructure, on which we run part of our experiments.
This research was partially supported by the following sources:  PNRR MUR Project PE000013 CUP J53C22003010006 “Future Artificial Intelligence Research (FAIR)“, funded by the European Union – NextGenerationEU, and EU Project ELSA under grant agreement No. 101070617.
TAILOR, a project funded by EU Horizon 2020 research and innovation programme under GA No 952215; the Deutsche Forschungsgemeinschaft (DFG, German Research Foundation) under grant number 417962828; the European Research Council (ERC) Consolidator Grant 'Deep Learning 2.0' (grant no. 10). This research was partially funded by the Deutsche Forschungsgemeinschaft (DFG, German Research Foundation) under grant number 539134284, through EFRE (FEIH 2698644) and the state of Baden-Württemberg. Frank Hutter acknowledges financial support by the Hector Foundation. The authors acknowledge support from ELLIS and ELIZA. Funded by the European Union. The authors gratefully acknowledge the computing time made available to them on the high-performance computers and at the NHR Centers at TU Dresden and KIT. These centers are jointly supported by the Federal Ministry of Research, Technology and Space of Germany and the state governments participating in the 
NHR. Views and opinions expressed are however those of the author(s) only and do not necessarily reflect those of the European Union or the ERC. Neither the European Union nor the ERC can be held responsible for them.
\vspace{-5pt}
\begin{figure}[h]
\begin{center}
\includegraphics[width=0.16\textwidth,trim={10 0 5 0}, clip=true]{figures/BaWue_Logo_Standard_rgb_pos.png} ~~~ \includegraphics[width=0.16\textwidth,trim={10 0 4 0}, clip=true]{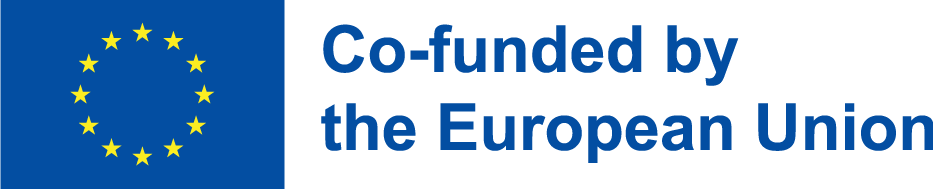} 
\end{center}
\end{figure}

\bibliographystyle{unsrtnat}
\bibliography{ref,bib/lib,bib/proc,bib/strings}
\appendix

\newpage

\section*{\textbf{Supplementary Material}}
\vspace{-2mm}
The supplementary material is structured as follows:

\Cref{app:proposition_sec} presents a proposition that provides special cases of the generalized Householder product for specific choices of keys and betas and characterizes the spectrum of the product of two generalized Householders.
    
\Cref{app:expressivity} characterizes the expressivity of DeltaProduct. 
\begin{itemize}
        \item \ref{app:expressivity:group_word_rproblems} demonstrates how DeltaProduct can solve any group word problem in a single layer when $n_h$ is sufficiently high, or alternatively using up to 4 layers when $n_h$ is limited.
        \item \ref{app:regular} shows how DeltaProduct can recognize any regular language in a finite number of layers using the Krohn-Rhodes decomposition.
        \item \ref{app:rwkv_7_regular} details the expressivity of a linear RNN with products of RWKV-7 state-transition matrices.
        \item \ref{app:dihedral} demonstrates how DeltaNet can solve any dihedral group in 2 layers using a specific construction.
        \item \ref{app:stabilityvsexpressivity} discusses the tradeoff between expressivity and stability in linear RNNs.
\end{itemize}

\Cref{app:experiments} provides comprehensive details on our experiments and additional results.

We provide the code for our experiments at \url{https://github.com/automl/DeltaProduct}

\textbf{Notation.} Mathematical objects are typically styled as follows.
Matrices: uppercase letters (e.g., $\mA, \mH, \mM$).
Vectors: lowercase letters (e.g., $\vk, \vv, \vx$).
Standard sets: $\R$ for reals, $\C$ for complexes, $\N$ for naturals. Common symbols and operations are denoted as follows.
$\mI$: Identity matrix.
$\top$: Transpose operator (e.g., $\vk^\top$).
$\|\vv\|$ or $\|\vv\|_2$: Euclidean norm of a vector $\vv$.
$\|\mA\|$: the operator norm for a matrix $\mA$.
$|\cdot|$: Absolute value for real scalars, or modulus for complex numbers.
$\odot$: Element-wise (Hadamard) product.
$\spec(\mA), \rho(\mA)$: Spectrum (set of eigenvalues) and spectral radius of matrix $\mA$.
$\text{tr}(\mA), \det(\mA)$: Trace and determinant of matrix $\mA$.
$\kron_{ij}$: Kronecker delta (1 if $i=j$, 0 otherwise). $\ve_i \in \{0,1\}^n$ is the $i$-th element of the canonical basis of $\R^n$.
\vspace{-3mm}
\section{Spectral Properties and Simplifications of Householder Product Matrices}\label{app:proposition_sec}
\vspace{-2mm}
The following proposition characterizes conditions under which the product structure simplifies and the spectrum is real, contrasting with the general case which allows for complex spectra. It provides illustrative special cases for the more general Proposition 1 in~\citet{grazzi-iclr25a}.
\begin{proposition}\label{proposition:householder}
Let $\mA \in \R^{n \times n}$ be a matrix defined as the product of $n_h \ge 1$ generalized Householder transformations:
$ \mA = \prod_{j=1}^{n_h} \mH_j$
where each $\mH_j = \mI - \beta_j \vk_j \vk_j^\top$, with $\vk_j \in \R^n$ being a unit vector ($\norm{\vk_j}_2 = 1$) and $\beta_j \in [0, 2]$. Let $\spec(\mA) \subset \C$ denote the spectrum (set of eigenvalues) of $\mA$. Then, all eigenvalues $\lambda \in \spec(\mA)$ satisfy $\abs{\lambda} \le 1$ and the following hold.
\vspace{-2mm}
\begin{enumerate}[leftmargin=2em]
    \item \textbf{(Identical Direction Vector)} Let $\vk \in\mathbb{R}^n$ be nonzero and  $\vk_j = \vk / \norm{\vk}$ for all $j=1..m$. Then
        $\prod_{j=1}^m 
        \bigl(\mI - \beta_j\, \vk\vk^\top \bigr)
        =
        \mI
        -
        \beta^*\,
        \vk\vk^\top$
    for some real scalar $\beta^*$ depending on $\{\beta\}_{j=1}^m$. The product collapses to a single effective transformation of the same form. Consequently, if $\mA$ is formed using only a single direction vector $\vk_1$, it is symmetric and its spectrum is real.

    \item \textbf{(Orthogonal Vectors)} If the direction vectors $\{\vk_j\}_{j=1}^{n_h}$ form an orthonormal set (i.e., $\vk_j^\top \vk_l = \kron_{jl}$; this requires $n_h \le n$), then the factors $\mH_j$ commute, and the product simplifies to $\mA = \mI - \sum_{j=1}^{n_h} \beta_j \vk_j \vk_j^\top$. This matrix $\mA$ is symmetric, and its spectrum is purely real: $\spec(\mA) = \{1 - \beta_1, \dots, 1 - \beta_{n_h}\} \cup \{1 \text{ (multiplicity } n-n_h) \}$. When $\beta_j=2$ for all $j \in \{1, ..., n_h\}$ then $\mA$ is known as a block reflector~\citep{schreiber1988block}.

    \item \textbf{(Complex Spectrum via Non-orthogonal Directions)} For $n_h = 2$, $\mA$ has complex eigenvalues if two consecutive direction vectors, e.g.\@ $\vk_1, \vk_2$ satisfy $0 < |\vk_1^\top \vk_2| < 1$ and their coefficients $\beta_1, \beta_2$ exceed a threshold $\beta^*(\theta) < 2$ dependent on the angle $\theta$ between them. 
    Conversely, if $0 \le \beta_1\le 1$ or $0 \le \beta_2\le 1$, these eigenvalues from the 2D span are guaranteed to be real. %
\end{enumerate}
\vspace{-2mm}

\end{proposition}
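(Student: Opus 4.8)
The plan is to treat the three cases uniformly by exploiting the structure of each factor $\mH_j = \mI - \beta_j\vk_j\vk_j^\top$, which acts as the scalar $1-\beta_j$ on $\mathrm{span}(\vk_j)$ and as the identity on $\vk_j^\perp$; since $\beta_j \in [0,2]$ we have $|1-\beta_j| \le 1$, so $\norm{\mH_j} = 1$. The global bound $|\lambda|\le 1$ for every $\lambda \in \spec(\mA)$ then follows from submultiplicativity of the spectral norm, $\rho(\mA)\le\norm{\mA}\le\prod_j\norm{\mH_j}=1$ (cf.\ \citep[Prop.~1.1]{grazzi-iclr25a}). For Part~1, writing $\hat\vk=\vk/\norm{\vk}$, all factors share the invariant splitting $\R^n=\mathrm{span}(\hat\vk)\oplus\hat\vk^\perp$, acting as $1-\beta_j$ on the line and as $\mI$ on the complement; hence the product acts as $\prod_{j=1}^m(1-\beta_j)$ on the line and as $\mI$ elsewhere, giving $\mA=\mI-\beta^*\hat\vk\hat\vk^\top$ with $\beta^*=1-\prod_{j=1}^m(1-\beta_j)$, which is symmetric with real spectrum. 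For Part~2, the decisive observation is that for $j\neq l$ the cross term in $\mH_j\mH_l$ carries the factor $\vk_j^\top\vk_l=\kron_{jl}=0$, so $\mH_j\mH_l=\mI-\beta_j\vk_j\vk_j^\top-\beta_l\vk_l\vk_l^\top=\mH_l\mH_j$; a short induction collapses the commuting product to $\mA=\mI-\sum_j\beta_j\vk_j\vk_j^\top$, and reading off its action on each $\vk_j$ (eigenvalue $1-\beta_j$) and on the $(n-n_h)$-dimensional complement of $\mathrm{span}\{\vk_j\}$ (eigenvalue $1$) yields the stated spectrum, with $\beta_j=2$ recovering the block reflector.

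Part~3 is where the real work lies, and I would organize it around the invariant plane $V=\mathrm{span}(\vk_1,\vk_2)$. Because $\mA$ fixes $V^\perp$ pointwise, all nontrivial eigenvalues are those of the restriction $\mA|_V$. Choosing an orthonormal basis of $V$ with $\vk_1=(1,0)^\top$ and $\vk_2=(\cos\theta,\sin\theta)^\top$, so that $\vk_1^\top\vk_2=\cos\theta$, I would write the two $2\times 2$ factors and compute the trace and determinant of their product, obtaining $T:=\mathrm{tr}(\mA|_V)=2-\beta_1-\beta_2+\beta_1\beta_2\cos^2\theta$ and $D:=\det(\mA|_V)=(1-\beta_1)(1-\beta_2)$. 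The eigenvalues are $\tfrac12\bigl(T\pm\sqrt{T^2-4D}\bigr)$, so they are non-real precisely when the discriminant $\Delta:=T^2-4D$ is negative, reducing the whole claim to a sign analysis of $\Delta$ as a function of $(\beta_1,\beta_2)$ for fixed $\theta$.

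The converse direction is then a clean case split. With $a=1-\beta_1$, $b=1-\beta_2$, $c=\cos^2\theta\in(0,1)$ one has $T=a+b+\beta_1\beta_2 c$ and $D=ab$. If both $\beta_i\le 1$, then $a,b\ge 0$ and $T\ge a+b\ge 2\sqrt{ab}=2\sqrt D$ by AM--GM, so $\Delta=T^2-4D\ge 0$; if instead exactly one $\beta_i\le 1$, then $D\le 0$ and $\Delta\ge T^2\ge 0$. Either way the eigenvalues are real. For the forward direction I would evaluate at the reflection corner $\beta_1=\beta_2=2$, where $T=2\cos 2\theta$, $D=1$, and $\Delta=4\cos^2 2\theta-4=-4\sin^2 2\theta<0$ whenever $0<|\cos\theta|<1$ (geometrically, the product of two genuine reflections across planes at angle $\theta$ is a rotation by $2\theta$ with eigenvalues $e^{\pm 2i\theta}$, as in \Cref{fig:rotation}). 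Since $\Delta$ is a polynomial, hence continuous, in $(\beta_1,\beta_2)$, its strict negativity at $(2,2)$ propagates to an open neighborhood, which contains a square $(\beta^*(\theta),2]^2$ for some $\beta^*(\theta)<2$; this furnishes the claimed threshold. The main obstacle is precisely this last step: giving a sharp, $\theta$-dependent description of the real/complex boundary rather than mere existence. One can make it explicit on the diagonal $\beta_1=\beta_2=\beta$, where $\Delta=\beta^2\cos^2\theta\,(4-4\beta+\beta^2\cos^2\theta)$ is negative exactly when $\cos^2\theta<4(\beta-1)/\beta^2$, pinning the diagonal threshold; extending this to a tight two-variable region is the delicate part, although only existence of some $\beta^*(\theta)<2$ is needed for the statement as written.
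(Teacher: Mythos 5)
Your proof is correct, and Parts~1 and~2, the global bound $|\lambda|\le 1$, and the converse half of Part~3 coincide with the paper's argument (the paper proves Part~1 by induction on the recursion $\beta^{(m+1)}=\beta^{(m)}+\beta_{m+1}-\beta^{(m)}\beta_{m+1}$, which is equivalent to your closed form $\beta^*=1-\prod_j(1-\beta_j)$; the converse case split via AM--GM and $\det\le 0$ is identical). Where you genuinely diverge is the forward direction of Part~3: you establish the existence of the threshold $\beta^*(\theta)<2$ softly, by checking $\Delta=-4\sin^2 2\theta<0$ at the reflection corner $(\beta_1,\beta_2)=(2,2)$ and invoking continuity of the polynomial $\Delta$ to propagate strict negativity to a square $(\beta^*(\theta),2]^2$, supplemented by the explicit diagonal threshold $\cos^2\theta<4(\beta-1)/\beta^2$. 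The paper instead solves the inequality $\Delta<0$ exactly, rewriting it as a quadratic in $x=\cos^2\theta$ and extracting the sharp region
\begin{equation*}
\frac{(\sqrt{\beta_1-1}-\sqrt{\beta_2-1})^2}{\beta_1\beta_2} \;<\; \cos^2\theta \;<\; \frac{(\sqrt{\beta_1-1}+\sqrt{\beta_2-1})^2}{\beta_1\beta_2},
\end{equation*}
which is satisfiable only when both $\beta_1,\beta_2\in(1,2]$. Your route is shorter and fully suffices for the statement as written, which only asserts existence of some threshold; the paper's computation buys the exact real/complex boundary in $(\beta_1,\beta_2,\theta)$ (the content of its Figure~5) and the sharper structural conclusion that complex eigenvalues arise if and only if both $\beta_i>1$ together with the angle condition, which your perturbation argument alone does not deliver. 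You correctly flag this trade-off yourself, so there is no gap, only a difference in what is proved beyond the stated claim.
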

\begin{proof}\textbf{Identical Direction Vector}
If $m=1$, then the statement is trivially satisfied with $\beta^* = \beta_1$.
Suppose the statement is true for $m \geq 1$, i.e., $\prod_{j=1}^m (\mI - \beta_j\, \vk\vk^\top) = \mI - \beta^{(m)}\,\vk\vk^\top.$
Multiplying by $(\mI - \beta_{m+1}\,\vk\vk^\top)$ produces $(\mI - \beta^{(m)}\,\vk\vk^\top)(\mI - \beta_{m+1}\,\vk\vk^\top) = \mI - \bigl[\beta^{(m)} + \beta_{m+1} - \beta^{(m)}\beta_{m+1}\bigr] \,\vk\vk^\top.$
Hence, by induction, the product of any number of such factors remains of the form 
\(\mI - \beta^* \vk\vk^\top\).
Since the resulting matrix $\mA = \mI - \beta^* \vk\vk^\top$ (where $\vk = \vk_1$) is symmetric, its eigenvalues are real. 

\textbf{Orthogonal Vectors}
Assume $\{\vk_j\}_{j=1}^{n_h}$ is an orthonormal set ($\vk_j^\top \vk_l = \kron_{jl}$, $n_h \le n$). Let $\mP_j = \vk_j \vk_j^\top$. Then $\mP_j \mP_l = \kron_{jl} \mP_j$. The factors $\mH_j = \mI - \beta_j \mP_j$ commute because $\mP_j \mP_l = \bm{0}$ for $j \neq l$. The product simplifies via induction to $\mA = \mI - \sum_{j=1}^{n_h} \beta_j \mP_j$. This matrix is symmetric. Its eigenvalues are $(1 - \beta_l)$ for $l=1, \dots, n_h$ (eigenvector $\vk_l$) and $1$ with multiplicity $n-n_h$ (subspace orthogonal to all $\vk_j$). The spectrum is real.

\textbf{Complex Spectrum via Non-orthogonal Directions and Real Subcase}
\begin{figure}
    \centering
    \vspace{-2mm}
    \includegraphics[width=1.0\linewidth, trim={3 0 4 0}, clip=true]{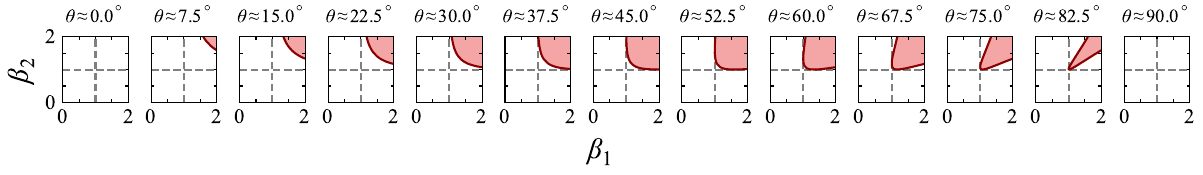}
    \vspace{-6mm}
    \caption{Visualization of complex eigenvalues of $(\mI - \beta_1\vk_1\vk_1^\top)(\mI - \beta_2\vk_2\vk_2^\top)$ with $\cos \theta = \vk_1^\top\vk_2$. Complex region in red, real in white.}
    \label{fig:complex_eigvals}
\end{figure}
Let $\vk_1,\vk_2$ span a 2D subspace $S \subset \R^n$ with $\cos\theta = \vk_1^\top\vk_2$ such that $0 < |\cos\theta| < 1$. The product $\mA = \mH_2\mH_1$ acts as the identity on $S^\perp$ (preserving $n-2$ eigenvalues at 1) and non-trivially on $S$. The restriction $\mA_S$ of $\mA$ to $S$ has trace $\text{tr}(\mA_S) = 2 - \beta_1 - \beta_2 + \beta_1\beta_2\cos^2\theta$ and determinant $\det(\mA_S) = (1-\beta_1)(1-\beta_2)$. The discriminant of its characteristic equation is $D = [\text{tr}(\mA_S)]^2 - 4\det(\mA_S)$. Complex eigenvalues arise if $D < 0$.

To find the explicit bounds, we expand the inequality $D < 0$:
$$
(2 - \beta_1 - \beta_2 + \beta_1\beta_2\cos^2\theta)^2 - 4(1-\beta_1)(1-\beta_2) < 0
$$
Rearranging this expression as a quadratic in $x = \cos^2\theta$ yields:
$$
(\beta_1\beta_2)^2 x^2 + 2(2 - \beta_1 - \beta_2)\beta_1\beta_2 x + (\beta_1 - \beta_2)^2 < 0
$$
This inequality holds if and only if $x=\cos^2\theta$ lies strictly between the two roots of the corresponding equation. Solving for the roots gives the explicit bounds for complex eigenvalues:
$$
\frac{(\sqrt{\beta_1-1} - \sqrt{\beta_2-1})^2}{\beta_1\beta_2} < \cos^2\theta < \frac{(\sqrt{\beta_1-1} + \sqrt{\beta_2-1})^2}{\beta_1\beta_2}
$$
This inequality is only satisfiable when $\beta_1, \beta_2 \in (1, 2]$. For the special case of two standard reflections where $\beta_1=\beta_2=2$, the condition simplifies to $0 < \cos^2\theta < 1$, confirming that the product of any two distinct reflections is a rotation.

Conversely, we show that if at least one coefficient $\beta_i \in [0,1]$, the eigenvalues are real as $D \ge 0$. We analyze this by cases:
\begin{itemize}[leftmargin=*, topsep=2pt, itemsep=0pt]
    \item \textbf{Case 1: One coefficient is in $[0,1]$, the other is in $(1,2]$.} Without loss of generality, let $\beta_1 \in [0,1]$ and $\beta_2 \in (1,2]$. This implies $(1-\beta_1) \ge 0$ and $(1-\beta_2) < 0$, so their product $\det(\mA_S) \le 0$. The term $-4\det(\mA_S)$ is therefore non-negative. Since $[\text{tr}(\mA_S)]^2 \ge 0$, their sum $D$ must be non-negative.
    \item \textbf{Case 2: Both coefficients are in $[0,1]$.} By the AM-GM inequality on the non-negative terms $(1-\beta_1)$ and $(1-\beta_2)$, we have $(1-\beta_1) + (1-\beta_2) \ge 2\sqrt{(1-\beta_1)(1-\beta_2)} = 2\sqrt{\det(\mA_S)}$. Since $\text{tr}(\mA_S)$ includes an additional non-negative term $\beta_1\beta_2\cos^2\theta$, it also holds that $\text{tr}(\mA_S) \ge 2\sqrt{\det(\mA_S)}$. Squaring both sides gives $[\text{tr}(\mA_S)]^2 \ge 4\det(\mA_S)$, ensuring $D \ge 0$.
\end{itemize}
This analysis confirms that complex eigenvalues, which enable rotations, can only arise if and only if both $\beta_1 > 1$ and $\beta_2 > 1$. When at least one $\beta_i \le 1$, real eigenvalues restrict the transformations in that subspace to scaling or reflection. This clear distinction in behavior, dictated by the $\beta$ values and $\theta$, is illustrated in~\Cref{fig:complex_eigvals}.
\end{proof}
\section{Expressivity of DeltaProduct}\label{app:expressivity}
In this section, we characterize the expressivity of (Gated) DeltaProduct in solving group word problems and recognizing regular languages, in support of \Cref{sec:expressivity}. The results hold in finite precision (since our constructions require a finite number of values), and take inspiration from \citet[Appendix D]{peng2025rwkv7gooseexpressivedynamic} and \citep{grazzi-iclr25a}.
We begin by stating and discussing our key assumptions in \Cref{app:assumptions}. We then present our main results for group word problems in \Cref{app:expressivity:group_word_rproblems}, followed by our findings for regular languages in \Cref{app:regular}. In \Cref{app:dihedral}, we examine a result specific to dihedral groups. Finally, we explore the fundamental tradeoff between expressivity and stability of the recurrence in \Cref{app:stabilityvsexpressivity}.

\subsection{Assumptions}\label{app:assumptions}
We consider a {\color{red}(Gated)} DeltaProduct model where each layer is structured as 
\begin{gather*}
\mH_i = \mA(\vx_i) \mH_{i-1} + \mB(\vx_i),\qquad
\hat{\vy}_i = \mathrm{dec}(\mH_i, \vx_i)\qquad
\text{where } i \in {1, \dots, t} \\
\mA(\vx_i) = {\color{red}g_i}\prod_{j=1}^{n_h} \Bigl(\mI - \beta_{i,j}\,\vk_{i,j}\vk_{i,j}^\top\Bigr), \quad
\mB(\vx_i) = \sum_{j=1}^{n_h} \Bigl(\prod_{k=j+1}^{n_h} \bigl(\mI - \beta_{i,k}\,\vk_{i,k}\vk_{i,k}^\top\bigr)\Bigr)\,\beta_{i,j}\,\vk_{i,j}\vv_{i,j}^\top,
\end{gather*}
where $g_i$ is only present in the gated variant and there is only one head per layer. If $H$ heads are considered, head $j$ will run the recurrence $\mH^j_i = \mA^j(\vx_i) \mH_{i-1} + \mB^j(\vx_i)$, (with different learnable parameters from all other heads) and all the states will be passed to the decoder to get the output as $\hat \vy_i = \mathrm{dec}((\mH_i^1, \dots,\mH_i^H), \vx_i)$.  

\textbf{Each layer receives the outputs of all previous layers.} We assume that the output of all layers is passed as input to all following layers: this can be achieved by using the residual connections (which would be inside $\mathrm{dec}$) and by placing the output of different layers onto separate subspaces.

\textbf{Task-dependent initial state.} We assume that the initial state $\mH_0$ can be set appropriately depending on the task and is of rank at most $n_h$.

 \textbf{Arbitrary decoder and state transition functions.}  We also assume that for every $i,j$, $g_i \in [0,1]$, $\beta_{i,j} \in [0,2], \vk_{i,j} \in \R^d$, $\norm{\vk_{i,j}} =1$ and $\vv_{i,j} \in \R^d$ are arbitrary continuous functions of $\vx_i$. The function $\mathrm{dec}$ can also be arbitrary (continuous).
 Since in all our setups the possible values of $\vx_i$ and $\mH_i$ are finite, this implies that the functions $\mathrm{dec}$, $\mA, \mB$ can model an arbitrary function of their discrete domain of interest, with the only restriction coming from the structural assumption of the output spaces of $\mA$ (product of $n_h$ Householders) and $\mB$ (rank $n_h$). This assumption can always be fulfilled in practice if $\mathrm{dec}$ is a sufficiently wide MLP (see the next section for practical concerns) and, in the case of $\mA,\mB$, which generally do not contain MLPs, by setting the dimension of $\vx_i$ sufficiently large, which can be achieved by adjusting the embedding layer or the dimensionality of the output of $\mathrm{dec}$. Note that the width of the MLP will grow with the complexity of the function to be approximated, which in our case depends on the complexity of the problem.

\subsubsection{Practical Considerations}
 
 \textbf{Beginning of sequence token.} Alternatively, the assumption on $\mH_0$ (task-dependent with rank at most $n_h$) can be replaced by using a beginning of sequence token $x_1 = \$$ and setting $\mH_0 = 0$, as done in practice, so that $\mH_1 = \mB(\$)$ is a learnable matrix of rank at most $n_h$ which acts as the $\mH_0$ in our constructions. 
 
 \textbf{Decoder implementation.} In our implementation, $\mathrm{dec}$ is the same as in Gated DeltaNet:
\begin{gather*}
    \mathrm{dec}((\mH^1_t,\dots \mH_t^H),\vx_t) = \mathrm{MLP}(\mathrm{RMSnorm}(\vx_t + \vo_t)), \\ \quad \vo_t = \sum_{j=1}^H\mW^j_o\mathrm{RMSnorm}( (\mH^j_t)^\top 
    \vq^j_t), \quad \vq^j_t = \psi(\mW^j_q \vx_t)/||\psi(\mW^j_q \vx_t)||
\end{gather*}
where $\mW^j_o, \mW^j_q$ are two learned matrices, $\psi = \mathrm{SiLU}$, $\mathrm{RMSnorm}(\vx) = 
\va \odot \vx/\sqrt{\epsilon + d^{-1}\sum_{i=1}^d x_i^2}$ corresponds (when $\epsilon=0$) to a projection onto an axis aligned ellipse where $\va \in \R^d$ is learnable and determines the lengths of the axis and $\epsilon>0$ is a small value set to avoid numerical errors.
Meanwhile, $\mathrm{MLP}$ is a two layer MLP. This structure can be limiting. For instance, when $\mH_t^j \in \R^{d}$, then  $b_t = (\mH^j_t)^\top \vq_t \in \R$ and if $b_t >> \epsilon$, then $|\mathrm{RMSNorm}(b_t) | \approx 1$, which means that after RMSNorm we are left with effectively only 2 possible values, while our constructions might require more: as many as the number of possible states. Indeed, for all our constructions to work in practice, we would need a sufficiently large $\epsilon$, so that the output of $\mathrm{RMSnorm}$ can retain some magnitude information. 
Since in our constructions the number of states is finite, with $\epsilon > 0$ and an appropriate value for  $\vq_t^j$ we are guaranteed  that the map  $\mH^j_t \mapsto \mathrm{RMSnorm}((\mH^j_t)^\top\vq^j_t)$, and consequently (by appropriately setting $\mW_o^j$) the map from states and inputs $\vx_t$, to the input of the MLP, is injective, which we show in the next lemma. Hence, thanks to the MLP, the decoder can approximate any continuous function of $(\mH_t, \vx_t)$ even after the bottlenecks caused by the scalar product with $\vq_t^j$ and the RMSnorm. 
\begin{lemma}
Let $\mathcal{S} \subset \R$ be a finite set of values. Define $\delta_{\min} = \min_{x,y \in \mathcal{S}, x \neq y} |x - y|$ and $\delta_{\max} = \max_{x,y \in \mathcal{S}} |x - y|$. Let $\vb = (b, b^2, \dots, b^d)^\top$ and set $\vq = \vb / \norm{\vb}$. If $b$ satisfies $b \ge \frac{\delta_{\max}}{\delta_{\min}} + 1$, then the mapping $f: \mathcal{S}^d \to \R$ given by $f(\mH) = \mathrm{RMSnorm}(\mH^\top \vq)$ is injective.
\end{lemma}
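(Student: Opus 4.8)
The plan is to factor $f$ as the composition of the linear functional $\phi(\mH) = \mH^\top \vq$ with the scalar map $s \mapsto \mathrm{RMSnorm}(s)$, and to show each piece is injective. Since $\vq = \vb/\norm{\vb}$ with $\norm{\vb} > 0$, injectivity of $\phi$ on $\mathcal{S}^d$ is equivalent to injectivity of $\mH \mapsto \sum_{i=1}^d \mH_i\, b^i$, so the positive normalization factor is irrelevant and can be dropped. The outer map is handled first: on a scalar argument $\mathrm{RMSnorm}(s) = a\, s/\sqrt{\epsilon + s^2}$, whose derivative equals $a\,\epsilon/(\epsilon + s^2)^{3/2}$, which has constant sign whenever $\epsilon > 0$ and $a \neq 0$. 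Hence the scalar map is strictly monotone, thus injective on all of $\R$, and $f$ is injective if and only if $\phi$ is.

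First I would carry out this reduction and then reduce the lemma to a positional-numeral (radix-$b$) argument: it suffices to show that $\phi(\mH) = \phi(\mH')$ forces $\mH = \mH'$ for $\mH, \mH' \in \mathcal{S}^d$. Given $\mH \neq \mH'$, I set $\Delta_i = \mH_i - \mH_i'$ and let $k$ be the largest index with $\Delta_k \neq 0$. Each nonzero $\Delta_i$ is a difference of two distinct elements of $\mathcal{S}$, so $|\Delta_k| \ge \delta_{\min}$, while $|\Delta_i| \le \delta_{\max}$ for every $i$. The goal is to show the leading term $\Delta_k b^k$ dominates the tail $\sum_{i=1}^{k-1}\Delta_i b^i$.

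The key estimate is
\begin{align*}
\left| \sum_{i=1}^d \Delta_i b^i \right|
\;\ge\; |\Delta_k|\, b^k - \sum_{i=1}^{k-1} |\Delta_i|\, b^i
\;\ge\; \delta_{\min}\, b^k - \delta_{\max} \sum_{i=1}^{k-1} b^i
\;=\; \delta_{\min}\, b^k - \delta_{\max}\,\frac{b^k - b}{b-1}.
\end{align*}
The main obstacle is obtaining a \emph{strict} lower bound at the boundary $b = \delta_{\max}/\delta_{\min} + 1$: the loose estimate $\sum_{i=1}^{k-1} b^i < b^k/(b-1)$ yields only $\ge 0$ there. Using the exact geometric sum $\frac{b^k - b}{b-1}$ instead, the hypothesis $b \ge \delta_{\max}/\delta_{\min} + 1$ gives $\delta_{\max} \le \delta_{\min}(b-1)$, and substituting collapses the bound to $\delta_{\min}\, b^k - \delta_{\min}(b^k - b) = \delta_{\min}\, b > 0$, using $b > 1$ and $\delta_{\min} > 0$. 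Hence $\phi(\mH) \neq \phi(\mH')$, establishing injectivity of $\phi$ and therefore of $f$. I would finally check the edge cases $k = 1$ (empty tail, where the bound reads $\delta_{\min}\, b > 0$) and note that $|\mathcal{S}| \ge 2$ is implicitly required for $\delta_{\min}, \delta_{\max}$ to be defined, the case $|\mathcal{S}| = 1$ being trivial since $\mathcal{S}^d$ is then a singleton.
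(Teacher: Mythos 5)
Your proposal is correct and follows essentially the same route as the paper's proof: the same factorization into the radix-$b$ linear functional, the positive scaling, and the strictly monotone scalar RMSnorm, with injectivity of the linear part established by the identical leading-term-dominance argument using the exact geometric sum and the hypothesis $\delta_{\max} \le \delta_{\min}(b-1)$. Your treatment is, if anything, marginally more careful at the edges (noting the $a \neq 0$ requirement for the RMSnorm factor and the trivial $|\mathcal{S}|=1$ case), but the substance coincides with the paper's argument.
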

\begin{proof}
The mapping $f$ is a composition $f = g_3 \circ g_2 \circ g_1$, where the component functions are:
$$ g_1(\mH) = \sum_{i=1}^d h_i b^i, \qquad g_2(x) = \frac{x}{\norm{\vb}}, \qquad g_3(x) = \mathrm{RMSnorm}(x), $$
where $\mH =  (h_1,\dots, h_d)^\top$.
The overall mapping is injective if each component is injective. %

\textbf{1. Injectivity of $g_1$:} Intuitively, $g_1$ encodes the vector $\mH$ as a scalar as a number in base $b$, where each $h_i$ acts as a ``digit'' drawn from the finite set $\mathcal{S}$. 
By choosing $b$ sufficiently large relative to the spread of $\mathcal{S}$ (captured by $\delta_{\max}/\delta_{\min}$), we ensure that different vectors produce distinct scalars.
To prove this formally, we show that for any non-zero difference vector $\boldsymbol{\Delta} = (\Delta_1,\dots, \Delta_d)^\top = \mH_1 - \mH_2$, the difference $ g_1(\mH_1) - g_1(\mH_2) = \sum_{i=1}^d \Delta_i b^i$ is also non-zero. This is true since $b$ is by definition greater than Cauchy polynomial upper bound for the roots of the polynomial $P(b) = \sum_{i=1}^d \Delta_i b^i$.  For an elementary proof, let $k = \max\{i \mid \Delta_i \neq 0\}$. The magnitude of the highest-order term is lower-bounded by:
$$ |\Delta_k b^k| = |\Delta_k| b^k \ge \delta_{\min} b^k $$
The magnitude of the sum of lower-order terms is upper-bounded as
$$ \left|\sum_{i=1}^{k-1} \Delta_i b^i\right| \le \sum_{i=1}^{k-1} |\Delta_i| b^i \le \sum_{i=1}^{k-1} \delta_{\max} b^i = \delta_{\max} \left(\frac{b^k-b}{b-1}\right) \leq \delta_{\min}(b^k-b), $$
where we used the triangle inequality and the assumption on $b$, which implies $\delta_{\max} \le \delta_{\min}(b-1)$.
Comparing the bounds, we see that
$$ |\Delta_k b^k| \ge \delta_{\min} b^k > \delta_{\min}(b^k-b) \ge \left|\sum_{i=1}^{k-1} \Delta_i b^i\right| $$
Since the magnitude of the highest-order term is strictly greater than that of the sum of all other terms, their sum cannot be zero. Thus, $g_1$ is injective.

\textbf{2. Injectivity of $g_2$ and $g_3$:} The function $g_2$ is a linear scaling by the non-zero constant $1/\norm{\vb}$ and is therefore injective. For $g_3(x) = \mathrm{RMSnorm}(x)$, its derivative is strictly positive for $\epsilon > 0$, meaning $g_3$ is strictly monotonic and also injective.

Since $g_1$, $g_2$, and $g_3$ are all injective, their composition $f$ is also injective.
\end{proof}
When the state is one-hot, i.e.\@ $\mH_t^j = \ve_i \in \{0,1\}^d$, with $1 \leq i \leq d$ (i-th element of the canonical basis), an alternative to the above construction is to replicate the recurrence onto $d$ heads, where the $j$-th head has $\mW^j_o = \vq^j_t = \ve_j$, so that, assuming that in the RMSnorm  $\va = (\sqrt{d},\dots,\sqrt{d})^\top$ and $\epsilon = 0$, we get $\vo_t = \mH_t = \ve_i$. This is the strategy used in \citet[Appendix D]{peng2025rwkv7gooseexpressivedynamic}. However, for some problems using the one-hot encoding states $\ve_1,\dots\ve_n$ is not very efficient. For instance, to solve the $S_n$ word problem one would need $n!$-dimensional one-hot vectors as states, while in our \Cref{th:groups} we use $n$-dimensional vectors. Moreover, learning multiple identical heads is redundant and indeed we observe that in our synthetic experiments, the model is learning to use only one head to solve the tasks (see~\Cref{sec:state-tracking}). 
\vspace{-2mm}
\subsection{Group Word Problems}\label{app:expressivity:group_word_rproblems}
\vspace{-2mm}
The next theorem establishes that DeltaProduct can solve the word problem for the symmetric group $S_n$, which implies that it can also solve any group word problem, since for every group $G$ there exists $n$ such that $G$ is isomorphic to a subgroup of $S_n$.

\begin{theorem}[Restatement of \Cref{th:groups}]\label{th:groups_restated} For any $n \in \N$ there exists a DeltaProduct model with one of the following configurations that can solve the word problem of the symmetric group $S_n$: (i) one layer with $n_h =n{-}1$ \citep[Theorem 3]{grazzi-iclr25a} (ii) 3 layers with $n_h{>}1$ (iii) 4 layers with $n_h{=}1$.
The construction for (ii) and (iii) requires that the MLP at the second last layer computes a lookup-table of size $2m \times (n!)^{2m}$, function of the last $2m$ input tokens and the position modulo $2m$ with $m = \ceil{(n{-}1)/n_h}$.
\end{theorem}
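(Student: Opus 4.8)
\emph{Setup and reduction.} Part (i) is exactly Theorem 3 of \citet{grazzi-iclr25a}, so the plan is to handle (ii) and (iii). The starting point is the faithful representation of $S_n$ by $n\times n$ permutation matrices: solving the word problem amounts to producing, at each position $i$, the permutation matrix of $y_i = x_i\cdots x_1$, which the decoder reads off from $\mH_i$. The key algebraic fact is that taking $\beta_{i,j}=2$ makes each factor $\mI-\beta_{i,j}\vk_{i,j}\vk_{i,j}^\top$ a reflection, so a DeltaProduct transition matrix $\mA(\vx_i)$ is a product of $n_h$ reflections and can realize any permutation matrix that is a product of at most $n_h$ transpositions (a transposition of coordinates is the reflection with normal $(\ve_i-\ve_j)/\sqrt2$); call these \emph{simple} permutations. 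Since any element of $S_n$ is a product of at most $n-1$ transpositions, grouping those transpositions into $m=\ceil{(n-1)/n_h}$ chunks of at most $n_h$ each writes every $x\in S_n$ as a product of exactly $m$ simple permutations. This is the only place the extended eigenvalue range ($\beta=2$, reflections) and the Cartan--Dieudonné structure are used.

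\emph{Block decomposition and delayed application.} Rather than decomposing each individual $x_i$, I group the input into consecutive blocks of length $m$ and let $X_b$ denote the product of block $b$'s tokens. Each $X_b\in S_n$ factors into exactly $m$ simple permutations $\sigma_{b,1},\dots,\sigma_{b,m}$, so the last layer can apply one simple factor per recurrence step across the $m$ positions of a block; after a block boundary the last-layer running product equals the cumulative product up to that boundary. The obstruction is causality: the factors of $X_b$ can only be emitted once all of block $b$ has been read, i.e.\ at its last position, whereas the recurrence must consume one factor per step. I resolve this with a one-block delay---block $b$'s factors are fed to the last layer during block $b{+}1$---so that the last-layer state $\mH_i$ lags the true prefix product by at most one block.

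\emph{Reconstruction by lookup table and layer count.} To emit $y_i$ at \emph{every} position (not only at boundaries), the MLP of the second-last layer evaluates a fixed lookup table keyed by the position modulo $2m$ and the last $2m$ input symbols (each in $S_n$, giving table size $2m\times(n!)^{2m}$). Writing $i=(c-1)m+r$, the delayed state is $\mH_i \simeq \sigma_{c-1,r}\cdots\sigma_{c-1,1}\,y_{(c-2)m}$, and every correction factor---the already-applied partial product $\sigma_{c-1,r}\cdots\sigma_{c-1,1}$, the completed block $X_{c-1}$, and the in-progress tail $x_i\cdots x_{(c-1)m+1}$---is a known function of the last $2m$ symbols and of $r$; composing them recovers $y_i = \bigl[x_i\cdots x_{(c-1)m+1}\bigr]X_{c-1}\bigl[\sigma_{c-1,r}\cdots\sigma_{c-1,1}\bigr]^{-1}\mH_i$, a function of the finitely many table inputs and of the finite-valued state, hence realizable by the MLP. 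The $2m$ window is exactly what is needed since reconstruction mixes the current block with the previous (delayed) block. The remaining layers merely buffer the last $2m$ symbols into the hidden state so they are visible to the second-last layer; this windowing costs one layer when $n_h>1$ (total $3$) and two when $n_h=1$, where each step realizes only a single transposition (total $4$). Injectivity of the decoder's scalar-product-plus-RMSnorm bottleneck, needed so the finitely many state values survive, is supplied by the injectivity lemma above.

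\emph{Main obstacle.} The genuine difficulty is not the transition-matrix algebra (realizing simple permutations as $n_h$ reflections is immediate from Cartan--Dieudonné) but the scheduling: proving that a \emph{single} table, depending only on position modulo $2m$ and the last $2m$ symbols, simultaneously (a) supplies the correct decomposition factors to the delayed last-layer recurrence and (b) inverts the one-block delay to output the exact $y_i$ at every position. The crux is verifying that at each position the running product is a \emph{known} function of the table inputs, so the partial factors can be stripped off exactly, and checking that the buffering of the $2m$-symbol window can indeed be implemented within the claimed one (resp.\ two) extra layers for $n_h>1$ (resp.\ $n_h=1$); this bookkeeping follows the template of \citet{peng2025rwkv7gooseexpressivedynamic}.
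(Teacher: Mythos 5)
Your proposal follows the paper's own proof essentially step for step: the same blocking of the input into windows of $m=\ceil{(n-1)/n_h}$ tokens, the same fixed factorization of each block product into $m$ matrices that are each products of at most $n_h$ generalized Householders, the same one-block delay in the last layer's recurrence, and the same lookup table keyed by the position modulo $2m$ and the last $2m$ tokens. Even your reconstruction identity $y_i=\bigl[x_i\cdots x_{(c-1)m+1}\bigr]X_{c-1}\bigl[\sigma_{c-1,r}\cdots\sigma_{c-1,1}\bigr]^{-1}\mH_i$ is the paper's decoder written with inverses: the paper applies the remaining factors $\mG_{l,m}\cdots\mG_{l,\tilde{i}+1}$ to $\mH_i$ instead of stripping off the applied ones, which is the same computation.

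The one genuine gap is the part you defer as ``bookkeeping'': the auxiliary layers and why they cost one extra layer when $n_h>1$ but two when $n_h=1$. Your stated reason --- that the \emph{buffering} of the $2m$-token window is what becomes more expensive because ``each step realizes only a single transposition'' --- is not the right one, and taken literally it fails: buffering always costs exactly one layer independently of $n_h$, since the second-last layer can write token $x_t$ into slot $\tilde{t}=(t\bmod 2m)+1$ of its state by the rank-one update $\mH_t=(\mI-\ve_{\tilde{t}}\ve_{\tilde{t}}^\top)\mH_{t-1}+\ve_{\tilde{t}}x_t$, i.e.\ with $n_h=1$ (\Cref{lm:lastmtokens}). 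The real constraint is the prerequisite of this buffer: keys are functions of a layer's \emph{input}, not of its own hidden state, so the slot index $t\bmod 2m$ must be computed by strictly earlier layers, and the same index is also needed as a key of the lookup table. Counting modulo $2m$ can be done in a single layer when $n_h\ge 2$ by taking the state-transition matrix to be a 2D rotation by $2\pi/(2m)$, i.e.\ a product of two reflections (\Cref{lm:countmodm}); a single generalized Householder is symmetric, hence has real spectrum and cannot realize such a rotation, so for $n_h=1$ one needs the two-layer parity-plus-modular-addition construction of \citet[Theorem 6]{grazzi-iclr25a}. This yields the correct breakdown: counting ($1$ layer if $n_h>1$, $2$ if $n_h=1$), then buffering together with the lookup-table MLP ($1$ layer), then the delayed recurrence ($1$ layer), giving the claimed totals of $3$ and $4$. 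Your totals are right, but for a reason that would not survive scrutiny, and without the modular counting mechanism the window buffer your construction relies on cannot be indexed at all.
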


\begin{proof}
One way to solve the group word problem for the symmetric group $S_n$ is to map each element of the group $g \in S_n$ to the corresponding permutation matrix $\mP_g \in \{0,1\}^n$ and then for each input sequence $x_1,\dots, x_t$ with $x_i \in S_n$ compute each element of the output sequence $y_1,\dots, y_t$ as
\begin{equation*}
    y_i = x_{i} \cdot x_{i-1} \cdots x_1 = \phi(\mP_{x_i}\cdots\mP_{x_1}\vu_0 ), \quad \vu_0 = (1,\dots, n)^\top,
\end{equation*}
where $\phi$ is a surjective map from vectors in $\{1,\dots, n\}^n$ to the $n!$ elements of $S_n$, which we consider integers for simplicity, i.e.\@ $x_i, y_i \in \{1,\dots, n!\}$.

\textbf{(i).} Since a permutation of $n$ elements is a series of at most $n-1$ swaps of $2$ elements, if $n_h = n-1$, then we can solve the problem with a  1-layer DeltaProduct by setting $\mH_0 = \vu_0$, $\mathrm{dec}(\mH_i, x_i) =\phi(\mH_i)$, $\mB(x_i) = 0$ ($\vv_{i,j} = 0$),  $\mA(x_i) = \mP_{x_i}$. The latter is achieved by setting for the $j$-th element in the product $\prod_{j=1}^{n_h}(I - \beta_{i,j} \vk_{i,j}\vk_{i,j}^\top$), either $\beta_{i,j} = 2$ and $\vk_{i,j} = (\ve_k -\ve_p)/\sqrt{2}$ with $\ve_i$ being the $i$-th element of the canonical basis of $\R^n$ (swap element at index $k$ with the one at index $p$), or $\beta_{i,j} = 0$ (identity).

\textbf{(ii) and (iii).} If $n_h < n-1$, then the state transition matrix is not sufficiently expressive to represent all permutations of $n$ elements.  However, we can use additional layers to overcome this issue as follows. We divide the input sequence into blocks of $m$ elements: we factorize the position $i \in \{1,\dots t\}$ into $ i = lm + \tilde i$ where $l \geq 0$ is the index of the previous block and $\tilde{i} \in \{1,\dots, m\}$ is the position within the current  block (index $l+1$). First, consider the case when $l \geq 1$. Let $\tilde\mP_l = \mP_{x_{(l-1)m+m}} \dots \mP_{x_{(l-1)m + 1}}$ be the product of the permutations of the previous block. Since $\tilde \mP_l$ is a permutation matrix of $n$ elements, we can factor it into $\tilde\mP_l = \mG_{l,m} \cdots \mG_{l,1}$ where we recall that $m =\ceil{(n-1)/n_h}$ and each of $\mG_{l,1}, \dots, \mG_{l,m}$ is a product of $n_h$ generalized Householder matrices and thus can be a state-transition matrix of our model. We fix one factorization for each possible permutation matrix and we set $\tilde\mP_0 = \mG_{0,m}\cdots \mG_{0,1}$, with $\mG_{0,i} = I$ to handle the case when $l=0$.

Now let $\vx_i$ be the input of the last layer. if $\vx_i$ contains enough information  about previous tokens (as we specify later), we can set  the recurrence and decoder of the last layer as
\begin{equation*}
    \mH_i = \mG_{l,\tilde i} \mH_{i-1}, \quad \mathrm{dec}(\mH_i, \vx_i) = \phi\Big(\underbrace{\mP_{x_i}\cdots \mP_{x_{lm + 1}}}_{\text{current block}}  \underbrace{\mG_{l, m}\cdots \mG_{l,\tilde{i} + 1}}_{\text{previous block}} \mH_i\Big).
\end{equation*} 
where $\mH_0 = \vu_0$, $\mB(\vx_i) = 0$, $\mA(\vx_i) = \mG_{l,\tilde i}$, using the construction at point (i) since $\mG_{l,\tilde i}$ is a product of at most $n_h$ Householers.
Note that $\mH_i$ contains the product of the input permutations only up to token $x_{(l-1)m}$ and a partial computation of previous block of permutations $\tilde \mP_l$. 
Hence, the decoder completes the computation by applying two additional components: (1) the remaining transformations $\mG_{l,\tilde{i}+1}$ through $\mG_{l,m}$ needed to complete $\tilde \mP_l$, and (2) the actual permutations from the current partial block $\mP_{x_{lm + 1}}$ through $\mP_{x_i}$. The  delay in the recurrence is necessary, since to compute even the first matrix of the factorization for a block of $m$ elements of the input sequence, all the elements in such a block need to be processed.

We can check that this ends up computing the correct output $y_i$ by substituting the expression for $\mH_i$ and unrolling the recurrence as follows.
\begin{align*}
     \mathrm{dec}(\mH_i, \vx_i) &= \phi(\mP_{x_{lm + \tilde{i}}}\cdots \mP_{x_{lm + 1}} \mG_{l, m}\cdots \mG_{l,1} \mG_{l-1, m}\cdots \mG_{l-1,1} \dots  \mG_{0,m} \dots \mG_{0,1} \mH_0). \\
    &= \phi(\mP_{x_{lm + \tilde{i}}}\cdots \mP_{x_{lm + 1}} \tilde\mP_l \tilde \mP_{l-1} \dots \tilde \mP_{0} \vu_0). \\ 
    &= \phi(\mP_{x_{lm + \tilde{i}}}\cdots \mP_{x_{lm + 1}} \mP_{x_{(l-1)m +m}}\cdots \mP_{x_{(l-1)m + 1}} \dots \mP_{x_{m}}\cdots \mP_{x_{1}} \tilde \mP_{0} \vu_0). \\ 
    &= \phi(\mP_{x_{i}}\cdots \mP_{x_{1}}\vu_0) = y_i,  
\end{align*}
 Note that to compute $\mA(\vx_i) = \mG_{l,\tilde i}$ and $\mathrm{dec}(\mH_i, \vx_i)$, $\vx_i$ should contain $\tilde{i} = i \bmod m$ and the last $m + \tilde{i}$ (in general the last $2m$) tokens, corresponding to the current and previous blocks. Hence, the layers before the last one are dedicated to compute at each time-step $i$ a lookup table for the possible values of  $(i \bmod 2m, x_i, \dots, x_{i-2m+1})$ whose output will be included in the input of the last layer $\vx_i$. The first layers (two layers if $n_h=1$, one if $n_h>1$) can provide $i \bmod 2m$ by using \Cref{lm:countmodm} with $d=2m$. Finally, the second to last layer can output any function of the last $2m$ tokens and the position modulo $2m$  through \Cref{lm:lastmtokens} with $d=2m$ and $a_t = x_t$, by using $i \bmod 2m$ from the first layer(s).
 \end{proof}

\begin{lemma}\label{lm:countmodm}
The following DeltaProduct configurations can count modulo $d \in \N$. (i) 2 layers each with one head and $n_h = 1$ \cite[Theorem 6]{grazzi-iclr25a}. (ii) 1 layer  with one head and $n_h\geq2$.
\end{lemma}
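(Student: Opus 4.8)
The plan is to treat counting modulo $d$ as the word problem of the cyclic group $\mathbb{Z}_d$: I want each layer to output $i \bmod d$ at every position $i$, which I achieve by choosing a \emph{constant} (input-independent) state-transition matrix whose order is exactly $d$, together with a fixed initial state whose orbit under this matrix visits $d$ distinct points. Since I may set $\mB(\vx_i) = 0$ and the matrix is applied once per token, unrolling the recurrence gives $\mH_i = \mA^i \mH_0$, so the state directly encodes the position modulo $d$, and the arbitrary continuous decoder allowed by our assumptions can read off the integer $i \bmod d$ precisely because the $d$ reachable states are pairwise distinct.

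For part (ii) with $n_h \geq 2$, the key observation is that a single generalized Householder factor can only realize a reflection or the identity (order at most $2$), whereas a \emph{product of two} reflections is a planar rotation whose angle is twice the angle between the two reflection hyperplanes (the composition fact illustrated in \Cref{fig:rotation} and formalized in \Cref{proposition:householder}.3). Concretely, I would work inside a two-dimensional subspace of the hidden state, set $\beta_{i,1} = \beta_{i,2} = 2$, and pick unit normals $\vk_{i,1}, \vk_{i,2}$ at angle $\pi/d$, so that $\mA = \prod_{j=1}^{2}(\mI - 2\,\vk_{i,j}\vk_{i,j}^\top)$ is a rotation by $2\pi/d$ (up to the sign of the angle, which is immaterial). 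This rotation has order exactly $d$, hence $\mH_i = \mA^i \mH_0$ cycles through $d$ distinct vectors at angles $2\pi i/d$ whenever $\mH_0$ is any fixed nonzero vector in the plane. If $n_h > 2$ I simply set the remaining $n_h - 2$ factors to the identity ($\beta_{i,j} = 0$). Note that $\mA$ is orthogonal with spectral norm $1$, so the construction respects the stability constraint, and both nonzero $\beta$ values lie in $[0,2]$ (the extended eigenvalue range), as required.

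Part (i), the $n_h = 1$ case, follows from the two-layer construction of \citet[Theorem 6]{grazzi-iclr25a}, which I would invoke directly. The reason a single layer cannot suffice is exactly the limitation noted above: with $n_h = 1$ the state-transition matrix is a single generalized Householder, whose only instance of order greater than one is a reflection of order $2$, so one layer can count only modulo $2$. A second layer is then used to lift this period-$2$ dynamics to period $d$, and I would recall their construction rather than reprove it here.

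The only genuinely new content is part (ii), and there the remaining verifications are routine once the rotation is in place: that the planar rotation by $2\pi/d$ has order exactly $d$ (immediate, since $\gcd(1,d)=1$ makes $2\pi/d$ a primitive angle), that the $d$ orbit points are pairwise distinct so the decoder map $\mH_i \mapsto (i \bmod d)$ is well defined, and that this finite map is realizable by the arbitrary continuous decoder permitted by our assumptions. The composition-of-two-reflections identity is the crux, and it is already supplied by \Cref{proposition:householder}.3 together with the Cartan--Dieudonn\'e reasoning established earlier; I do not anticipate any obstacle beyond bookkeeping the angle and embedding the two-dimensional rotation inside the hidden state.
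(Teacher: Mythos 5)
Your proposal is correct and matches the paper's own proof essentially step for step: part (i) is delegated to \citet[Theorem 6]{grazzi-iclr25a} exactly as in the paper, and part (ii) uses the same construction of setting $\mB(\vx_t)=0$, composing two reflections ($\beta_{t,1}=\beta_{t,2}=2$, remaining factors set to the identity) into a planar rotation by $2\pi/d$, placing $\mH_0$ in the span of the two keys, and letting the decoder map the $d$ distinct orbit points to $\{1,\dots,d\}$. The only addition is your explicit choice of angle $\pi/d$ between the key normals, a detail the paper leaves implicit.
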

\begin{proof}
For (i), we can use the same construction as in \cite[Theorem 6]{grazzi-iclr25a}, where the first layer does counting modulo 2 and the second layer computes addition modulo $d$. In this case, since we just want to count modulo $d$ we can ignore input tokens and add $1$ modulo $d$ at each time-step.
For (ii), note that if $n_h > 2$, we can set, for any time-step $t$, $\mB(\vx_t) = 0$ and the state transition matrix $\mA(\vx_t)$ equal to a 2D rotation with an angle of $2\pi/d$ by appropriately setting two keys, say $\vk_{t,1}, \vk_{t,2}$, setting $\beta_{t,1},\beta_{t,2} = 2$  (while for the other Householders we set $\beta_{i,j} = 0$). Then, we can count modulo $d$ by setting $\mH_0$ in the span of $\vk_{t,1},\vk_{1,2}$ and   $\mathrm{dec}$ appropriately to map the $d$ values that $\mH_t$ can take to the correspondent element in $\{1,\dots, d\}$.
\end{proof}

\begin{lemma}\label{lm:lastmtokens}
A DeltaProduct layer with $n_h=1$, receiving in its input at time-step $t$ the tuple $(t \bmod d, a_t)$ ($t \geq 1$) where $a_t \in D \subset \R$ with $D$ being a discrete set of values, can implement any function of $(t\bmod d, a_{t-d+1}, \dots, a_{t})$, where for simplicity we set $a_i = a \notin D$ for $i \in \{2-d,\dots, 0\}$.
\end{lemma}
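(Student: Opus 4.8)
The plan is to use the single Householder per token ($n_h=1$) to realize a cyclic \emph{sliding-window register}: I will store the last $d$ values $a_{t-d+1},\dots,a_t$ in $d$ orthogonal slots indexed by the canonical basis vectors $\ve_1,\dots,\ve_d$, and then let the (arbitrary) decoder read off the window. The crucial observation is that a single generalized Householder with $\beta=1$ and key equal to a basis vector acts as a coordinate \emph{overwrite}: taking $\vk_t=\ve_r$ and $\beta_t=1$ gives $\mA(\vx_t)=\mI-\ve_r\ve_r^\top$, which zeroes the $r$-th coordinate while leaving all others intact, and $\mB(\vx_t)=\ve_r\vv_t^\top$ writes the new content into that same coordinate.

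First I would take the state to be a single column $\mH_i\in\R^{n}$ with $n\ge d$ (automatically of rank $\le 1$, hence consistent with the $n_h=1$ rank assumption), reserving coordinate $j$ as the slot for the most recent time $\equiv j \pmod d$. Since $\vx_t$ contains $t \bmod d$, I set $r = t \bmod d$ (mapped into $\{1,\dots,d\}$), $\vk_t=\ve_r$, $\beta_t=1$, and $\vv_t=a_t$, all admissible discrete functions of $\vx_t$ under the arbitrary-state-transition assumption. The recurrence then reads $\mH_i=(\mI-\ve_r\ve_r^\top)\mH_{i-1}+\ve_r a_t$, which replaces the stale entry $a_{t-d}$ in slot $r$ by $a_t$ and preserves every other slot.

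Next I would prove by induction on $t$ that after step $t$ the coordinate of $\mH_t$ indexed by $s \bmod d$ equals $a_s$ for each $s\in\{t-d+1,\dots,t\}$; because these $d$ consecutive indices hit every residue in $\{1,\dots,d\}$ exactly once, $\mH_t$ stores the full window as a $(t\bmod d)$-dependent permutation of the slots. To seed the induction I would set $\mH_0$ to carry the sentinel value $a\notin D$ in the slots corresponding to the initial times $0,-1,\dots,2-d$, which is again a single column and hence of rank $\le 1$. Finally, since $\vx_t$ supplies $t\bmod d$, the decoder knows which slot holds which offset, so the map $(t\bmod d, a_{t-d+1},\dots,a_t)\mapsto (t\bmod d,\mH_t)$ is injective on the finite domain; invoking the arbitrary-decoder assumption (the practical RMSnorm bottleneck being dispatched by the injectivity lemma proved above) then lets $\mathrm{dec}$ output any desired function of the window and position.

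The main obstacle here is bookkeeping rather than analysis: verifying that the cyclic assignment $r=t\bmod d$ keeps exactly the last $d$ values alive at every step, and that the decoder can un-permute them using only $t\bmod d$. Two boundary checks remain but are immediate from the overwrite identity — that $\beta=1$ (a projection, not a reflection) erases the stale entry in a single rank-$1$ step, and that the sentinel initialization makes the $t<d$ regime consistent with the convention $a_i=a$ for $i\le 0$.
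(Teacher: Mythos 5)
Your proof is correct and takes essentially the same route as the paper's: both set $\beta_t=1$ with key $\vk_t=\ve_{(t\bmod d)+1}$ so that each Householder step acts as a coordinate overwrite in a $d$-dimensional circular buffer holding the last $d$ values, and both then invoke the arbitrary-decoder assumption together with the residue $t\bmod d$ (available in the input) to read off any function of the window. The only difference is minor: the paper initializes $\mH_0=0$ whereas you seed the buffer with the sentinel $a$, which is a slightly more careful handling of the $t<d$ boundary since it removes any ambiguity between an unwritten slot and a genuine stored value of $0$ when $0\in D$.
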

\begin{proof}
Let $\tilde t = t \bmod d + 1$ 
Set $\mH_0 = 0 \in \R^d$ and the recurrence update as
\begin{equation*}
    \mH_t = (I - \ve_{\tilde t} \ve_{\tilde t}^\top)\mH_{t-1} + \ve_{\tilde t} a_t,
\end{equation*}
where $\ve_i$ is the $i$-th element of the canonical basis of $\R^d$.
This can be implemented by setting $\beta_{t,1} = 1$, $\vk_{t,1} = \ve_{\tilde{t}}$, $\vv_{t,1} = a_t$ and $\beta_{t,j},\vk_{t,j}, \vv_{t,j} = 0$.
 With this choice, $\mH_t$ contains $a_{t-d+1},\dots, a_t$. The result follows since $\mathrm{dec}$ can be an arbitrary continuous function of both $\mH_t$ and $\vx_t$ and the latter contains $t \bmod d$. 
\end{proof}

Finally, the next results concern finite subgroups of the orthogonal and special orthogonal groups.
\begin{theorem}\label{th:suborth} Let $G$ be a group isomorphic either to a subgroup of $\mathrm{O}(n)$, or to a subgroup of $\mathrm{SO}(n+1)$ if $n$ is even, then if $n_h=n$, there exists a DeltaProduct model that solves the group word problem for $G$. 
\end{theorem}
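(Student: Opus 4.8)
The plan is to prove this in a \emph{single layer} with $n_h = n$ and $\mB \equiv 0$, reusing the single-vector tracking idea from the proof of \Cref{th:groups_restated} but replacing permutation matrices by arbitrary (special) orthogonal matrices, which become realizable as Householder products thanks to the Cartan--Dieudonné theorem. First I would fix an injective homomorphism $\iota\colon G \to \mathrm{O}(m)$, where $m = n$ in the orthogonal case and $m = n+1$ in the special orthogonal case. Since $\iota$ is a homomorphism and $y_i = x_i\cdots x_1$, it suffices to track the running product $\iota(x_i)\cdots\iota(x_1) = \iota(y_i)$, so the architectural task reduces to realizing $\mA(\vx_i) = \iota(x_i)$ as the state-transition matrix.

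The key step is to show that every $\iota(g)$ is a product of at most $n$ Householder reflections, so that $n_h = n$ factors suffice (padding unused factors with $\beta_{i,j}=0$, i.e.\ the identity, and using $\beta_{i,j}=2 \in [0,2]$ for each genuine reflection $\mI - 2\vk\vk^\top$). In the orthogonal case $m = n$ and this is exactly Cartan--Dieudonné. In the special orthogonal case $n$ is even, so $m = n+1$ is odd; every $R \in \mathrm{SO}(m)$ then has at least one eigenvalue $+1$ (its complex eigenvalues occur in conjugate pairs with product $1$, and $\det R = 1$ with an odd total count of real eigenvalues forces an odd number of $+1$'s), hence $R$ fixes a line $\ell$. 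Restricting to $\ell^\perp$, which has dimension $m-1 = n$, Cartan--Dieudonné writes $R|_{\ell^\perp}$ as a product of at most $n$ reflections of $\ell^\perp$, each of which extends to a reflection of $\R^{m}$ fixing $\ell$. Thus $R$ is a product of at most $n$ reflections of $\R^{m}$, which is precisely why the hypothesis ``$n$ even'' is needed to trade $\mathrm{SO}(n+1)$ for only $n$ Householders.

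I would then assemble a single DeltaProduct layer of state dimension $m$ with $\mB(\vx_i)=0$ (all $\vv_{i,j}=0$), $\mA(\vx_i)=\iota(x_i)$ given by one fixed reflection factorization per group element, and $\mH_0 = \vu_0 \in \R^{m}$ a fixed vector. The recurrence yields $\mH_i = \iota(x_i)\cdots\iota(x_1)\vu_0 = \iota(y_i)\,\vu_0$, a rank-$1$ state, so the initial-state assumption (rank at most $n_h$) holds. Choosing $\vu_0$ outside the fixed subspace $\ker(\iota(g)-\mI)$ of each non-identity $g$, which is possible since these are finitely many proper subspaces of $\R^{m}$, makes $g \mapsto \iota(g)\vu_0$ injective; hence $\mathrm{dec}$ can recover $y_i$ from the finitely many possible values of $\mH_i$. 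Because all $\beta_{i,j}$ and keys depend only on $x_i$ over its finite domain, the continuity/finite-domain assumptions of \Cref{app:assumptions} allow $\mA$, $\mB$, and $\mathrm{dec}$ to realize these maps exactly.

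The main obstacle is the dimension--parity refinement of Cartan--Dieudonné in the second paragraph: establishing that an odd-dimensional rotation always fixes a line and therefore needs at most $n$ (rather than $n+1$) reflections, which is exactly the content that forces the ``$n$ even'' condition and distinguishes the $\mathrm{SO}(n+1)$ case from a naive application of the theorem. The remaining steps (homomorphism tracking, generic $\vu_0$, and matching the recurrence to the architecture) are routine given the assumptions.
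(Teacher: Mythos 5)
Your proof is correct, and it reaches the paper's conclusion by a genuinely different route at two points. First, for the $\mathrm{SO}(n+1)$ case the paper bounds the number of Householders by a parity count: Cartan--Dieudonn\'e in dimension $n+1$ gives at most $n+1$ reflections, $\det = +1$ forces that number to be even, and oddness of $n+1$ then caps it at $n$. You instead show that every rotation in odd dimension fixes a line (an odd number of real eigenvalues together with an even number of $-1$'s forces an eigenvalue $+1$), restrict to the $n$-dimensional complement $\ell^\perp$, and apply Cartan--Dieudonn\'e there, extending each reflection of $\ell^\perp$ to a Householder of $\R^{n+1}$ fixing $\ell$; this is more constructive and makes the geometric role of the parity hypothesis explicit. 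Second, your state representation differs: the paper tracks essentially the whole running matrix --- $\mH_0 = \mI$ in the $\mathrm{O}(n)$ case, and the rank-$n$ matrix $\mH_0 = \mathrm{diag}(1,\dots,1,0)$ in the $\mathrm{SO}(n+1)$ case, with the decoder reconstructing the zeroed-out last column from orthogonality and the $\det = +1$ constraint --- whereas you track the rank-one orbit $\iota(y_i)\vu_0$ of a single generic vector, getting injectivity of $g \mapsto \iota(g)\vu_0$ by choosing $\vu_0$ outside the finitely many proper subspaces $\ker(\iota(g)-\mI)$, $g \neq e$. Your construction treats both cases uniformly, satisfies the rank-$\leq n_h$ constraint on $\mH_0$ trivially, and avoids the paper's column-reconstruction trick, at the mild price of a genericity argument that requires $G$ to be finite --- an assumption that is implicit in the theorem (cf.\ the ``finite subgroups'' entry of \Cref{tab:expressivity}) and is equally needed for the paper's decoder to be realizable in finite precision.
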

\begin{proof}
From the assumption we can map each element $g \in G$ to an orthogonal matrix $\mG_g$. For the word problem for $G$, each element of the input sequence belongs to $G$:  $x_i \in G$ for every $i$. 

If $\mG_g \in \mathrm{O}(n)$, then, since $n_h = n$ and every orthogonal $n \times n$ matrix can be written as the product of at most $n$ Householder matrices, we can set $\mH_0 = I \in \R^{n \times n}$ and $\mA(x_i) = \mG_{x_i}$, $\mB(x_i) = 0$ and $\mathrm{dec}(\mH_t, x_i) = \phi(\mH_i)$ with $\phi: \mathrm{O}(n) \to G$ bijective (which exists due to the isomorphism). The Householder product structure enables $\mA(x_i)$ to represent general orthogonal matrices $\mG_{x_i}$, including rotations.

If instead $\mG_g \in \mathrm{SO}(n+1)$, since $n$ is even in this case then we can still write $\mG_g$ as a product of an even number (at most $n$ since $n+1$ is odd) of Householder matrices of dimension $n+1 \times n+1$. This is because the determinant of $\mG_g$ is $+1$, which is only possible if it is a product of an even number of Householder matrices, each having determinant $-1$. Thus, we can set $\mA(x_i) = \mG_{x_i} \in \R^{n+1 \times n +1}$, $\mB(x_i) = 0$. 
Now if we let $\bar\mG = \mG_{x_i}\mG_{x_{i-1}}\cdots \mG_{x_1}$ and set $\mH_0 = \mathrm{diag}(1,\dots, 1, 0) \in \R^{(n+1) \times (n+1)}$ (we are only allowed a rank n matrix), then $\mH_i = \bar \mG \mH_0$ will have all the first $n$ columns equal to $\bar\mG$ and the last set to zero. However, the last column can be found as a function of the others since it must be the unique unit vector orthogonal to all other columns of $\bar\mG$ and for which $\operatorname{det}(\bar\mG)=+1$. Therefore, there exists a bijective functon from states to elements of the group, which can be implemented in $\mathrm{dec}$.
\end{proof}
\vspace{-2mm}
\subsection{Regular Languages}\label{app:regular}
\vspace{-2mm}
This section details how Gated DeltaProduct networks can recognize any regular language in a finite number of layers. The core idea is to show that Gated DeltaProduct can simulate any Finite State Automaton (FSA), since FSAs are the computational models that define regular languages. The proof proceeds in two main steps: First, we leverage the Krohn-Rhodes theorem, a fundamental result in automata theory, which states that any FSA can be decomposed into a cascade of simpler FSAs known as permutation-reset automata. These simpler automata only perform two types of operations: permuting their states or resetting all states to a single state. Second, we demonstrate in Lemma~\ref{lm:permreset} that Gated DeltaProduct is well-equipped to simulate these permutation-reset automata. The ``DeltaProduct'' mechanism, using products of Householder transformations, naturally handles permutations, while the ``Gated'' aspect allows for the reset operations by nullifying the previous state's influence and setting a new one. By simulating these building blocks and cascading them, Gated DeltaProduct can thus simulate any FSA and, consequently, recognize any regular language.

\begin{definition}[Finite state automaton (FSA)] A finite state automaton (FSA) is a tuple $(\Sigma, Q, q_0, \delta, F )$, where $\Sigma$ is a finite set called alphabet, $Q$ is the finite set of states, $q_0 \in Q$ is the initial state, for every $w \in \Sigma$, $\delta_w: Q \to Q$ is a state transition function and $F \subset Q$ is the set of accepting states. 
\end{definition}

\begin{definition}[Permutation-reset automaton]
An FSA is permutation-reset if for every $w \in \Sigma$, $\delta_w$ is either bijective or constant.
\end{definition}
\begin{definition}[Regular language]
A regular language is a set of sequences $L$ such that there exists an FSA that accepts it, i.e.\@ such that $L \subset \Sigma^*$, where $\Sigma^*$ is the set of sequences with elements in $\Sigma$, and that for every word $\vw = w_1w_2\dots w_t \in\Sigma^*$
\begin{equation}\label{eq:reglanrec}
    \delta_\vw(q_0) := \delta_{w_t} \circ \delta_{w_{t-1}} \circ \cdots\circ \delta_{w_{1}}(q_0) \in F \iff \vw \in L.
\end{equation}    
\end{definition}

Notably, the computation of any FSA can be also done using only matrix and vector multiplications. Indeed, if we let $Q =\{1,\dots, n\}$ (for simplicity), then we can map each state $q$ to the one hot vector $\ve_q$  (element of the canonical basis of $\R^n$) and each transition $\delta_w$ to the matrix $ \mM_w \in \{0,1\}^{n\times n}$ with element at row $q$ and column $q'$ being 1 if and only if $\delta(q') = q$. This way, by setting $\vr \in \{0,1\}^n$ such that $r_q = 1$ if $q \in F$ and $r_q = 0$ otherwise, we have that for every word $\vw = w_1w_2\dots w_n \in\Sigma^*$
\begin{equation}\label{eq:fsadecomp}
    \vr^\top \mM_{w_t} \mM_{w_{t-1}} \cdots \mM_{w_1} \ve_{q_0} = 1 \iff \vw \in L.
\end{equation} 
We observe that if $\delta_w$ is bijective and changes $k$ states, then the corresponding $\mM_w$ is a permutation matrix that can be written as a product of $k-1$ Householder matrices, each corresponding to a swap of two elements. Moreover, if $\delta_w$ is a reset (constant), i.e.\@ if $\delta_w(q) = \bar{q}$ for every $q \in Q$, then $\mM_w \ve_q = \ve_{\bar q}$. As we will see, constant transitions can be modeled by setting the gate to zero. We are now ready to state our main result.

\begin{theorem}[Restatement of \Cref{th:regular}]
For any regular language $L$ and any $n_h \in \N$, there exists a Gated DeltaProduct model with a finite number of layers that recognizes the language, i.e., for every word $\vw \in \Sigma^*$ outputs $1$ if $\vw \in L$ and $0$ otherwise.
\end{theorem}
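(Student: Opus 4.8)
The plan is to simulate an arbitrary finite-state automaton recognizing $L$ by a cascade of simpler automata, each implementable by a few Gated DeltaProduct layers. First I would invoke the Krohn--Rhodes decomposition \citep{krohn1965algebraic}: any FSA can be realized as a cascade (wreath product) of permutation-reset automata, in which each component's transition depends on the current input symbol together with the current states of the components earlier in the cascade. Because every layer is assumed to receive the outputs of all previous layers (the residual/subspace assumption of \Cref{app:assumptions}), this cascade structure maps cleanly onto stacked groups of layers: the block of layers simulating component $c$ takes as part of its input $\vx_i$ the states produced by the blocks simulating components $1,\dots,c-1$.

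The key building block is to show that a single permutation-reset automaton on $n_c$ states can be simulated by a finite number of Gated DeltaProduct layers, for any $n_h \geq 1$. Following \Cref{eq:fsadecomp}, I encode each state $q$ as the one-hot vector $\ve_q$ and each transition as the matrix $\mM_w$. When the input symbol $w$ induces a permutation, $\mM_w$ is a permutation matrix, which I install as the (ungated) state-transition matrix using the Householder-product construction of \Cref{th:groups}: directly in one layer if $n_h \geq n_c - 1$, and otherwise via the block-decomposition-plus-lookup-table technique of \Cref{th:groups}, which factors each block of $m=\ceil{(n_c-1)/n_h}$ permutations into $m$ Householder-product factors applied with a delay, using \Cref{lm:countmodm} and \Cref{lm:lastmtokens} in the preceding layers. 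When $w$ induces a reset to state $\bar q$, I set the gate $g_i = 0$ (so $\mA(\vx_i)=0$) and choose $\mB(\vx_i)$, which the gate leaves untouched, to be the rank-one matrix encoding $\ve_{\bar q}$ (take $\beta_{i,1}=1$, $\vk_{i,1}=\ve_{\bar q}$, $\vv_{i,1}=1$); then $\mH_i = \mB(\vx_i) = \ve_{\bar q}$ regardless of history, exactly realizing the reset.

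Having established the building block, I would assemble the full construction by stacking the layer-blocks for all cascade components, feeding each block the input symbol and the earlier components' states through the residual pathway. The joint state of all components at time $i$ determines the FSA state $\delta_\vw(q_0)$, and a final decoder MLP reads this joint state and outputs $1$ exactly when it lies in an accepting configuration, which is the membership test $\vw \in L$ of \Cref{eq:reglanrec}. Correctness then follows by unrolling the recurrences block-by-block, as in the telescoping computation at the end of the proof of \Cref{th:groups}, combined with the exact reset identity above; the total number of layers is finite and depends only on $L$ through the sizes of its Krohn--Rhodes components.

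The main obstacle is the interaction between resets and the delayed block-decomposition that is forced when $n_h$ is small. The permutation technique of \Cref{th:groups} delays computation across a block of $m$ symbols, but a reset must take effect immediately and cannot be written as a product of orthogonal Householder factors. To reconcile these, I would have the second-to-last layer's lookup table (\Cref{lm:lastmtokens}) detect whether a reset occurs within each block and, if so, compute the net transformation from the last reset onward---effectively truncating the block at the reset, firing the gate at the corresponding delayed position, and decomposing only the purely permutational suffix into Householder factors. Verifying that this folding is consistent with the block boundaries and the modular-position bookkeeping, so that the delayed factors and the gate activate at exactly the right time-steps, is the delicate part; once that consistency is checked, bounding the resulting number of layers completes the argument.
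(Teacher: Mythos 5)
Your proposal follows essentially the same route as the paper's proof: Krohn--Rhodes decomposition into a cascade of permutation-reset automata, one-hot state encoding where permutation transitions are realized as Householder products (directly when $n_h$ is large enough, otherwise via the block-decomposition-plus-lookup-table technique of \Cref{th:groups}) and resets are realized by setting the gate $g_i=0$ with $\mB(\vx_i)$ producing $\ve_{\bar q}$, all assembled feedforward through the residual pathway with a final decoder testing acceptance. The ``delicate part'' you flag is resolved in the paper exactly along the lines you sketch, and in fact more simply: any block product containing a reset is itself a constant map, i.e.\ $\tilde\mM_l\ve_q = \ve_{\bar q}$ for all $q$, so the fixed per-matrix factorization just includes a gate-zero factor among the $\mG_{l,j}$, and the delayed modular-position bookkeeping of \Cref{th:groups} carries over unchanged.
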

\begin{proof}
Using the landmark theorem by \citet{krohn1965algebraic} we can decompose the FSA corresponding to the regular language $L$ into a cascade of permutation-reset FSA. We can use a group of at most 4 consecutive layers to represent each automaton in the cascade via \Cref{lm:permreset}. Then, we can combine the different FSA in the cascade in a feedforward manner using the same construction as the one in the proof of \cite[Theorem 3]{grazzi-iclr25a}, where the input of each FSA is the output concatenated with the input of the previous FSA in the cascade. %
\end{proof}

\begin{lemma}\label{lm:permreset}
For any permutation-reset FSA with $|Q| = n$ and $|\Sigma| = s$, where each bijective state-transition function $\delta_w$ changes at most $k$ states, there exists a Gated DeltaProduct model with the following configuration that can implement it, i.e.,\@ for any word $\vw= w_1,\dots,w_t \in \Sigma$ in input, it can output the corresponding sequence of states $q_1,\dots,q_t$ of the FSA.
(i) one layer with $n_h =k-1$. (ii) 3 layers with $n_h>1$. (iii) 4 layers with $n_h=1$.
The construction for (ii) and (iii) requires that the MLP at the second last layer computes a lookup-table of size $2m \times s^{2m}$, function of the last $2m$ input tokens and the position modulo $2m$ with $m = \ceil{(k-1)/n_h}$.
\end{lemma}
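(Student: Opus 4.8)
The plan is to adapt the construction of \Cref{th:groups_restated} from pure permutations to permutation-reset automata, where the single new ingredient is using the gate $g_i$ to realize the constant (reset) transitions. I would begin from the matrix encoding of \eqref{eq:fsadecomp}: represent each state $q \in Q = \{1,\dots,n\}$ by the canonical basis vector $\ve_q \in \R^n$, carry the recurrent state as the vector $\mH_i = \ve_{q_i}$ (so $d = 1$), and recover $q_i$ from $\mH_i$ in the decoder using the decoder injectivity lemma in \Cref{app:assumptions}. A bijective transition $\delta_w$ that changes at most $k$ states becomes a permutation matrix $\mM_w$, and since such a permutation is a product of at most $k-1$ transpositions, each transposition is realized by a Householder reflection $\mI - \beta\,\vk\vk^\top$ with $\beta = 2$ and $\vk = (\ve_a - \ve_b)/\sqrt{2}$, exactly as in point (i) of \Cref{th:groups_restated}. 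A reset $\delta_w(\cdot) = \bar q$ becomes the constant map $\mM_w \ve_q = \ve_{\bar q}$, which is not orthogonal and hence cannot be a product of Householders; this is where the gate enters.

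For configuration (i) ($n_h = k-1$, one layer) I would set, at token $i$: if $w_i$ is bijective, $g_i = 1$, $\mB(\vx_i) = 0$, and $\mA(\vx_i) = \mM_{w_i}$ written as its product of at most $n_h$ Householders; if $w_i$ is a reset to $\bar q$, $g_i = 0$ and $\mB(\vx_i) = \ve_{\bar q}$, the latter produced by a single term with $\vk_{i,n_h} = \ve_{\bar q}$, $\beta_{i,n_h} = 1$, $\vv_{i,n_h} = 1$ and all other $\beta_{i,j} = 0$. Then $\mH_i = g_i\,\mA(\vx_i)\mH_{i-1} + \mB(\vx_i)$ equals $\mM_{w_i}\mH_{i-1}$ on bijective steps and $\ve_{\bar q}$ on resets, so the recurrence tracks the automaton exactly and the decoder outputs $q_i$.

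For configurations (ii) and (iii) ($n_h < k-1$, so a single transition needs $m = \ceil{(k-1)/n_h}$ factors) I would reuse the block-decomposition-with-delay machinery of \Cref{th:groups_restated}: partition the sequence into blocks of $m$ tokens, factor each block's composite transition into $m$ factors $\mG_{l,1},\dots,\mG_{l,m}$, each a product of $n_h$ Householders, apply them in the last layer with a one-block delay, and let the decoder complete the product from the remaining factors of the previous block together with the symbols of the current partial block. The early layers provide the position modulo $2m$ via \Cref{lm:countmodm}, and the second-to-last layer's MLP computes the lookup table of size $2m \times s^{2m}$ over the last $2m$ symbols and the position modulo $2m$ via \Cref{lm:lastmtokens}; here $s^{2m}$ replaces the $(n!)^{2m}$ of the group case because the alphabet has $s$ symbols. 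The one structural change is that, because the composite of a block of permutation-reset transitions is again permutation-reset, whenever a block's composite is constant we replace its factors by a single gated step ($g = 0$ to erase the carried state, followed by injection of the reset target through $\mB$), and the decoder branches on whether a reset occurred inside the current or the previous block.

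I expect the interleaving of resets with the delayed permutation factors to be the main obstacle. In the group case the running product is never interrupted, so the decoder merely composes permutation matrices; a reset, however, collapses the state to a fixed vector and must both fire at the correct delayed position and override any partially applied factors still held in the recurrent state. I would resolve this using the observation that a block's composite is constant exactly when the block contains a reset, and is then determined solely by the last reset in the block and the bijections following it: at the block boundary $g = 0$ zeroes the carried state and $\mB$ inserts the correct target, after which ordinary processing resumes. Since the second-to-last layer's lookup table is a function of the full last $2m$ symbols and the position modulo $2m$, the decoder has enough information to emit the exact intermediate state $q_i$ even at positions lying strictly between a reset and the end of its block, which is the point that must be verified to complete the argument; one also checks that the bijective part of each block composite still changes at most $k$ states (so that $m = \ceil{(k-1)/n_h}$ factors suffice), as holds for the permutation-reset factors arising in the Krohn-Rhodes cascade of \Cref{th:regular}.
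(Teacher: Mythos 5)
Your proposal follows essentially the same route as the paper's proof: the same one-hot matrix--vector encoding with gate $g_i=0$ and $\mB(\vx_i)=\ve_{\bar q}$ for resets in point (i), and the same block-decomposition-with-delay machinery for (ii)/(iii) (position modulo $2m$ via \Cref{lm:countmodm}, a $2m \times s^{2m}$ lookup table via \Cref{lm:lastmtokens}), where the paper likewise lets each delayed factor $\mG_{l,j}$ be either a product of $n_h$ Householders or a gated reset. The within-block reset concern you flag is resolved in the paper exactly as you suggest: the decoder applies the true transition matrices $\mM_{w_j}$ of the current partial block (resets included) on top of the delayed state, so intermediate states after a reset are recovered correctly.
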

\begin{proof}
We use the matrix vector multiply construction to implement the FSA. For every time-step $i$ we set $x_i = w_i$ as the input to the model. The proof follows a path similar to the one of \Cref{th:groups_restated} (where more details are provided), where in addition to modeling permutations, the state-transition matrix uses the gate to model constant transitions.

\textbf{(i).} Set $\mH_0 = \ve_{q_0}$. When $\delta_{w_i}$ is bijective, by assumption it changes at most $k$ states. Thus, by setting $n_h = k-1$ we can represent the corresponding $\mM_{w_i}$ matrix using $\mA(w_i)$ with gate $g_i = 1$ (product of $k-1$ generalized Householder matrices), and thus we set $\mB(w_i) = 0$. If instead $\delta_{w_i}$ is constant, i.e., $\delta_i(q) = \bar{q}$ and $\mM_w \ve_q = \ve_{\bar q}$ for every $q \in Q$, then we can set the gate $g_i = 0$ so that $\mA(w_i) = 0$  and $\mB(w_i) = \vk_i = \ve_{\bar{q}}$. Finally, we set $\mathrm{dec}(\mH_i, x_i) = \mH_i^\top(1,\dots, n)^\top $ to retrieve the correct state at step $i$ (for simplicity $Q = \{1,\dots, n\}$). 

\textbf{(ii) and (iii).} If $n_h < k-1$, then the state transition matrix is not sufficiently expressive to represent all permutations of $k$ elements. However, we can use additional layers to overcome this issue.

We factorize the position $i \in \{1,\dots t\}$ into $ i = lm + \tilde i$ for integers $l \geq 0$ and $\tilde i \in \{1,\dots, m\}$.
First, consider the case when $l \geq 1$. The product $\tilde\mM_l = \mM_{w_{(l-1)m+1}} \dots \mM_{w_{(l-1)m + m}}$ is either a permutation matrix of $k$ elements or, if for some $i$ $\delta_{w_i}$ is constant, then there exists $\bar{q}$ such that $\tilde \mM_l \ve_q =\bar q$ for every $q \in Q$. Therefore, we  factor $\tilde \mM_l$ into $\tilde\mM_l = \mG_{l,m} \cdots \mG_{l,1}$ where each of $\mG_{l,1}, \dots, \mG_{l,m}$ is either a product of $n_h$ generalized Householder matrices or $\mG_{l,i} \ve_q = \ve_{\bar q}$ for every $q \in Q$, which can be modeled setting the gate to zero as in point (i). We fix one factorization for each possible permutation matrix. In the last layer and with enough information in its input $\vx_i$ about past tokens, we can thus set $\mH_0 = \ve_0$ and
\begin{equation*}
    \mH_i = \mG_{l,\tilde i} \mH_{i-1}, \quad \mathrm{dec}(\mH_i, \vx_i) = (\mM_{lm + \tilde{i}}\cdots \mM_{lm + 1} \mG_{l, m}\cdots \mG_{l,\tilde{i} + 1} \mH_i)^\top(1,\dots, n)^\top
\end{equation*}
The case when $l=0$ is handled by setting $\tilde\mM_0 = \mG_{0,m}\cdots \mG_{0,1}$ with $\mG_{0,i} = I$. Note that both  $\mM_{l,\tilde i}$ and $\mathrm{dec}(\mH_t, \vx_t)$ are functions of $\tilde{i} = i \bmod m$ and the last $m + \tilde{i}$ (in general the last $2m$) tokens. Hence, the layers before the last are dedicated to output at each time-step $i$ a lookup table for the possible values of  $(i \bmod 2m, w_i, \dots, w_{i-2m+1})$. The first layers (2 if $n_h=1$, 1 if $n_h>1$) can provide $i \bmod 2m$ by using \Cref{lm:countmodm} with $d=2m$. Finally, the second last layer can output any function of the last $2m$ tokens and the position modulo $2m$  through \Cref{lm:lastmtokens} with $d=2m$ and $a_t = w_t$, by using $i \bmod 2m$ from the first layer(s).
\end{proof}

\subsection{Regular language recognition through products of RWKV-7 matrices}\label{app:rwkv_7_regular}

To enhance the expressivity at the cost of stability, we can replace the product of Householder matrices of DeltaProduct with a product of RWKV-7 matrices, i.e.\@ for each layer set 
\begin{equation}\label{eq:rwkvproduct}
\mA(\vx_i) = \prod_{j=1}^{n_h} (\mathrm{diag}(\vw_{i,j}) -c\vk_{i,j}(\vk_{i,j}\odot \va_{i,j})),
\end{equation}
where $\vw_{i,j}$, $\va_{i,j}$, $\vk_{i,j}$ are computed from $\vx_i$ such that $\va_{i,j},\vw_{i,j} \in [0,1]^n$, $||\vk_{i,j}|| =1$ and $c \in \{1,2\}$.
Even without using gates, the resulting model will be capable of recognizing regular languages effectively as the following theorem shows.

\begin{theorem}\label{th:rwkv7deltaprod_app} For any \textbf{regular language} recognized by a finite-state automaton (FSA) with $n \in \N$ states, there exists a linear RNN using products of RWKV-7 matrices as state-transition matrices (as in (\ref{eq:rwkvproduct}) with $c=2$) with one of the following configurations that can recognize it: (i) one layer with $n_h =n$ (ii) 3 layers with $n_h>1$ (iii) 4 layers with $n_h = 1$ \citep[Theorem 3]{peng2025rwkv7gooseexpressivedynamic}.
The construction for (ii) and (iii) requires that the MLP at the second last layer computes a lookup-table of size $2m \times (n!)^{2m}$, function of the last $2m$ input tokens and the position modulo $2m$ with $m = \ceil{n/n_h}$.
\end{theorem}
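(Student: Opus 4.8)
The plan is to prove (i) by restating Remark~\ref{th:rwkv7deltaprod}, and to prove (ii) and (iii) by transcribing the block-factorization argument of \Cref{th:groups_restated} (equivalently \Cref{lm:permreset}), replacing permutation matrices with general FSA transition matrices and products of generalized Householders with products of RWKV-7 matrices. As in \eqref{eq:fsadecomp}, I would evaluate the automaton through matrix--vector products: encode states as one-hot vectors $\ve_q$, set $\mH_0 = \ve_{q_0}$, let $\mB(\vx_i)=0$, and read off acceptance via $\vr^\top \mH_t$ where $\vr$ indicates $F$. For (i) I would set $\mA(\vx_i)=\mM_{w_i}$: by \citet[Lemma 3]{peng2025rwkv7gooseexpressivedynamic} each function-matrix $\mM_{w_i}$ on $n$ states factors into $n$ swap/copy/identity matrices, each an RWKV-7 matrix, so a single product of $n_h=n$ such matrices realizes it. This is exactly Merrill's one-layer result \citep[Theorem 5.2]{merrill-icml24a} instantiated with the RWKV-7 parametrization.

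The crucial structural facts I would establish first are that RWKV-7 matrices with $c=2$ subsume the elementary matrices used throughout: (a) the swap, copy, and identity matrices of Peng's Lemma~3; and (b) the Householder reflections $\mI-2\vk\vk^\top$ and coordinate projections $\mI-\ve\ve^\top$ appearing in the auxiliary lemmas. Fact (a) is precisely Peng's Lemma~3; for (b), taking $\vw=\vone$ gives $\mathrm{diag}(\vone)-2\vk(\vk\odot\va)^\top$, and choosing $\va=\vone$ yields the reflection $\mI-2\vk\vk^\top$, while choosing $\va=\tfrac12\ve$ (with $\vk=\ve$) yields the projection $\mI-\ve\ve^\top$; in both cases $\va\in[0,1]^n$. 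Consequently \Cref{lm:countmodm} and \Cref{lm:lastmtokens} carry over verbatim to products of RWKV-7 matrices, so the first layers can still supply $i \bmod 2m$ (one layer if $n_h>1$, two if $n_h=1$) and the second-last layer can still compute an arbitrary function of the last $2m$ tokens and the position modulo $2m$.

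For (ii) and (iii) with $n_h<n$, a single state-transition matrix realizes only a product of $n_h$ elementary factors, so I would block the input into groups of $m=\ceil{n/n_h}$ tokens. The product $\tilde\mM_l$ of the transitions over block $l$ is again a function-matrix on $n$ states, hence factors (Peng's Lemma~3) into $n$ elementary matrices, which I group into $m$ consecutive chunks $\mG_{l,1},\dots,\mG_{l,m}$, each a product of at most $n_h$ RWKV-7 matrices (padding with identities). The last layer then runs $\mH_i=\mG_{l,\tilde i}\mH_{i-1}$ and the decoder completes both the leftover factors of the previous block and the partial current block, exactly as in \Cref{th:groups_restated}; unrolling reproduces $\vr^\top\mM_{w_i}\cdots\mM_{w_1}\ve_{q_0}$. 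The one genuinely new point over the group case is that reset (constant) transitions are now handled directly by copy matrices inside the RWKV-7 product, so \emph{no gate is needed} — in contrast to the Gated DeltaProduct construction of \Cref{th:regular}, which used the gate for resets. The bound $m=\ceil{n/n_h}$ (rather than $\ceil{(n-1)/n_h}$) reflects that Peng's decomposition uses up to $n$ factors.

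I expect the main obstacle to be bookkeeping rather than a new idea: verifying that the delayed recurrence together with the decoder reconstructs the correct state at every position, which I would inherit line-by-line from the unrolling in \Cref{th:groups_restated}. The two substantive checks are that the elementary swap/copy/projection/reflection matrices remain representable under the fixed choice $c=2$ (handled by tuning $\va\in[0,1]^n$ as above), and that copies faithfully simulate resets so that gates become unnecessary. Finally, (iii) can alternatively be cited directly as \citep[Theorem 3]{peng2025rwkv7gooseexpressivedynamic}.
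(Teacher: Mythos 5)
Your proposal is correct and takes essentially the same route as the paper's own proof: part (i) via matrix--vector FSA evaluation (\ref{eq:fsadecomp}) combined with Peng's Lemma~3, and parts (ii)/(iii) by transplanting the block-factorization, delayed-recurrence argument of \Cref{th:groups_restated} to general FSA transition matrices, with $m=\ceil{n/n_h}$ because the factorization now needs up to $n$ elementary factors. The paper's proof is simply a terser statement of this; your additional checks --- that RWKV-7 matrices with $c=2$ also realize the reflections $\mI-2\vk\vk^\top$ and projections $\mI-\ve\ve^\top$ needed so that \Cref{lm:countmodm} and \Cref{lm:lastmtokens} carry over, and that copy matrices replace the gate for reset transitions --- make explicit details the paper leaves implicit.
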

\begin{proof}
The computation of an FSA with $n$ states can be done using matrix-vector multiplications as shown in (\ref{eq:reglanrec}), where the $n \times n$ state transition matrices $\mM_{w_t}$ have elements in $\{0,1\}$ and a single one in each column.
\citet[Lemma 3]{peng2025rwkv7gooseexpressivedynamic} prove that any of those matrices can be expressed as products of $n$ matrices, each of which is either a swap (identity with two columns swapped), copy (identity with one row copied onto another), or the identity matrix and  can be modeled by a single RWKV-7 matrix. The proof for (ii) and (iii) follows similarly to~\Cref{th:groups} but now tackling all state transition matrices, while \Cref{th:groups} could handle only permutations since a generalized Householders matrix cannot be a copy matrix.
\end{proof}

\vspace{-2mm}
\subsection{Dihedral Groups}\label{app:dihedral}
\vspace{-2mm}
In \citet[Theorem 6]{grazzi-iclr25a} it is shown that  with 2 layers and the extended eigenvalue range, DeltaNet can compute addition modulo $m$, which corresponds to solving the group word problem for the cyclic group $\mathbb{Z}_m$, for any $m \in \N$.
We extend this result and prove that, under identical assumptions, DeltaNet (DeltaProduct with $n_h=1$) can solve the group word problem for the dihedral group $D_m$, for any $m \in \N$. The dihedral group $D_m$ represents the symmetries (both rotations and reflections) of a regular $m$-sided polygon. As a notable example, $D_3$ is isomorphic to the symmetric group $S_3$. 

The linear RNN construction used in this result can be implemented using a 2-layer DeltaNet Model with two heads in the first layer.  In the first layer, the linear RNN will compute parity for rotations and reflections separately, i.e.\@ it will record if the number of past rotations (reflections) is even or odd. 
The recurrent state of the second layer will have $2m$ possible values (same as the order of $D_m$) and each will be decoded differently based on the parity of reflections. 
The parity of rotations, combined with the group element, determines which reflection matrix to use as the state transition matrix of the second layer.

\begin{theorem}[Dihedral group word problems with reflections]\label{thm:dihedral} For any $m\in N$, consider the group word problem of the dihedral group $D_m$. There exist DeltaProduct models with the following configurations that can solve it.  (ii) Two layers with $n_h = 1$ and at least two heads in the first layer and one in the second layer. (ii) One layer with $n_h\geq2$.
\end{theorem}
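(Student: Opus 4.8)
The plan is to work entirely with the faithful two-dimensional orthogonal representation of $D_m$, sending the rotation generator $r$ to the planar rotation $R_{2\pi/m}$ and the reflection generator $s$ to a fixed reflection $F_0$ across the $x$-axis. Under this map every element $g \in D_m$ becomes a $2 \times 2$ orthogonal matrix $\mG_g$: rotations $r^b \mapsto R_{2\pi b/m}$ have determinant $+1$, while reflections $r^b s \mapsto F_{\pi b/m}$ have determinant $-1$. The word problem then amounts to tracking the running product $\mM_j = \mG_{x_j}\cdots \mG_{x_1}$, whose determinant sign equals the parity of the number of reflection inputs seen so far.

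For configuration (ii) (one layer, $n_h \ge 2$) I would simply invoke \Cref{th:suborth} with $n = 2$: since $D_m$ is isomorphic to a finite subgroup of $\mathrm{O}(2)$ and every $2 \times 2$ orthogonal matrix is a product of at most two Householder reflections (Cartan--Dieudonné, cf.\@ \Cref{fig:rotation}), a single layer with $n_h = 2$ can set $\mA(\vx_i) = \mG_{x_i}$, $\mB(\vx_i) = 0$, $\mH_0 = \mI \in \R^{2\times 2}$, and decode via a bijection $\phi$ from the $2m$ attainable orthogonal matrices back to $D_m$; any extra Householders are set to identity ($\beta = 0$) when $n_h > 2$.

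The substance is configuration (i) (two layers, $n_h = 1$), where each step may apply only a single reflection, yet a rotation input $\mG_{x_j}$ has determinant $+1$ and is not itself a reflection. I would introduce a toggle bit $\tau_j \in \{0,1\}$ and store, in the second layer, the reflected state $\mH_j = F_0^{\tau_j}\mM_j \vv_0$ for a fixed generic $\vv_0 \in \R^2$. Using $F_0^2 = \mI$, the update becomes $\mH_j = \bigl(F_0^{\tau_j}\mG_{x_j}F_0^{\tau_{j-1}}\bigr)\mH_{j-1}$, and a determinant count shows the bracketed matrix is a (determinant $-1$, hence genuine) reflection exactly when $\tau_j + a_j + \tau_{j-1} \equiv 1 \pmod 2$, where $a_j \in \{0,1\}$ flags whether $x_j$ is a reflection. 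Taking $\tau_0 = 0$ then forces $\tau_j \equiv j + A^{(j)} \pmod 2$ with $A^{(j)} = \sum_{k\le j} a_k$, so the toggle is determined by the step parity and the reflection parity. Consequently the first layer needs only two heads that compute, via sign-flipping single Householders ($\beta = 2$ when the relevant input type occurs, $\beta = 0$ otherwise), the parity of rotation inputs and the parity of reflection inputs; from these and the current $x_j$ the second layer computes $\tau_{j-1}, \tau_j$ and realizes the exact reflection $F_0^{\tau_j}\mG_{x_j}F_0^{\tau_{j-1}}$ as its single generalized Householder with $\beta = 2$.

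Two points will require care, and I expect the second to be the main obstacle. First, the reflection applied at step $j$ must be a function of the current token and the layer-one parities alone; I would verify that $A^{(j-1)}$, and hence $\tau_{j-1}$, is recoverable as $A^{(j)} - a_j$, so no unavailable history is needed. Second, and most delicate, is decodability: the stored state is the single vector $F_0^{\tau_j}\mM_j\vv_0$ rather than the full matrix $\mM_j$, so I must argue that, knowing $\tau_j$ and the reflection parity $A^{(j)}$ (which separates the rotation case from the reflection case), the map $\mM_j \mapsto \mM_j \vv_0$ is injective over the $2m$ group elements. This is where genericity of $\vv_0$ enters: choosing $\vv_0$ off all finitely many reflection axes makes distinct rotations and distinct reflections yield distinct images, and the determinant-parity bit disambiguates the two classes, so the decoder's MLP can implement the resulting finite injective lookup. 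As in \Cref{th:groups}, this last step relies on the assumption that $\mathrm{dec}$ can realize an arbitrary continuous function of the finite state and input.
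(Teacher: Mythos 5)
Your proposal is correct, and for configuration (ii) it coincides with the paper's own proof: invoke \Cref{th:suborth} with $D_m$ viewed as a finite subgroup of $\mathrm{O}(2)$. For configuration (i) the underlying mechanism is also the paper's: the first layer computes the parities of rotation-type and reflection-type inputs with two scalar heads, the second layer applies only genuine reflections chosen from the current token and the rotation parity, and the reflection parity is consumed by the decoder. Where you genuinely differ is in how correctness is verified. The paper fixes explicit reflection angles $\theta(\cdot,\cdot)$ and two explicit families of state vectors $\mathcal{C},\mathcal{D}$, then checks the four dihedral multiplication rules by trigonometric identities; you instead store $F_0^{\tau_j}\mM_j\vv_0$ and show by a determinant-parity count that the required update $F_0^{\tau_j}\mG_{x_j}F_0^{\tau_{j-1}}$ has determinant $-1$, hence is automatically a $2\times 2$ reflection realizable as a single Householder with $\beta=2$, with decoding justified by injectivity of the orbit map. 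Your route is shorter and explains \emph{why} a single toggle bit makes reflections suffice, at the price of an injectivity/genericity discussion that the paper's explicit states make unnecessary; the paper's route yields concrete keys and angles (useful, e.g., for interpreting learned solutions) but is a longer case analysis. Two loose ends in your write-up, both easily closed: (a) your formula $\tau_j \equiv j + A^{(j)} \pmod 2$ involves the position parity, which the first layer does not output directly, but since $j = R^{(j)} + A^{(j)}$ (rotation count plus reflection count) it simplifies to $\tau_j \equiv R^{(j)} \pmod 2$ --- the toggle is exactly the rotation parity, i.e.\@ the paper's selector --- and then $\tau_{j-1} \equiv \tau_j + 1 + a_j \pmod 2$ is likewise a function of the layer-one outputs and $x_j$; (b) the genericity of $\vv_0$ is not actually needed: distinct rotations (respectively distinct reflections) in $\mathrm{O}(2)$ already send any nonzero $\vv_0$ to distinct points, and the reflection-parity bit separates the two classes, so any $\vv_0 \neq 0$ works.
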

\begin{proof}
The elements of the dihedral group $D_m$ can be divided into $m$ rotations $\mathcal{R} =\{r_0, \dots, r_{m-1}\}$ and $m$ reflections $\mathcal{S} =\{s_0, \dots, s_{m-1}\}$. The identity is $r_0$. To be able to solve the corresponding word problem, we would like to map sequences of group elements $x_1,\dots, x_t$ with $x_i \in \mathcal{R}\cup\mathcal{S}$ into sequences $y_1,\dots, y_t$ with $y_i = x_i \cdot x_{i-1} \cdots x_1$ and $\cdot$ is the group operation, that for dihedral groups is defined as
\begin{equation}\label{eq:groupop}
    r_i \cdot r_j=r_{i+j \bmod m}, \quad r_i \cdot s_j=s_{i+j \bmod m}, \quad s_i \cdot r_j=s_{i-j \bmod m}, \quad s_i \cdot s_j=r_{i-j \bmod m}.
\end{equation}
Note that a product of two rotations is commutative, while the product of two reflections or a reflection with a rotation is not. Indeed for $m \geq 3$ $D_m$, is not an abelian group. 

\textbf{(i)} The constructions of the two layers of DeltaProduct with $n_h=1$ builds upon the one for the cyclic group $Z_m$ outlined in \citep[Theorem 6]{grazzi-iclr25a}. The first layer functions as a pre-processor, calculating auxiliary information from the input sequence. Specifically, for each time step $t$, it determines two parities: the parity of the total number of reflections and the parity of the total number of rotations in the sequence $x_1, \dots, x_t$. This information is then passed to the second layer.

The second layer is responsible for computing the cumulative group product. It uses a 2D hidden state to geometrically model the group elements and their compositions. The core challenge lies in modeling the group operations, i.e.\@ rotations and reflections, using only a reflection as state-transition matrices (rotation matrices cannot be represented with $n_h=1$). To address this, the state representation in the second layer must encode more than just the previous group product. It is designed to also incorporate the rotation parity computed by the first layer. This is achieved by maintaining two distinct sets of $m$ state vectors each and two distinct sets of $2m$ reflection matrices each to represent the group elements. The choice of which set to use is determined by the rotation parity. Moreover, the reflection parity is also used but only in the decoder. This design allows a single, unified update mechanism, based solely on geometric reflections, to correctly implement all four of the distinct multiplication rules defined in (\ref{eq:groupop}).

We define rotation by and reflection matrices as
\begin{equation}\label{eq:rotrefl}
	\text{Rotation:} \quad \mR(\alpha) = 
	\begin{pmatrix} 
		\cos(\alpha) & -\sin(\alpha) \\
		\sin(\alpha) & \cos(\alpha) 
	\end{pmatrix} \quad
	\text{Reflection:} \quad \mH(\alpha) = 
	\begin{pmatrix} 
		\cos(\alpha) & \sin(\alpha) \\
		\sin(\alpha) & -\cos(\alpha) 
	\end{pmatrix},
\end{equation}
where $\mR(\alpha)$ is a rotation by an angle of $\alpha$, while $\mH(\alpha)$ is a reflection by a line having an angle of $\alpha/2$ with the line passing from the origin and the point $(1,0)$. Note that both $\mR$ and $\mH$ are periodic with period $2\pi$. Moreover, let $\alpha,\gamma \in \R$, the following are standard identities of products of matrix representations of 2D rotations and reflections.
\begin{align}\label{eq:rotreflprop}
\begin{aligned}
\mR(\alpha) \mR(\gamma) & =\mR(\alpha + \gamma), \quad 
 &&\mH(\alpha)\mH(\gamma)  =\mR(\alpha- \gamma), \\
\mR(\alpha) \mH(\gamma), & =\mH\left(\alpha+ \gamma\right) 
&&\mH(\gamma) \mR(\alpha)  =\mH\left(\gamma-\alpha\right).
\end{aligned}
\end{align}

For the first layer we use the following diagonal recurrence which indicates in the first (second) coordinate whether the number of rotations (reflections) is even (0) or odd (1). 
\begin{gather*}
\vh^{(1)}_{0} = 0, \quad  \vh_{t}^{(1)} = \va(x_t) \odot \vh^{(1)}_{t-1} + \vb(x_t), \quad \vy^{(1)}_t = \mathrm{dec}^{(1)}(\vh_{t}, x_t) = (x_t, h_{t,1}, h_{t,2}). \\
\va(x_i)_1 = \begin{cases}
-1 & \text{if } x_i \in \mathcal{R}    \\
1 & \text{if } x_i \in \mathcal{S}
\end{cases} \quad \va(x_i)_2 = \begin{cases}
-1 & \text{if } x_i \in \mathcal{S}    \\
1 & \text{if } x_i \in \mathcal{R}
\end{cases} \\
\vb(x_i)_1 = \begin{cases}
1 & \text{if } x_i \in \mathcal{R}    \\
0 & \text{if } x_i \in \mathcal{S}
\end{cases} \quad \vb(x_i)_2 = \begin{cases}
1 & \text{if } x_i \in \mathcal{S}    \\
0 & \text{if } x_i \in \mathcal{R}
\end{cases} \\
\end{gather*} 
This recurrence can be implemented also by DeltaProduct with $n_h=1$ using 2 heads each with scalar hidden states: one for the rotations and the other for the reflections.
For the second layer, we have instead the following constructions, which selects the appropriate reflection based on the parity of the rotations and uses the parity of the reflections for $\mathrm{dec}$.
\begin{gather*}
    \vh^{(2)}_{0} = (1,0)^\top, \quad  \vh^{(2)}_t = \mA^{(2)}(\vy^{(1)}_t) \vh^{(2)}_{t-1}, \quad \vy^{(2)}_t = \mathrm{dec}^{(2)}(\vh_t^{(2)}, \vy^{(1)}_t), \\
    \mA^{(2)}(\vy) = \mH(\theta(y_1, y_2)), \\
    \mathrm{dec}^{(2)}(\vh, \vy) = \begin{cases} 
    r_{i^*} & \text{if } y_3 = 0 \\
    s_{m-i^*} & \text{if } y_3 = 1
    \end{cases}, \quad 
    i^* = \argmax_{i \in \{0, \dots, m-1\}} \max (\vc_i^\top \vh, \vd_i^\top \vh) \quad 
\end{gather*}
where $\vy = (y_1, y_2, y_3)^\top \in \mathcal{R}\cup\mathcal{S} \times \{0,1\} \times \{0,1\}$ and $\theta:  \mathcal{R}\cup\mathcal{S} \times \{0,1\} \to \R$ determines the angle of the reflection and is defined for all $i \in \{0,\dots, m-1\}$ as
\begin{align*}
    \theta(r_i, 1) &= \frac{(1 - 2i)\pi}{m}, \quad
    \theta(r_i, 0) = \frac{(1+2i) \pi}{m}, \quad  %
    \quad \theta(s_i, 1) = \frac{ - 2i\pi}{m}, \quad
    \theta(s_i, 0) = \frac{(2+2i) \pi}{m}. %
\end{align*} 
Moreover, $\mathcal{C} = \{\vc_0,\dots, \vc_{m-1}\}$ and $\mathcal{D} = \{\vd_0,\dots, \vd_{m-1}\}$ are two sets of states and are defined as
\begin{gather*}
\vd_0 = \vh_0^{(2)} =  (1,0)^\top, \quad \vc_0 = \mH(\pi/m)\vd_0, \\ \vd_i = \mR(2i\pi/m) \vd_0, \quad \vc_i = \mR(-2i\pi/m)\vc_0 \quad    \text{for all } i \in \{0,\dots,  m-1\}.
\end{gather*}  

From our choice of $\vd_0 = (1,0)^\top$ and $\vc_0$ and from (\ref{eq:rotrefl})-(\ref{eq:rotreflprop}), for any $\alpha \in \R$ we have 
\begin{align*}
    \mR(\alpha)\vd_0 &=\mH(\alpha)\vd_0, \quad \text{and}  \\
    \mR(\alpha)\vc_0&=\mR(\alpha)\mH(\pi/m)\vd_0 = \mR(\alpha)\mR(\pi/m)\vd_0 = \mR(\alpha + \pi/m + \pi/m - \pi/m)\vd_0 \\
    &=\mH(\alpha + 2 \pi/m) \mH(\pi/m)\vd_0 = \mH(\alpha + 2 \pi/m) \vc_0.
\end{align*}
Moreover, from our choice of $\theta$, $\vd_i$ and $\vc_i$, using the identities above and the the fact that $\mR$ is a periodic function with period $2\pi$ we have that
\begin{align*}
     \vd_i &= \mR(2i\pi/m)\vd_0 
     = \mR(2i\pi/m)\mH(\pi/m)\vc_0 
     = \mH(\theta(r_i,0))\vc_0 \\  
\vc_i &= \mR(-2i\pi/m)\vc_0 
     = \mR(-2i\pi/m)\mH(\pi/m)\vd_0 
      = \mH(\theta(r_i,1))\vd_0 \\
     \vd_{m-i} &= \mR(-2i\pi/m)\vd_0 
     = \mH(-2i\pi/m)\vd_0 = \mH(\theta(s_{i}, 1))\vd_0 \\  
\vc_{m-i} &= \mR(+2i\pi/m)\vc_0 
     = \mH((2 +2i)\pi/m)\vc_0 = \mH(\theta(s_{i}, 0))\vc_0
\end{align*}
for every $i \in \{0,\dots, m-1\}$. 
Therefore, we can write 
\begin{equation}\label{eq:dtoc}
\begin{aligned}
     \mH(\theta(r_j,1))\vd_i 
     &= \mR( \theta(r_j,1) - \theta(r_i,0))\vc_0 
     = \mR(-2(i+j)\pi/m)\vc_0 = \vc_{i+j \bmod m}, \\
     \mH(\theta(r_j,0))\vc_i &= \mR( \theta(r_j,0) - \theta(r_i,1))\vd_0   = \mR(2(i+j)\pi/m)\vd_0 = \vd_{i+j \bmod m}, \\
     \mH(\theta(s_j,1))\vd_i 
     &= \mR( \theta(s_j,1) - \theta(s_{m-i},1))\vd_0 
     = \mR(-2(i+j)\pi/m)\vd_0 = \vd_{-i-j \bmod m}, \\
     \mH(\theta(s_j,0))\vc_i &= \mR( \theta(s_j,0) - \theta(s_{m-i},0))\vc_0   = \mR( 2(i+j)\pi/m)\vc_0 = \vc_{-i-j \bmod m}, \\
\end{aligned}    
\end{equation}
for every $i,j \in \{0,\dots, m-1\}$.
We proceed to verify that the output of the second layer is computed correctly: satisfying the product rule for the dihedral group in (\ref{eq:groupop}), i.e.,\@ we want to verify that
\begin{equation}\label{eq:correctupdate}
    y^{(2)}_t = \begin{cases}
     r_{i + j \bmod m} & \text{if } y^{(2)}_{t-1} = r_i, x_t = r_j   \\
     s_{i + j \bmod m} & \text{if } y^{(2)}_{t-1} = r_i, x_t = s_j   \\
     s_{i - j \bmod m} & \text{if } y^{(2)}_{t-1} = s_i, x_t = r_j   \\
     r_{i - j \bmod m} & \text{if } y^{(2)}_{t-1} = s_i, x_t = s_j   \\
    \end{cases}
\end{equation}
Where we set $y^{(2)}_0 = r_0$. First note that  when $y^{(2)}_t \in \mathcal{S}$, then $y^{(1)}_{t,3} = 1$ and when $y^{(2)}_t \in \mathcal{R}$, then $y^{(1)}_{t,3} = 0$.
We consider two cases.

\textbf{Case 1.} If $y^{(2)}_{t-1} = r_i$ and hence $y^{(1)}_{t-1,3}= 0 $, then using (\ref{eq:dtoc}) we obtain
\begin{align*}
\vh^{(2)}_{t} = \mA^{(2)}(\vy^{(1)})\vh^{(2)}_{t-1} =
\begin{cases}
     \mH(\theta(r_j,1))\vd_i  = \vc_{i+j \bmod m} & \text{if } x_t = r_j, y^{(1)}_{t,2} = 1 \\
     \mH(\theta(r_j,0))\vc_i  = \vd_{i+j \bmod m} & \text{if } x_t = r_j, y^{(1)}_{t,2} = 0 \\
     \mH(\theta(s_j,1))\vd_i  = \vd_{-i-j \bmod m}& \text{if } x_t = s_j, y^{(1)}_{t,2} = 1 \\
     \mH(\theta(s_j,0))\vc_i  = \vc_{-i-j \bmod m} & \text{if } x_t = s_j, y^{(1)}_{t,2} = 0
    \end{cases}
\end{align*}
This, together with the definition of $\mathrm{dec}^{(2)}$ implies that
\begin{align}\label{eq:update_part_1}
y^{(2)}_{t} = \mathrm{dec}^{(2)}(\vh^{(2)}_{t}, \vy^{(1)}_{t}) =
\begin{cases}
      r_{i+j \bmod m} & \text{if } x_t = r_j, y^{(1)}_{t,3} = 0 \\
      s_{i+j \bmod m} & \text{if } x_t = s_j, y^{(1)}_{t,3} = 1
    \end{cases}
\end{align}
\textbf{Case 2.} If instead $y^{(2)}_{t-1} = s_i$ and hence $y^{(1)}_{t-1,3}= 1$, then using (\ref{eq:dtoc}) we obtain
\begin{align*}
\vh^{(2)}_{t} = \mA^{(2)}(\vy^{(1)})\vh^{(2)}_{t-1} =
\begin{cases}
     \mH(\theta(r_j,1))\vd_{m-i}  = \vc_{j-i \bmod m} & \text{if } x_t = r_j, y^{(1)}_{t,2} = 1 \\
     \mH(\theta(r_j,0))\vc_{m-i}  = \vd_{j-i \bmod m} & \text{if } x_t = r_j, y^{(1)}_{t,2} = 0 \\
     \mH(\theta(s_j,1))\vd_{m-i}  = \vd_{i-j \bmod m}& \text{if } x_t = s_j, y^{(1)}_{t,2} = 1 \\
     \mH(\theta(s_j,0))\vc_{m-i}  = \vc_{i-j \bmod m} & \text{if } x_t = s_j, y^{(1)}_{t,2} = 0
    \end{cases}
\end{align*}
This, together with the definition of $\mathrm{dec}^{(2)}$ implies that
\begin{align}\label{eq:update_part_2}
y^{(2)}_{t} = \mathrm{dec}^{(2)}(\vh^{(2)}_{t}, \vy^{(1)}_{t}) =
\begin{cases}
      s_{i-j \bmod m} & \text{if } x_t = r_j, y^{(1)}_{t,3} = 1 \\
     r_{i-j \bmod m} & \text{if } x_t = s_j, y^{(1)}_{t,3} = 0
    \end{cases}.
\end{align}
Note that (\ref{eq:update_part_1}) and (\ref{eq:update_part_2}) imply (\ref{eq:correctupdate}).
Setting the output of the linear RNN equal to the output of the second layer concludes the proof.

\textbf{(ii)} 
It follows from \Cref{th:suborth} since $D_m$ is a finite subgroup of $\mathrm{O}(2)$, the group of 2D orthogonal transformations: rotations and reflections.
\end{proof}

\subsection{Stability vs. Expressivity of Linear RNNs}\label{app:stabilityvsexpressivity}
In this section, we discuss the tradeoff between expressivity and stability of a linear RNN recurrence
$
    \mH_i = \mA_i \mH_{i-1} + \mB_i
$, where $\mA_i = \mA(\vx_i)$, $\mB_i = \mB(\vx_i)$.
We say that such a recurrence is stable if 
\begin{equation}\label{eq:stable}
      \exists M \in [0,\infty) \text{ such that } \norm{\prod_{j=1}^{i}\mA_j} < M \quad \forall i \in \N,   
\end{equation}
where $\norm{\cdot}$ is the spectral norm.
This property is true if and only if $\rho(\prod_{j=1}^i \mA_j) \leq 1$ where $\rho(\mM)$ is the spectral radius of $\mM$, i.e. the maximum modulus of its eigenvalues. When this property is not satisfied, the norm of the state will diverge. An effective way to satisfy (\ref{eq:stable}) with $M=1$ is to enforce $\norm{\mA_i} \leq 1$ for every $i$, since the norm of the product is less than or equal to the product of the norms (due to the submultiplicativity property). 
However, this restriction excludes some boolean matrices which are useful for recognizing regular languages. Indeed, in the construction shown in (\ref{eq:reglanrec}), all matrices involved are $n \times n$ with entries taking values in $\{0,1\}$ and having only a single one in each column. This class of matrices $\mathcal{B}$ satisfies (\ref{eq:stable}) with $M = \sqrt{n}$ because it is closed under matrix multiplication, i.e.\@ $\forall \mB, \mB' \in \mathcal{B}$, we have $\mB\mB' \in \mathcal{B}$, and $\max_{\mB \in \mathcal{B}} \norm{\mB} = \sqrt{n}$, which is achieved by matrices with ones only in one row. In particular, all matrices in $\mathcal{B}$ that are not permutations have spectral norm greater than one and therefore cannot be expressed if we enforce $\norm{\mA_i} \leq 1$.

The (Gated) DeltaProduct state transition matrix $\mA_i = \prod_{l=1}^{n_h} g_l(I - \beta_l \vk_l \vk_l^\top)$ satisfies $\norm{\mA_i} \leq 1$ since $g_l \in [0,1]$, $\beta_l \in [0,2]$, and $\norm{\vk_l} = 1$. Thus, from the matrices in $\mathcal{B}$, it can represent only permutations of up to $n_h + 1$ elements.
Instead, the state-transition matrix of RWKV-7, $\mA_i = \mathrm{diag}(w_i) - c \vk_i(\vk_i \odot \va_i)^\top$ with $c=2$, can represent not only the identity and permutations of two elements, but also any copy matrix, which is obtained by copying one column of the identity onto another and has spectral norm equal to $\sqrt{2}$. 
However, as we show in the next theorem, even with the less expressive $c=1$ setup that is used in practice, the RWKV-7 recurrence is not stable unless $\va_i$ is the same for every $i$, which is the case studied in \citet[Theorem 1]{peng2025rwkv7gooseexpressivedynamic}. Having different $\va_i$ values is key to modeling the copy matrix, since this requires a value different from that of a permutation matrix.
\begin{theorem}
Consider the RWKV-7 state transition matrix $\mA_i = \mathrm{diag}(w_i) - c \vk_i(\vk_i \odot \va_i)^\top$ with $c = 1$ (as set in practice), $w_i = (1,\dots,1)^\top \in \R^n$, $\va_i \in [0,1]^n$, $\vk_i \in \mathbb{R}^n$ with $\|\vk_i\| = 1$, and $n \geq 2$. 
There exists an infinite set $\mathcal{M}$ of matrix pairs such that for every $(\mA, \mA') \in \mathcal{M}$, we have $\rho(\mA\mA') > 1.2$, where $\rho$ denotes the spectral radius. Thus, if we set 
\begin{equation*}
    \mA_i = \begin{cases}
        \mA & \text{if } i \bmod 2 = 0 \\
        \mA' & \text{if } i \bmod 2 = 1
    \end{cases}, \quad \text{which implies} \quad \lim_{i \to \infty} \left\|\prod_{j=1}^{i}\mA_j\right\| = \lim_{i \to \infty} = \infty.
\end{equation*}
\end{theorem}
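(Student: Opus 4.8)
The plan is to reduce the claim to a two-dimensional computation, exhibit one explicit admissible pair whose product already has spectral radius strictly above $1.2$, and then inflate that example to an infinite family by continuity. Since $w_i=\vone$, every admissible matrix is $\mA_i=\mI-\vk_i(\vk_i\odot\va_i)^\top$, a rank-one perturbation of the identity. For two such matrices $\mA=\mI-\vk_1\vw_1^\top$ and $\mA'=\mI-\vk_2\vw_2^\top$ with $\vw_j=\vk_j\odot\va_j$, the range of $\mA\mA'-\mI$ lies in $\mathrm{span}(\vk_1,\vk_2)$, so $\mA\mA'$ fixes the orthogonal complement and its only nontrivial eigenvalues come from the $2\times 2$ block acting on $\mathrm{span}(\vk_1,\vk_2)$. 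Hence I would first work in $\R^2$ and afterwards embed into $\R^n$ by placing the construction in the top-left block and setting the remaining coordinates of each $\vk_j$ to $0$; this preserves $\norm{\vk_j}=1$ and $\va_j\in[0,1]^n$, and only adds eigenvalues equal to $1$, leaving $\rho(\mA\mA')$ unchanged.

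For the explicit planar pair I would take $\vk_1=(\cos\tfrac{\pi}{3},\sin\tfrac{\pi}{3})$ with $\va_1=(1,0)$, and $\vk_2=(\cos\tfrac{\pi}{6},\sin\tfrac{\pi}{6})$ with $\va_2=(0,1)$. These give two ``damped shears''; a direct computation yields $\text{tr}(\mA\mA')=\tfrac{27}{16}$ and $\det(\mA\mA')=\tfrac{9}{16}$, so the eigenvalues solve $\lambda^2-\tfrac{27}{16}\lambda+\tfrac{9}{16}=0$ and the larger one is $\rho(\mA\mA')=\tfrac{27+\sqrt{153}}{32}\approx 1.23>1.2$. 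The two nontrivial design choices are that the $\va$ vectors differ (a constant $\va$ forces a stable product, as already noted before the theorem) and that the two directions use unequal angles; symmetric choices such as equal angles, or $\va$ aligned with the coordinate axes with $\vk_1=\vk_2$, collapse to $\rho=1$ exactly.

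To build the infinite set $\mathcal{M}$, I would use that $\rho(\mA\mA')$ is a continuous function of the parameters $(\vk_1,\va_1,\vk_2,\va_2)$, since eigenvalues depend continuously on matrix entries and the entries depend continuously on these parameters. Because the example satisfies $\rho>1.2$ strictly, an entire open neighbourhood of parameters still satisfies $\rho>1.2$ — for instance varying the angle of $\vk_1$ over a small interval around $\tfrac{\pi}{3}$ while keeping everything else fixed produces a one-parameter continuum of distinct admissible pairs. Embedding each of these (uncountably many) pairs into $\R^{n\times n}$ as above yields the desired infinite $\mathcal{M}$. Finally, for any $(\mA,\mA')\in\mathcal{M}$, setting $\mA_i=\mA$ for even $i$ and $\mA_i=\mA'$ for odd $i$ makes the length-$2m$ product equal to a power $(\mA\mA')^m$ (up to cyclic reordering, which leaves the spectral radius invariant), and combining $\norm{\mM}\ge\rho(\mM)$ with $\rho(\mM^m)=\rho(\mM)^m$ gives $\norm{\prod_{j=1}^{2m}\mA_j}\ge\rho(\mA\mA')^m>1.2^{\,m}\to\infty$, so the product norm is unbounded and the recurrence is unstable.

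The main obstacle is the first step: finding \emph{any} admissible pair with $\rho>1.2$. The form $\mI-\vk(\vk\odot\va)^\top$ is restrictive — each factor has an eigenvalue $1$ and determinant $1-\sum_i k_i^2 a_i\in[0,1]$ — and many natural constructions collapse to $\rho=1$. The guiding observation is the Cauchy–Schwarz bound $|\vw_1^\top\vk_2|\le\sqrt{\vw_1^\top\vk_1}$ (using $\va\le\vone$, $\norm{\vk_2}=1$): the off-diagonal coupling $\vw_1^\top\vk_2$ that drives growth in the trace is capped by exactly the quantity $\vw_1^\top\vk_1$ that shrinks the determinant, so one must locate parameters balancing these competing effects, and the angles $\tfrac{\pi}{3},\tfrac{\pi}{6}$ above are one such balanced point. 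Everything after the example — continuity and the spectral-radius lower bound on the operator norm — is routine.
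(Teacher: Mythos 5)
Your proposal is correct and takes essentially the same approach as the paper's proof: you use the same two matrices (your $\mA,\mA'$ are the paper's $\mA',\mA$, so your product is the reverse-order product with identical trace $27/16$, determinant $9/16$, and spectral radius $(27+\sqrt{153})/32\approx 1.23$), the same continuity-in-$\theta$ argument to get an infinite family, and the same bound $\norm{(\mA\mA')^m}\ge\rho(\mA\mA')^m\to\infty$ for the alternating product. The paper also works in $n=2$ and, like you, treats the embedding into $\R^{n}$ as immediate.
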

\begin{proof}
We demonstrate this by construction for $n = 2$; the generalization to $n \geq 2$ is straightforward.

Let $\theta = \pi/3$,
$\va = (0, 1)^\top$, $\va' = (1, 0)^\top$.
$\vk = (\cos\theta, \sin\theta)^\top = [1/2, \sqrt{3}/2]^\top$.
$\vk' = (\sin\theta, \cos\theta)^\top = (\sqrt{3}/2, 1/2)^\top$.
Note that $\|\vk\| = \|\vk'\| = 1$.
We construct $\mA$ and $\mA'$ as
\begin{align*}
\mA &= I - \vk(\vk \odot \va)^\top =  I - \begin{pmatrix} \sqrt{3}/2 \\ 1/2 \end{pmatrix} [0 \quad 1/2] = \begin{pmatrix} 1 & -\sqrt{3}/4 \\ 0 & 3/4 \end{pmatrix} \\
\mA' &= I - \vk'(\vk' \odot \va')^\top = I - \begin{pmatrix} 1/2 \\ \sqrt{3}/2 \end{pmatrix} [1/2 \quad 0] = \begin{pmatrix} 3/4 & 0 \\ -\sqrt{3}/4 & 1 \end{pmatrix} 
\end{align*} 

Now, consider the product matrix
\[ \mM = \mA\mA' = \begin{pmatrix} 1 & -\sqrt{3}/4 \\ 0 & 3/4 \end{pmatrix} \begin{pmatrix} 3/4 & 0 \\ -\sqrt{3}/4 & 1 \end{pmatrix} = \begin{pmatrix} 15/16 & -\sqrt{3}/4 \\ -3\sqrt{3}/16 & 3/4 \end{pmatrix} \]

To find the spectral radius $\rho(M)$, we examine its eigenvalues. The characteristic equation is $\lambda^2 - \operatorname{Tr}(\mM)\lambda + \det(\mM) = 0$, with
$\operatorname{Tr}(\mM) = 27/16$ and
$\det(\mM) = 9/16$.
Hence, the characteristic equation is $16\lambda^2 - 27\lambda + 9 = 0$.
Thus, the eigenvalues are $\lambda_1 = \frac{27 + \sqrt{153}}{32}$ and $\lambda_2 = \frac{27 - \sqrt{153}}{32} $ and
the spectral radius is $\rho(\mM) = \max\{|\lambda_1|,|\lambda_2|\} =\lambda_1 \approx 1.23$. Also, since the spectral radius is a continuous function of the matrix entries, which are a continuous function of $\theta$, then this means that there is an infinite set of matrices, namely $\mathcal{M}$, obtained by varying $\theta$ around $\pi/3$ whose product has spectral radius greater than $1.2$.

From our construction of $\mA_i$, we have $\prod_{j=1}^{2i} \mA_j = \mM^i$. By the definition of the spectral norm and spectral radius, $\norm{\mM^i} \geq \norm{\lambda_1^i \vx} = |\lambda|^i = \rho(\mM)^i$, where $\vx$ is the eigenvector associated with the dominant eigenvalue $\lambda_1$. The result follows since $\lim_{i \to \infty} \rho(\mM)^i = \infty$.
\end{proof}

\section{Experiments}\label{app:experiments}

\subsection{State-Tracking}\label{app:state_tracking}

\begin{figure}
    \centering
    \adjustbox{width=1.0\textwidth}{
    \includegraphics[width=0.2405\textwidth, trim={4 26 3 0}, clip=true] {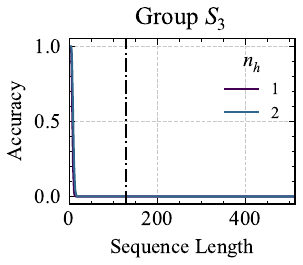}
    \includegraphics[width=0.195\textwidth, trim={30 26 3 0}, clip=true] {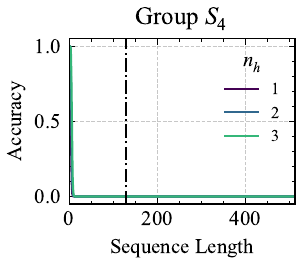}
    \includegraphics[width=0.195\textwidth, trim={30 26 3 0}, clip=true] {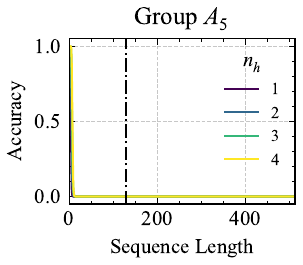}
    \includegraphics[width=0.195\textwidth, trim={30 26 3 0}, clip=true] {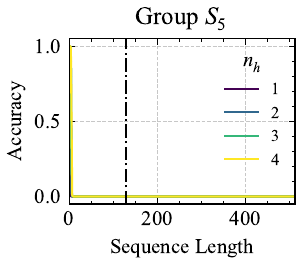}
    }
        \adjustbox{width=1.0\textwidth}{
    \includegraphics[width=0.2405\textwidth, trim={4 0 3 18}, clip=true] {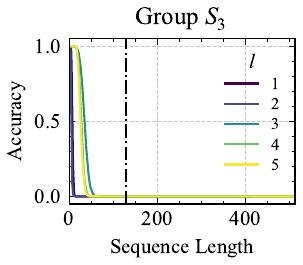}
    \includegraphics[width=0.195\textwidth, trim={30 0 3 18}, clip=true] {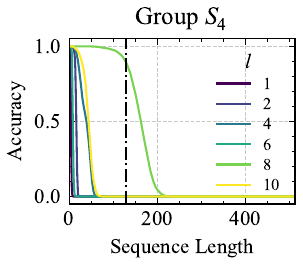}
    \includegraphics[width=0.195\textwidth, trim={30 0 3 18}, clip=true] {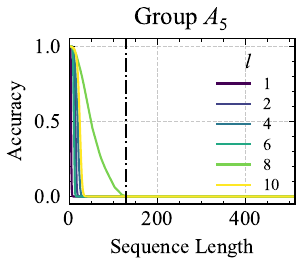}
    \includegraphics[width=0.195\textwidth, trim={30 0 3 18}, clip=true] {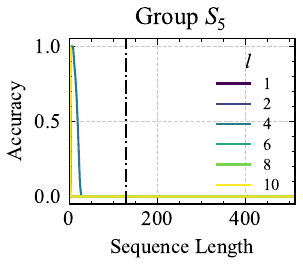}
    }
    \vspace{-5mm}
    \caption{Results for permutation groups \( S_3 \), \( S_4 \), \( A_5 \), and \( S_5 \) when limiting the eigenvalue range of the state-transition matrix to $[0, 1]$. \textit{(Top row)} Varying the number of Householder products \(n_h \) for a single layer DeltaProduct$_{n_h}[0,1]$. \textit{(Bottom row)} Varying the number of layers $l$ of DeltaProduct$_{1}[0,1]$/DeltaNet$[0,1]$ (single Householder). Dashed vertical line at training context length 128. Higher \( n_h \) improves extrapolation to longer sequences of permutations, e.g., \( S_3 \) can be learned with \( n_h=2 \) with a single layer while three layers are required when keeping $n_h=1$.
    } 
    \vspace{-5pt}\label{fig:state_tracking_length_extrapolation_neg_eigval_false}
\end{figure}
\textbf{Clarification on the isomorphisms of $S_3$, $S_4$, $A_5$, and $S_5$}

$S_3$: The group consisting of all isometries that map an equilateral triangle onto itself, including both orientation-preserving rotations and orientation-reversing reflections, is isomorphic to $S_3$. 

$S_4$: The rotation group of a cube is isomorphic to the symmetric group $S_4$. This correspondence arises because the cube has exactly four space diagonals, and every \textit{proper rotation}---that is, every orientation-preserving isometry of the cube about an axis through its center---permutes these diagonals in all possible ways (see~\Cref{fig:cube_labeled} for an example). In particular, these proper rotations include, for example, the $90^\circ$, $180^\circ$, and $270^\circ$ rotations about axes passing through the centers of opposite faces, the $180^\circ$ rotations about axes through the midpoints of opposite edges, and the $120^\circ$/$240^\circ$ rotations about axes through opposite vertices. Hence, the proper rotational symmetries of the cube correspond precisely to the permutations of its four space diagonals~\citep{gallian2021contemporary}.

$A_5$: Similarly, a regular dodecahedron contains exactly five special cubes symmetrically arranged within it. Each \textit{proper rotation} of the dodecahedron---that is, every orientation-preserving rigid motion mapping the dodecahedron onto itself---rearranges these inscribed cubes by an \textit{even permutation}. This property makes the rotation group of the dodecahedron isomorphic to the alternating group $A_5$, the group of all even permutations of five elements~\citep{foster1990symmetry}.

$S_5$: When both proper rotations and reflections (orientation-reversing symmetries) are considered, the full symmetry group of the dodecahedron corresponds exactly to the symmetric group $S_5$, since reflections allow both even and odd permutations of the five hidden cubes~\citep{foster1990symmetry}.

\textbf{Experimental Details.}
We used the experimental setup from~\citet{merrill-icml24a} and sampled $2{,}000{,}000$ training datapoints at sequence length 128 and $500{,}000$ test datapoints at sequence length 512. We did not use a curriculum over sequence length during training. The models were trained using AdamW optimizer \citep{loshchilov-iclr19a} with parameters $\beta_1=0.9$, $\beta_2=0.999$, and $\epsilon=10^{-8}$ in PyTorch~\citep{paszke-neurips19a}. We used a learning rate of $10^{-3}$ with cosine annealing \citep{loshchilov-iclr17a} and trained for  100 epochs with a batch size of 1024, except for the $S_3$ models which required a batch size of 2048 for more reliable results. All models used a single-layer DeltaProduct architecture featuring 12 heads (more heads made the results more reliable) and a head dimension of 32. We applied a weight decay coefficient of $10^{-6}$. The $\beta$ values were extracted from the forward pass of the trained models using NNsight \citep{kaufman-iclr24a}. We use the PCA implementation in scikit-learn~\citep{pedregosa2011scikit}.

\begin{figure}
    \centering
    \includegraphics[width=0.7\linewidth]{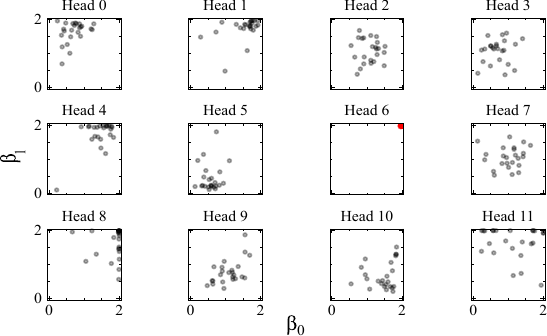}
    \vspace{-0mm}
    \caption{$\beta_0$ and $\beta_1$ values across all 24 permutations in $S_4$ in DeltaProduct$_2[-1, 1]$. We find that only head 6 (shown in~\Cref{fig:S4_beta_inv}) learns to use both Householders as reflections ($\beta_0\approx 2$, $\beta_1\approx 2$) allowing it to learn the rotations to solve $S_4$.}
    \label{fig:app_beta_values_estimated}
\end{figure}

\subsection{Chomsky Hierarchy}\label{app:chomsky}

\textbf{Setup.} We conducted experiments on selected formal language tasks originally introduced by \citet{deletang-iclr23a}. Our goal was to demonstrate the improvements in length extrapolation that can be achieved using multiple Householder matrices in the state-transition matrix compared to DeltaNet. Following \citet{grazzi-iclr25a}, we focus on three tasks: parity, modular arithmetic without brackets (both regular languages), and modular arithmetic with brackets (a context-free language). We trained DeltaProduct$_{n_h}$ with $n_h \in \{2,3,4\}$ on sequences of length 3 to 40 and tested on sequences ranging from 40 to 256 to evaluate generalization to longer inputs. We compare our results against the results obtained by~\citet{grazzi-iclr25a} for Transformer, mLSTM and sLSTM from \citet{beck-neurips24a}, Mamba~\citep{gu2023mamba}, and DeltaNet~\citep{yang-neurips24a}. For both Mamba and DeltaNet, we experiment with an eigenvalue range restricted to $[0,1]$ and extended to $[-1,1]$.

\textbf{Experimental Details.} All DeltaProduct and DeltaNet models contain 3 layers with 1 head each and heads' dimensions set to 128, except for modular arithmetic with brackets, where we use 12 heads and set the heads' dimensions to 32. Both models use a \href{https://github.com/Dao-AILab/causal-conv1d}{causal depthwise 1D convolution} with a kernel size of 4 after the query/key/value projection. For modular arithmetic, we also use a gradient clipping norm of 1.0. We train each model using AdamW~\citep{loshchilov-iclr19a} using a learning rate of 5e-4, batch size of 1024, 0.1 weight decay, and a cosine annealing learning rate schedule~\citep{loshchilov-iclr17a} (minimum learning rate: 1e-6) after 10\% warm-up steps. We train on the modular arithmetic and parity tasks for 100k and 20k steps in total, respectively. At each training step, we make sure to generate a valid random sample from the task at hand (see below). We repeat the runs 3 times with different seeds each, and later pick the best to report in \Cref{tab:chomsky}.

\textbf{Considered Tasks.}
We empirically evaluated three tasks—parity, modular arithmetic without brackets, and modular arithmetic with brackets—spanning different levels of the Chomsky Hierarchy. Below, we provide details for each task, where $|\Sigma|$ denotes the vocabulary size and $Acc_{rand}$ represents the accuracy of random guessing:
\begin{itemize}[leftmargin=*]
    \item \textbf{Parity ($|\Sigma| =2$, $Acc_{rand}=0.5$).} Given a binary sequence $\vx = x_1 \dots x_t \in \{0,1\}^t$, the parity label $y_t \in \{0,1\}$ is 1 if the total number of ones in the sequence is odd, and 0 otherwise. This task is equivalent to computing the sum of all previous values modulo 2, i.e., $y_t = (\sum_{i=1}^{t} x_i) \bmod 2$.
    \item \textbf{Modular Arithmetic without Brackets (\(|\Sigma| = 10\), \(Acc_{rand}=1/5\)).} Given a set of special tokens \( \Sigma_s = \{ \mathtt{+,-,*,=,[ PAD ]} \} \) and a modulus \( m \geq 1 \), we define \( \Sigma = \Sigma_s \cup \{ 0,\dots ,m-1 \} \). The label \( y_t \) corresponds to the result of evaluating the arithmetic operations in the sequence \( \vx = x_1,\dots,x_t \), computed modulo \( m \). In our experiments, we set \( m=5 \). An example is:  
    \begin{align*}
        \mathtt{2 + 1 - 2 * 2 -3 = \mathbin{\color{red}{1}}\ [PAD]}
    \end{align*}
    \item \textbf{Modular Arithmetic with Brackets (\(|\Sigma| = 12\), \(Acc_{rand}=1/5\)).} This task follows the same definition as modular arithmetic without brackets but includes an extended set of special tokens, \( \Sigma_s = \{ \mathtt{+,-,*,=,),(,[ PAD ]} \} \), allowing for nested expressions. Again, we set \( m=5 \). An example sequence is: 
    \begin{align*}
        \mathtt{((1-(-2))+((4)+3)) = \mathbin{\color{red}{0}}\ [PAD]}
    \end{align*}
\end{itemize}

\textbf{Results.} 
As shown in \Cref{tab:chomsky}, DeltaProduct$_{n_h}$ with $n_h \geq 2$ has better average accuracy compared to DeltaNet and other baselines. This performance improvement is particularly pronounced when using the extended eigenvalue range $[-1, 1]$, which aligns with the findings of \citet{grazzi-iclr25a}. Notably, we observe the most significant improvement in the modular arithmetic with brackets task, which is also the most challenging. 

\begin{table}
\vspace{-8mm}
\caption{Performance of DeltaProduct$_{n_h}[-1,1]$, $n_h \in \{ 2,3,4 \}$, on formal language tasks. We report the best of 3 runs. Scores are scaled accuracy, with 1.0 indicating perfect performance and 0.0 random guessing. The results for the other models were taken directly from \citet{grazzi-iclr25a}.}
\label{tab:chomsky}
\centering
\vspace{1mm}
\resizebox{0.55\linewidth}{!}{
\begin{tabular}{@{}l >{\centering\arraybackslash}p{1.2cm} >{\centering\arraybackslash}p{1.9cm} >{\centering\arraybackslash}p{1.8cm} >{\centering\arraybackslash}p{1.0cm}@{}}
\toprule
      \textbf{Model} & \textbf{Parity} & \begin{tabular}[c]{@{}c@{}} \textbf{Mod. Arithm.}\\ \textbf{(w/o brackets)}\end{tabular} & \begin{tabular}[c]{@{}c@{}} \textbf{Mod. Arithm.} \\ \textbf{(w/ brackets)}\end{tabular} & \textbf{Avg.} \\ \midrule
Transformer & 0.022 & 0.031 & 0.067 & 0.040 \\ \midrule
mLSTM & 0.087 & 0.040 & 0.114 & 0.080 \\
sLSTM & \textbf{1.000} & \textbf{0.787} & \textbf{0.178} & \textbf{0.655} \\ \midrule
Mamba $[0, 1]$ & 0.000 & 0.095 & \textbf{0.123} & 0.073 \\
Mamba $[-1, 1]$ & \textbf{1.000} & \textbf{0.241} & 0.116 & \textbf{0.452} \\ \midrule
DeltaNet $[0, 1]$ & 0.233 & 0.302 & 0.253 & 0.263 \\ 
DeltaProduct$_2$ $[0, 1]$ & 0.264 & \textbf{0.402} & 0.249 & 0.305 \\
DeltaProduct$_3$ $[0, 1]$ & \underline{0.285} & \textbf{0.402} & \underline{0.288} & \textbf{0.325} \\
DeltaProduct$_4$ $[0, 1]$ & \textbf{0.295} & \underline{0.369} & \textbf{0.288} & \underline{0.317} \\ \midrule
DeltaNet $[-1, 1]$ & \textbf{0.982} & \textbf{0.915} & 0.281 & \underline{0.726} \\
DeltaProduct$_2$ $[-1, 1]$ & 0.896 & 0.887 & 0.329 & 0.704 \\
DeltaProduct$_3$ $[-1, 1]$ & \underline{0.932} & 0.736 & \underline{0.330} & 0.666 \\
DeltaProduct$_4$ $[-1, 1]$ & \textbf{0.982} & \underline{0.893} & \textbf{0.342} & \textbf{0.739} \\ \bottomrule
\end{tabular}
}
\end{table}

\subsection{Language Modeling}\label{app:language-modeling}

\subsubsection{Experimental setup}\label{app:lm-experimental-setup}
We follow the same basic training setup as in~\citep{grazzi-iclr25a}. We use the training pipeline \texttt{flame} from the flash-linear-attention~\citep{yang2024fla} repository. All of our models are trained on NVIDIA L40s, NVIDIA A100 40GB or NVIDIA H100 94GB GPUs. We used 16 to 32 GPUs at a time to train one model, in a 2 to 8 node setup, depending on resource availability. We used DeepSpeed with ZeRO-2~\citep{rajbhandari2020zero} for distributed training. All models were trained with an effective batch size of $524\,288$ tokens, and a learning rate of 3e-4. We optimized the models with AdamW~\citep{loshchilov-iclr19a} ($0.01$ weight decay) and used cosine annealing~\citep{loshchilov-iclr17a} for the learning rate schedule with linear warm up for 512 steps. We used a total of 10\,500 GPU hours to train all of our models. 

\subsubsection{Throughput}\label{app:throughput}

\begin{figure}[H]
    \centering    
    \adjustbox{width=0.38\linewidth, valign=c}{\includegraphics{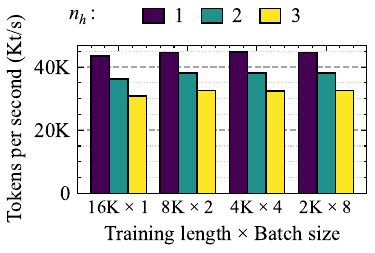}}
    \caption{Training throughput of a parameter matched DeltaProduct 1.3B. Parameter matching is achieved by decreasing the inner dimension in the SwiGLU MLP for $n_h>1$.}
    \label{fig:throughput-swiglu}
\end{figure}

\subsubsection{Additional Benchmarks}\label{app:task_details}

\begin{table*}
\caption{Performance comparison of models shown in~\Cref{fig:scaling-plot-head-dim-scaling}. Parameter equivalence was achieved by scaling the head dimension. To account for the increased parameter count we scaled the training token budget from 19B (213M parameters) to 55B (805M parameters) on FineWeb~\citep{penedo2024finewebdatasetsdecantingweb}. Models were trained on 4096 token context length.}
\centering
\vspace{-1mm}
\adjustbox{width=0.98\textwidth}{
\centering
\begin{tabular}{c|l|cc|ccccccc}
\toprule
\textbf{} & \multirow{2}{*}{\hspace{5mm}\textbf{Model}}  & \textbf{Wiki.}  &  \textbf{LMB.} &  \textbf{LMB.} & \textbf{PIQA} &    \textbf{Hella.} & \textbf{Wino.} & \textbf{ARC-e} &  \textbf{ARC-c} &  \textbf{Avg.} \\
\textbf{} &  & ppl $\downarrow$  &  ppl $\downarrow$  &  acc $\uparrow$  & acc $\uparrow$ &   acc\_n $\uparrow$  & acc $\uparrow$  & acc $\uparrow$ & acc\_n $\uparrow$ & $\uparrow$  \\

\midrule
\multirow{3}{*}{\rotatebox{90}{\scriptsize\textit{19B / 213M}}} 
&DeltaNet$[-1,1]$ & 32.39 & 107.41 & 20.4 & 65.3 & 35.1 & 52 & 44.1 & 24.5 & 40.23  \\
&DeltaProduct$_2[-1,1]$ & 31.46 & 78.98 & 23.9 & 64.6 & 36.2 & 52.6 & 45 & 23 & 40.88  \\
& DeltaProduct$_3[-1,1]$ & 30.94 & 70.5 & 24.6 & 66.3 & 36.8 & 49.1 & 46 & 23.7 & \textbf{41.01} \\

\midrule 
\multirow{3}{*}{\rotatebox{90}{\scriptsize\textit{35B / 392M}}} 
&DeltaNet$[-1,1]$ & 25.5 & 40.32 & 30.2 & 68.5 & 41 & 51.9 & 47.3 & 23.3 & 43.7  \\
&DeltaProduct$_2[-1,1]$ & 24.82 & 34.31 & 33.3 & 68.9 & 43.4 & 50.7 & 49.2 & 25 & \textbf{45.08}  \\
& DeltaProduct$_3[-1,1]$ & 24.81 & 37.13 & 31.1 & 68.5 & 43.3 & 50 & 48.2 & 23.7 & 44.1 \\

\midrule 
\multirow{3}{*}{\rotatebox{90}{\scriptsize\textit{55B / 805M}}} 
&DeltaNet$[-1,1]$ & 20.81 & 20.57 & 37.8 & 71.5 & 48.9 & 55.6 & 51.9 & 25.6 & 48.55  \\
&DeltaProduct$_2[-1,1]$ & 20.54 & 19.56 & 38.3 & 71 & 50.7 & 55.2 & 52.1 & 26.7 & 49  \\
& DeltaProduct$_3[-1,1]$ & 20.01 & 15.56 & 42.9 & 71.4 & 51.4 & 53 & 54.6 & 26.4 & \textbf{49.95} \\
\bottomrule
\end{tabular}}
\label{table:lm-benchmarks-scaling-head-dim-scaling}
\end{table*}

\begin{table*}
\caption{Performance comparison of models shown in~\Cref{fig:scaling-plot-num-head-scaling}. Parameter equivalence was achieved by scaling the number of heads in the attention. To account for the increased parameter count we scaled the training token budget from 19B (213M parameters) to 55B (805M parameters) on FineWeb~\citep{penedo2024finewebdatasetsdecantingweb}. Models were trained on 4096 token context length.}
\centering
\vspace{-1mm}
\adjustbox{width=0.98\textwidth}{
\centering
\begin{tabular}{c|l|cc|ccccccc}
\toprule
\textbf{} & \multirow{2}{*}{\hspace{5mm}\textbf{Model}}  & \textbf{Wiki.}  &  \textbf{LMB.} &  \textbf{LMB.} & \textbf{PIQA} &    \textbf{Hella.} & \textbf{Wino.} & \textbf{ARC-e} &  \textbf{ARC-c} &  \textbf{Avg.} \\
\textbf{} &  & ppl $\downarrow$  &  ppl $\downarrow$  &  acc $\uparrow$  & acc $\uparrow$ &   acc\_n $\uparrow$  & acc $\uparrow$  & acc $\uparrow$ & acc\_n $\uparrow$ & $\uparrow$  \\

\midrule
\multirow{3}{*}{\rotatebox{90}{\scriptsize\textit{19B / 213M}}} 
&DeltaNet$[-1,1]$ & 31.96 & 85.36 & 22.5 & 65.2 & 35.4 & 50.8 & 44.7 & 22.4 & 40.17  \\
&DeltaProduct$_2[-1,1]$ & 30.87 & 89.23 & 23.1 & 65.4 & 36.5 & 51.2 & 43.5 & 22.4 & 40.35  \\
& DeltaProduct$_3[-1,1]$ & 30.85 & 71.52 & 24.1 & 66.3 & 36.1 & 51.6 &  44.1 & 23.9 & \textbf{41.02} \\

\midrule 
\multirow{3}{*}{\rotatebox{90}{\scriptsize\textit{35B / 392M}}} 
&DeltaNet$[-1,1]$ & 24.86 & 39.1 & 30.8 & 69.2 & 41.4 & 50.5 & 46.7 & 24.4 & 43.83  \\
&DeltaProduct$_2[-1,1]$ & 24.97 & 35.68 & 31.9 & 69.6 & 42.5 & 52.6 & 47.4 & 25.9 & \textbf{44.98}  \\
& DeltaProduct$_3[-1,1]$ & 25.2 & 40.96 & 30.5 &  69.1 & 42.3 & 51.4 &  47.7 & 23.9 & 44.15 \\

\midrule 
\multirow{3}{*}{\rotatebox{90}{\scriptsize\textit{55B / 805M}}} 
&DeltaNet$[-1,1]$ & 20.6 & 21.18 & 38.7 & 71.5 & 48.7 & 52.8 & 51.9 & 25.7 & 48.22  \\
&DeltaProduct$_2[-1,1]$ & 20.26 & 17.41 & 40.7 & 72.6 & 50.3 & 53.9 & 52.4 & 24.9 & 49.13  \\
& DeltaProduct$_3[-1,1]$ & 19.97 & 17.78 & 40.79 & 72.3 & 50.9 & 52.1 &  53.9 & 26.4 & \textbf{49.4} \\
\bottomrule
\end{tabular}}
\label{table:lm-benchmarks-scaling-num-head-scaling}
\end{table*}

\begin{table*}
\caption{Performance comparison of models trained with $2048$ context length. %
(SlimPajama (SPJ) reproduced from~\citet{yang-neurips24a}, Fine-Web (FW) ours). Results are shown for DeltaProduct and Gated DeltaProduct. We use 8 heads for each layer, unless otherwise specified.
}
\centering
\vspace{-1mm}
\adjustbox{width=0.98\textwidth}{
\centering
\begin{tabular}{c|l|cc|ccccccc}
\toprule
\textbf{} & \multirow{2}{*}{\hspace{5mm}\textbf{Model}}  & \textbf{Wiki.}  &  \textbf{LMB.} &  \textbf{LMB.} & \textbf{PIQA} &    \textbf{Hella.} & \textbf{Wino.} & \textbf{ARC-e} &  \textbf{ARC-c} &  \textbf{Avg.} \\
\textbf{} &  & ppl $\downarrow$  &  ppl $\downarrow$  &  acc $\uparrow$  & acc $\uparrow$ &   acc\_n $\uparrow$  & acc $\uparrow$  & acc $\uparrow$ & acc\_n $\uparrow$ & $\uparrow$  \\
\midrule
\multirow{5}{*}{\rotatebox{90}{\scriptsize\textit{15B tokens SPJ}}} 
& \hspace{-1mm} {\textit{340M params}} & & & & & & & & & \\
& \hspace{4mm} Transformer++ & \underline{28.39} & 42.69 & 31.0 & 63.3 & 34.0 & 50.4 & 44.5 & \underline{24.2} & 41.2 \\
& \hspace{4mm} Mamba  $[0, 1]$ & \underline{28.39} & \underline{39.66} & 30.6 & \underline{65.0} & \textbf{35.4} & 50.1 & \textbf{46.3} & 23.6 & \underline{41.8} \\ 
& \hspace{4mm} GLA $[0, 1]$ & 29.47 & 45.53 & \underline{31.3} & \textbf{65.1} & 33.8 & \underline{51.6} & 44.4 & \textbf{24.6} & \underline{41.8} \\
& \hspace{4mm} DeltaNet $[0, 1]$ & \textbf{28.24} & \textbf{37.37} & \textbf{32.1} & 64.8 & \underline{34.3} & \textbf{52.2} & \underline{45.8} & 23.5 & \textbf{42.1} \\ 
\midrule 
\multirow{5}{*}{\rotatebox{90}{\scriptsize\textit{35B FW~~~~~}}} 
&DeltaNet$[-1,1]$ {\small 340M} & 26.92 & 43.07 & 29.8 & 69.0 & 41.0 & 50.9 & 46.6 & 24.5 & 43.6  \\
&DeltaNet$[-1,1]$ {\small 12 heads, 392M} & 26.57 & 36.76 & 31.8 & 69.2 & 42.3 & 50.9 & 47.2 & 24.4 & 44.3  \\
&DeltaProduct$_2[-1,1]$ {\small 392M } & 26.43 & 30.66 & \underline{34.0} & 68.9 & 42.4 & \underline{53.1} & 48.9 & \underline{25.9} & 45.5  \\
& DeltaProduct$_3[-1,1]$ {\small 443M} & \underline{25.94} & \textbf{29.91} & \textbf{34.2} &  \textbf{69.9} & 43.2 & 51.9 &  48.2 & 24.1 & 45.2 \\
\cmidrule(lr){2-11}
& Gated DeltaNet$[-1,1]$ {\small 340M} & 25.97 &  33.57 & 33.1 & \underline{69.5} & \textbf{44.1} & 51.1 & \textbf{50.9} & \textbf{26.7} & \underline{45.9} \\
& Gated DeltaProduct$_2[-1,1]$ {\small 393M} & \textbf{25.12} & \underline{30.03} & \textbf{34.2} & 69.1 & \underline{44.6} & \textbf{55.3} & \underline{49.8} & 25.3 & \textbf{46.4}   \\

\bottomrule
\end{tabular}}
\label{table:lm-benchmarks}
\end{table*}

In \Cref{table:lm-benchmarks-scaling-head-dim-scaling} we report evaluations for the models in \Cref{fig:scaling-plot-head-dim-scaling} on tasks from lm-eval-harness \citep{eval-harness}. In addition, we also train and evaluate models with $2048$ context length at the $340M$ parameter scale and report the results in \Cref{table:lm-benchmarks} and compare them with the results in \citep{yang-neurips24a} which are trained under a comparable setup.
We observe that DeltaProduct outperforms DeltaNet in terms of average accuracy for both training setups.

\textbf{Tasks Details.} We use the lm-eval-harness benchmark \citep{eval-harness} to assess model performance. Following \citet{yang-neurips24a}, the evaluation encompasses multiple task categories:
\textbf{Language Understanding Tasks.}
The evaluation includes LAMBADA (LMB) \citep{paperno2016lambada} for testing text comprehension, PIQA \citep{bisk2020piqa} for physical reasoning assessment, HellaSwag (Hella.) \citep{zellers2019hellaswag} for situational understanding, and Winogrande (Wino.) \citep{sakaguchi2021winogrande} for commonsense reasoning evaluation.
\textbf{Reasoning.}
The ARC dataset provides two distinct testing sets: ARC-easy (ARC-e) and ARC-challenge (ARC-c) \citep{clark2018think}, measuring varying levels of scientific knowledge comprehension.

\subsubsection{Training behavior}\label{app:training-behavior}
The training behavior of $\text{DeltaProduct}_{n_h}$ is stable as shown in~\Cref{fig:training-curves}. This is also true for all considered model sizes in~\Cref{fig:scaling-plot-head-dim-scaling,fig:scaling-plot-num-head-scaling} and~\Cref{app:length_extrapolation}.

\begin{figure}[ht]
    \centering
    \includegraphics[width=0.6\textwidth,trim={4 0 5 3}, clip=true]{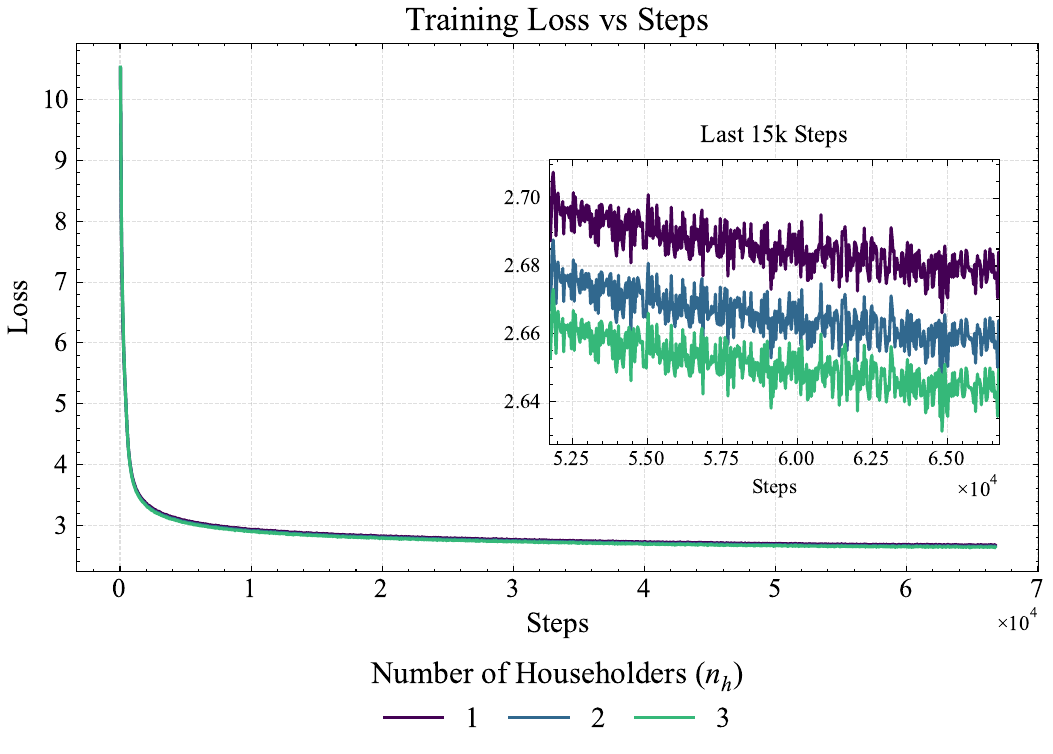}
    \caption{Training loss curves of $\text{DeltaProduct}_{n_h}[-1,1]$. The curves demonstrate stable training behavior as $n_h$ increases, with higher values of $n_h$ consistently yielding lower losses throughout training and convergence. While the absolute differences in loss between different $n_h$ values are relatively small, they correspond to significant differences in length extrapolation performance.}
\label{fig:training-curves}
\end{figure}

\subsubsection{Additional results on Length Extrapolation}\label{app:length_extrapolation}

In this section we show additional plots on length extrapolation. In~\Cref{fig:combined_length_extrapolation} we show the length extrapolation behavior of $\text{(Gated) DeltaProduct}_{n_{h}}$ scaling up $n_h$ without adjusting any of the other model configuration parameters. As discussed in~\Cref{sec:language_modelling}, increasing $n_h$ increases the parameter count of the model. Hence,~\Cref{fig:per-token-loss-scaling,fig:perplexity-scaling} show the per-token loss and perplexity of $\text{DeltaProduct}_{n_h}$ at three different scales where the parameter counts are matched at the respective scales following the configuration parameters shown in~\Cref{tab:scaling-heads-model-config}. Note that these are the same models as shown in~\Cref{fig:scaling-plot-num-head-scaling}.

\begin{figure}[H]
    \centering
    \includegraphics[width=\textwidth]{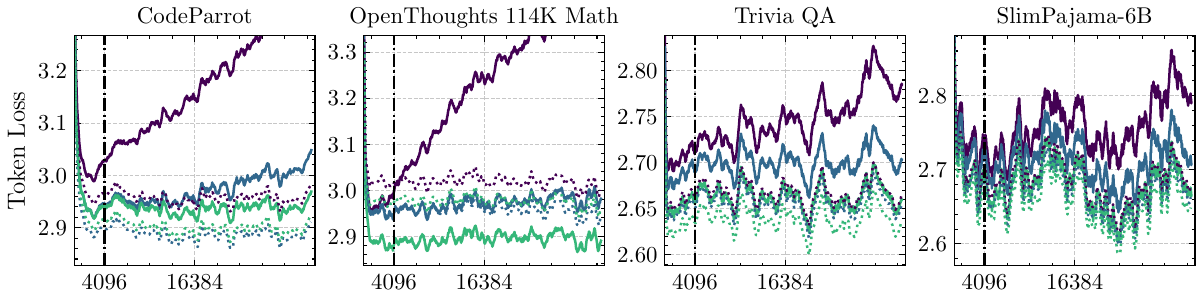}
    \includegraphics[width=\textwidth]{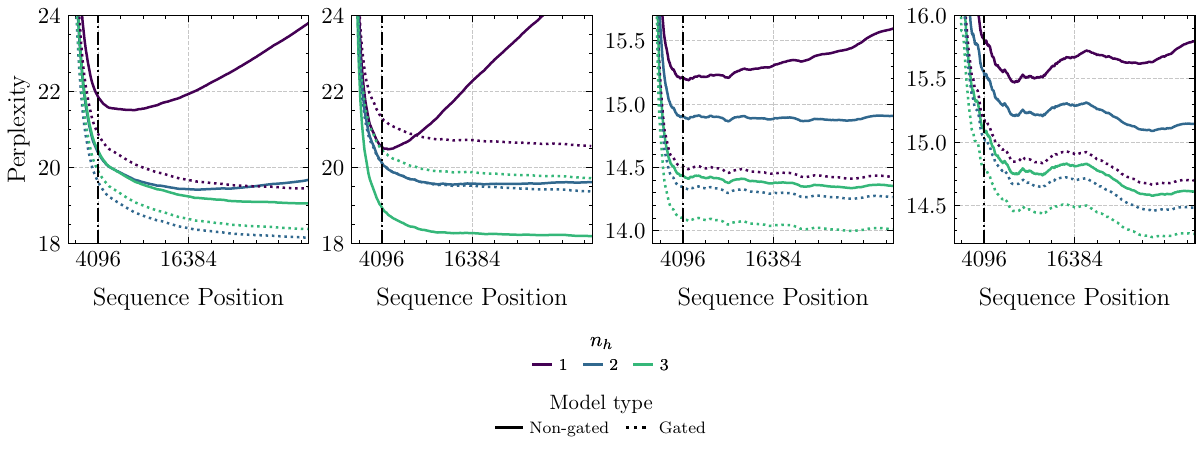}
    \vspace{-5mm}
    \caption{Per token loss and perplexity on context lengths up to 32.768 for $\text{(Gated) DeltaProduct}_{n_{h}}$. \textit{(Top)} per token loss. \textit{(Bottom)} Perplexity. Per token losses smoothed with a window-size of 300.}
\label{fig:combined_length_extrapolation}
\end{figure}
\begin{figure}[H]
    \centering
    \includegraphics[width=\textwidth]{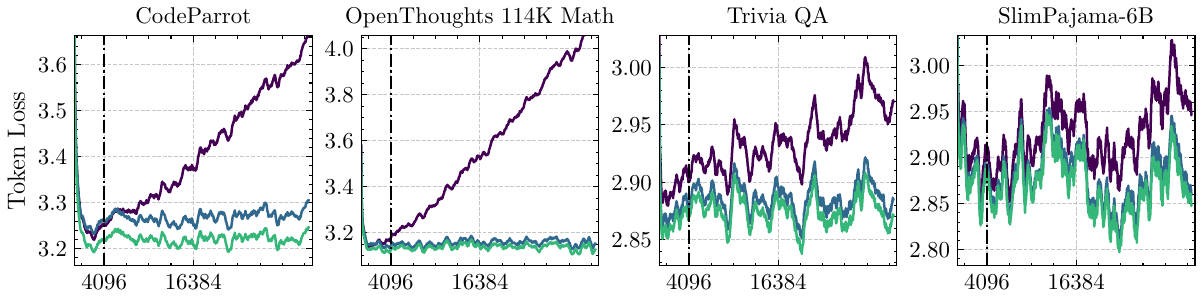}
    \includegraphics[width=\textwidth]{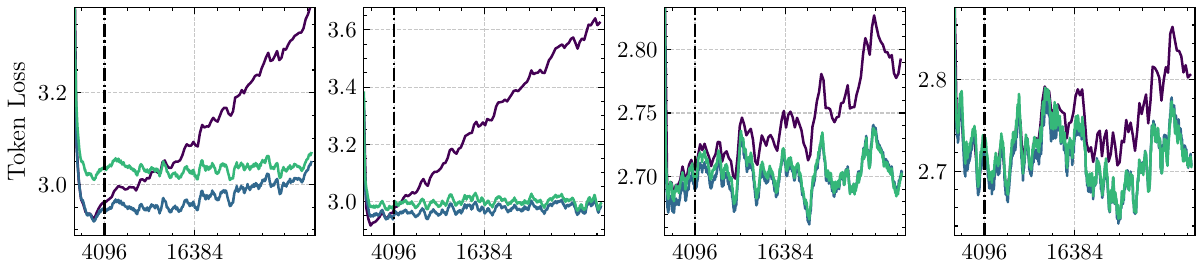}
    \includegraphics[width=\textwidth]{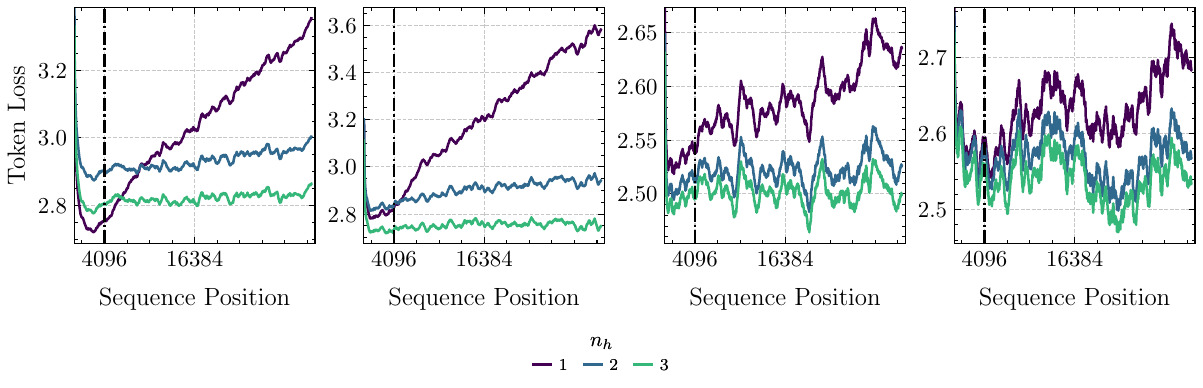}
    \vspace{-5mm}
    \caption{Per token loss of $\text{DeltaProduct}_{n_{h}}$ on contexts up to 32.768 at different model sizes. Models are parameter equivalent at each scale. Parameter equivalence is achieved by scaling the number of heads. Exact model configurations can be found in~\Cref{tab:scaling-heads-model-config} \textit{(Top)} 213M parameters. \textit{(Middle)} 392M parameters. \textit{(Bottom)} 805M parameters.}
\label{fig:per-token-loss-scaling}
\end{figure}

\begin{figure}[H]
    \centering
    \includegraphics[width=\textwidth]{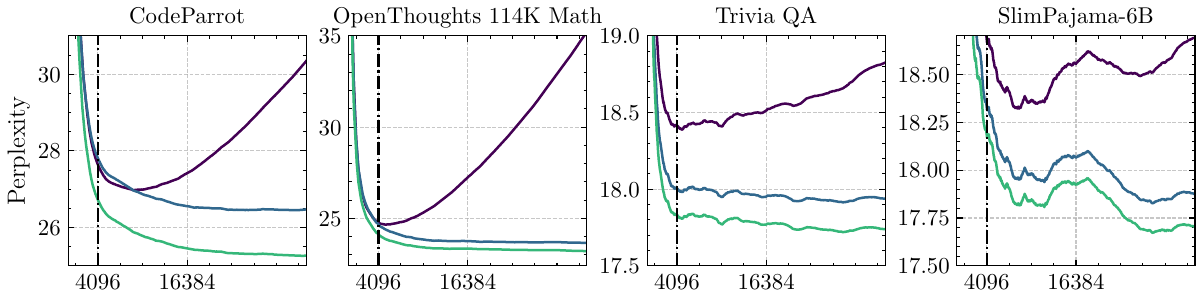}
    \includegraphics[width=\textwidth]{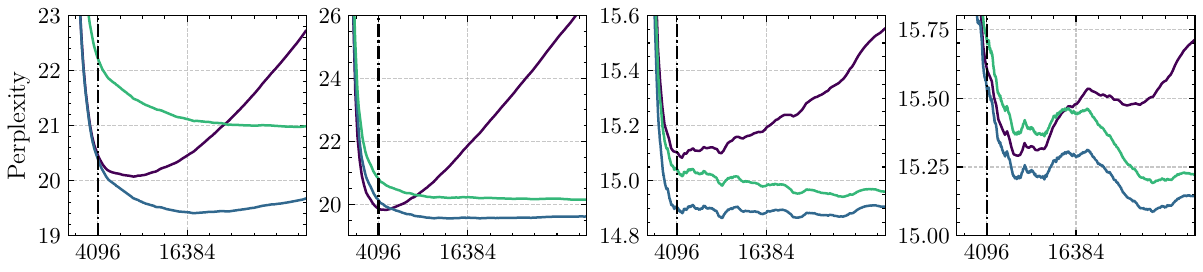}
    \includegraphics[width=\textwidth]{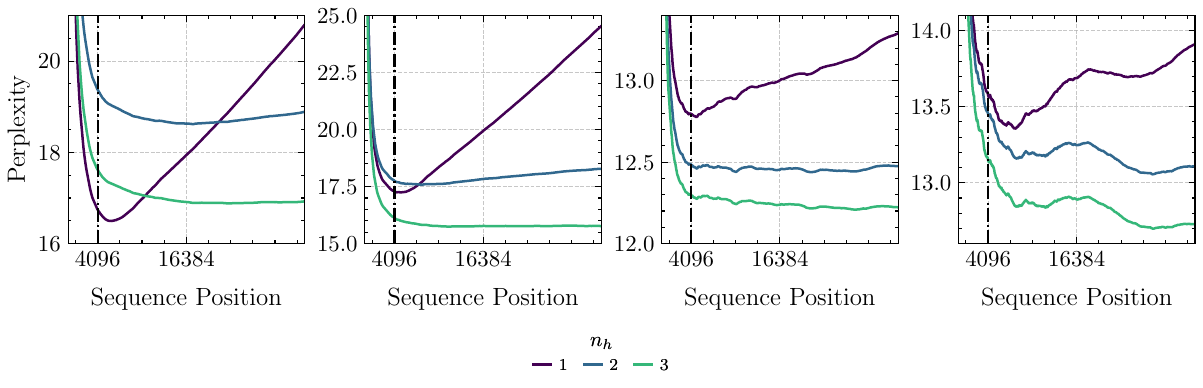}
    \vspace{-5mm}
    \caption{Perplexity analogue of~\Cref{fig:per-token-loss-scaling}}
\label{fig:perplexity-scaling}
\end{figure}

\begin{table}[h!]
\centering
\caption{Model configuration parameters for models shown in~\Cref{fig:per-token-loss-scaling,fig:perplexity-scaling}. All other configuration parameters are the same as in~\citep{grazzi-iclr25a}.}
\vspace{2mm}
\label{tab:scaling-heads-model-config}
\begin{tabular}{@{}r|ccc@{}}
\toprule
\textbf{Model Scale} & \textbf{\# Householders} & \textbf{Hidden size} & \textbf{\# Heads} \\ 
\midrule
\midrule
\multirow{3}{*}{\textbf{213M}} & 1 & 768 & 8 \\
                      & 2 & 736 & 6 \\
                      & 3 & 768 & 4 \\ \midrule
\multirow{3}{*}{\textbf{392M}} & 1 & 1024 & 12 \\
                      & 2 & 1024 & 8 \\
                      & 3 & 1024 & 6 \\ \midrule
\multirow{3}{*}{\textbf{805M}} & 1 & 1536 & 16 \\
                      & 2 & 1468 & 12 \\
                      & 3 & 1536 & 8 \\ \bottomrule
\end{tabular}
\end{table}

\newpage
\begin{figure}[t]
\centering

\noindent 
\begin{minipage}[c]{1em} 
    \centering
    \rotatebox{90}{{\small Layer 22~~~~}}
\end{minipage}
\hspace{1mm}
\begin{minipage}[c]{\dimexpr\linewidth-2.5em-1mm\relax} %
    \adjustbox{width=\linewidth}{%
    \includegraphics[width=0.438\linewidth, trim={0 26 0 0}, clip]{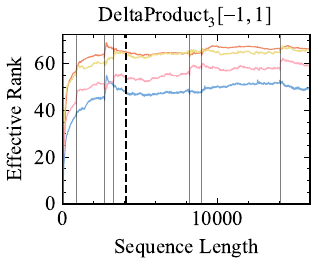} \hspace{-1.5mm} 
    \includegraphics[width=0.4\linewidth, trim={13 26 0 0}, clip]{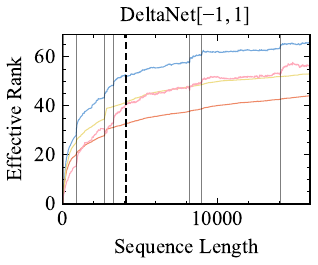}\includegraphics[width=0.4\linewidth, trim={13 26 0 0}, clip]{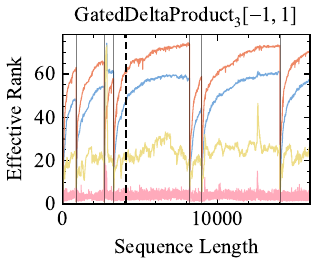} \hspace{-1.5mm}
    \includegraphics[width=0.4\linewidth, trim={13 26 0 0}, clip]{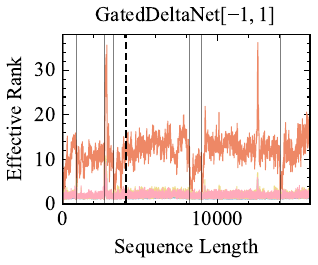}\hspace{-1mm}}
\end{minipage}

\noindent %
\begin{minipage}[c]{1em} %
    \centering %
    \rotatebox{90}{{\small Layer 19}}
\end{minipage}%
\hspace{1mm}%
\begin{minipage}[c]{\dimexpr\linewidth-2.5em-1mm\relax} %
    \adjustbox{width=\linewidth}{%
    \includegraphics[width=0.438\linewidth, trim={0 26 0 13}, clip]{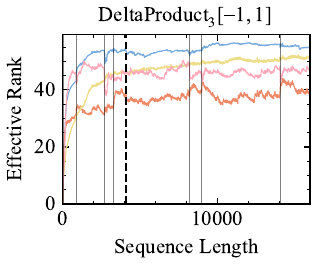} \hspace{-1.5mm} 
    \includegraphics[width=0.4\linewidth, trim={13 26 0 13}, clip]{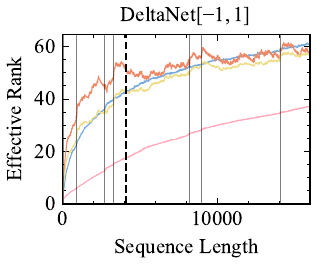}\includegraphics[width=0.4\linewidth, trim={13 26 0 13}, clip]{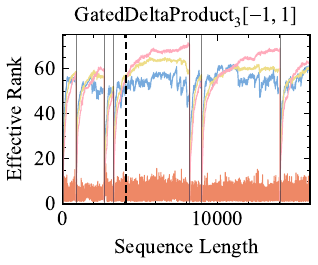} \hspace{-1.5mm}
    \includegraphics[width=0.4\linewidth, trim={13 26 0 13}, clip]{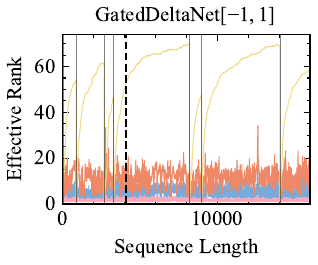}\hspace{-1mm}}
\end{minipage}

\noindent %
\begin{minipage}[c]{1em} %
    \centering %
    \rotatebox{90}{{\small Layer 16}}
\end{minipage}%
\hspace{1mm}%
\begin{minipage}[c]{\dimexpr\linewidth-2.5em-1mm\relax} %
    \adjustbox{width=\linewidth}{%
    \includegraphics[width=0.438\linewidth, trim={0 26 0 13}, clip]{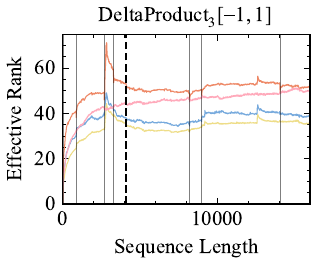} \hspace{-1.5mm} 
    \includegraphics[width=0.4\linewidth, trim={13 26 0 13}, clip]{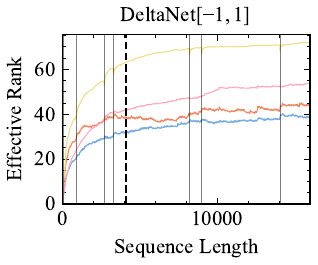}\includegraphics[width=0.4\linewidth, trim={13 26 0 13}, clip]{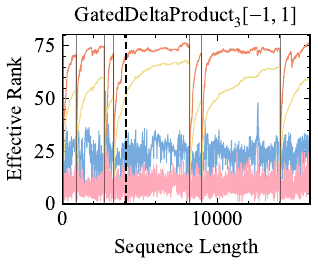} \hspace{-1.5mm}
    \includegraphics[width=0.4\linewidth, trim={13 26 0 13}, clip]{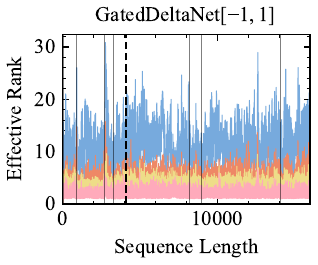}\hspace{-1mm}}
\end{minipage}

\noindent %
\begin{minipage}[c]{1em} %
    \centering %
    \rotatebox{90}{{\small Layer 13}}
\end{minipage}%
\hspace{1mm}%
\begin{minipage}[c]{\dimexpr\linewidth-2.5em-1mm\relax} %
    \adjustbox{width=\linewidth}{%
    \includegraphics[width=0.438\linewidth, trim={0 26 0 13}, clip]{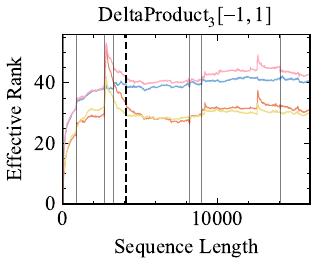} \hspace{-1.5mm} 
    \includegraphics[width=0.4\linewidth, trim={13 26 0 13}, clip]{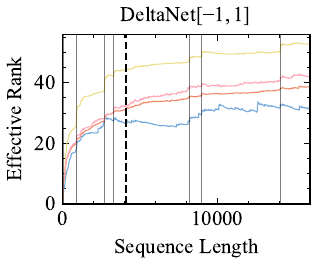}\includegraphics[width=0.4\linewidth, trim={13 26 0 13}, clip]{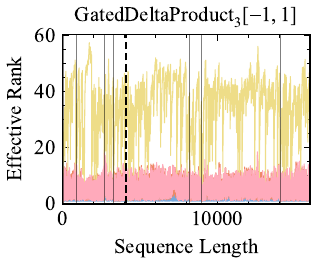} \hspace{-1.5mm}
    \includegraphics[width=0.4\linewidth, trim={13 26 0 13}, clip]{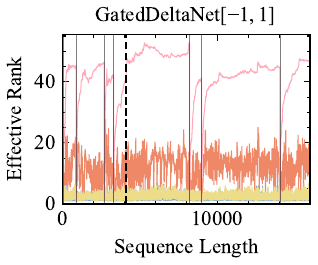}\hspace{-1mm}}
\end{minipage}

\noindent %
\begin{minipage}[c]{1em} %
    \centering %
    \rotatebox{90}{{\small Layer 10}}
\end{minipage}%
\hspace{1mm}%
\begin{minipage}[c]{\dimexpr\linewidth-2.5em-1mm\relax} %
    \adjustbox{width=\linewidth}{%
    \includegraphics[width=0.438\linewidth, trim={0 26 0 13}, clip]{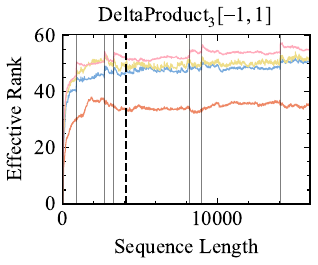} \hspace{-1.5mm} 
    \includegraphics[width=0.4\linewidth, trim={13 26 0 13}, clip]{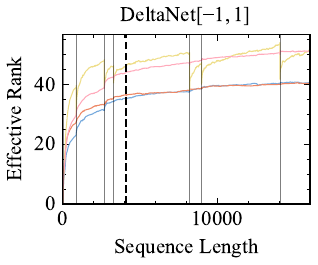}\includegraphics[width=0.4\linewidth, trim={13 26 0 13}, clip]{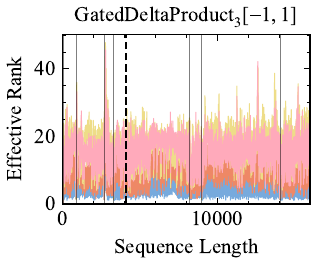} \hspace{-1.5mm}
    \includegraphics[width=0.4\linewidth, trim={13 26 0 13}, clip]{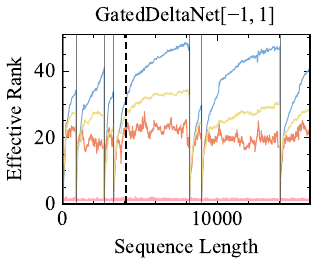}\hspace{-1mm}}
\end{minipage}

\noindent %
\begin{minipage}[c]{1em} %
    \centering %
    \rotatebox{90}{{\small Layer 7}}
\end{minipage}%
\hspace{1mm}%
\begin{minipage}[c]{\dimexpr\linewidth-2.5em-1mm\relax} %
    \adjustbox{width=\linewidth}{%
    \includegraphics[width=0.438\linewidth, trim={0 26 0 13}, clip]{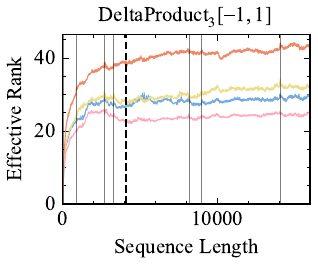} \hspace{-1.5mm} 
    \includegraphics[width=0.4\linewidth, trim={13 26 0 13}, clip]{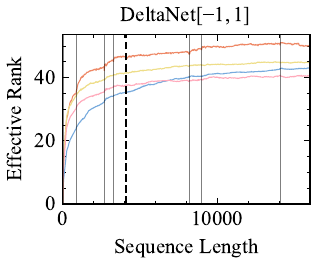}\includegraphics[width=0.4\linewidth, trim={13 26 0 13}, clip]{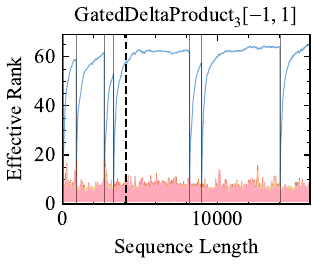} \hspace{-1.5mm}
    \includegraphics[width=0.4\linewidth, trim={13 26 0 13}, clip]{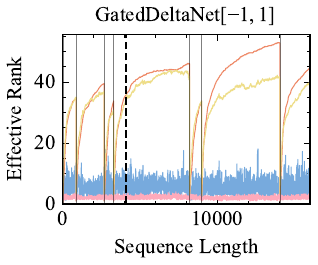}\hspace{-1mm}}
\end{minipage}

\noindent %
\begin{minipage}[c]{1em} %
    \centering %
    \rotatebox{90}{{\small Layer 4}}
\end{minipage}%
\hspace{1mm}%
\begin{minipage}[c]{\dimexpr\linewidth-2.5em-1mm\relax} %
    \adjustbox{width=\linewidth}{%
    \includegraphics[width=0.438\linewidth, trim={0 26 0 13}, clip]{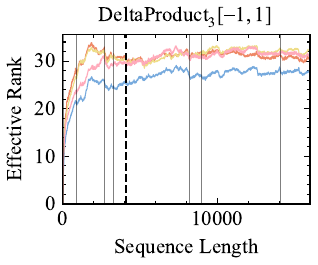} \hspace{-1.5mm} 
    \includegraphics[width=0.4\linewidth, trim={13 26 0 13}, clip]{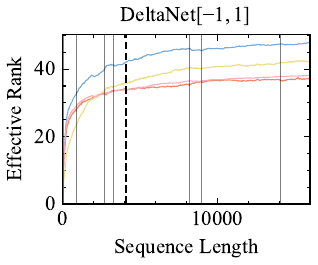}\includegraphics[width=0.4\linewidth, trim={13 26 0 13}, clip]{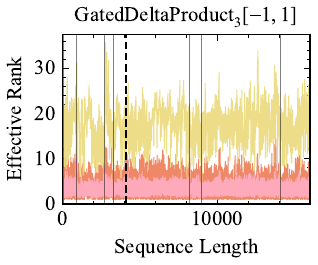} \hspace{-1.5mm}
    \includegraphics[width=0.4\linewidth, trim={13 26 0 13}, clip]{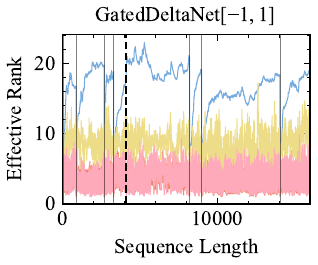}\hspace{-1mm}}
\end{minipage}

\noindent %
\begin{minipage}[c]{1em} %
    \centering %
    \hspace{1mm}
    \rotatebox{90}{{~~~~~~\small Layer 1}}
\end{minipage}%
\hspace{1mm}%
\begin{minipage}[c]{\dimexpr\linewidth-2.5em-1mm\relax} %
    \adjustbox{width=\linewidth}{%
    \includegraphics[width=0.438\linewidth, trim={0 0 0 13}, clip]{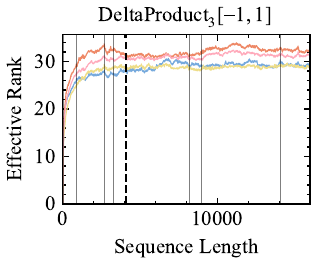} \hspace{-1.5mm} 
    \includegraphics[width=0.4\linewidth, trim={13 0 0 13}, clip]{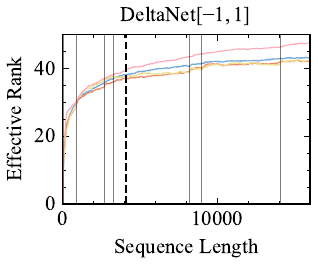}\includegraphics[width=0.4\linewidth, trim={13 0 0 13}, clip]{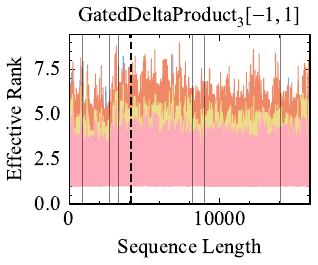} \hspace{-1.5mm}
    \includegraphics[width=0.4\linewidth, trim={13 0 0 13}, clip]{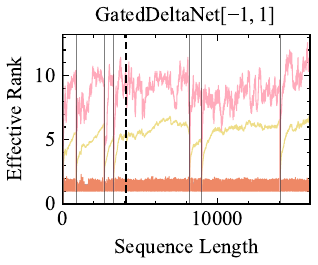}\hspace{-1mm}}
\end{minipage}
\vspace{0mm} %
\caption{Effective rank of $\mH_i$ for 4 of 8 heads for a selection of the layers on \textbf{CodeParrot} sequences. Solid vertical lines mark new code sequences; dashed vertical line indicates 4096-token training context length; colored lines show effective rank per head over the sequence.}
\end{figure}

\newpage
\begin{figure}[t]
\centering

\noindent %
\begin{minipage}[c]{1em} %
    \centering %
    \rotatebox{90}{{\small Layer 22~~~~}}
\end{minipage}%
\hspace{1mm}%
\begin{minipage}[c]{\dimexpr\linewidth-2.5em-1mm\relax} %
    \adjustbox{width=\linewidth}{%
    \includegraphics[width=0.438\linewidth, trim={0 26 0 0}, clip]{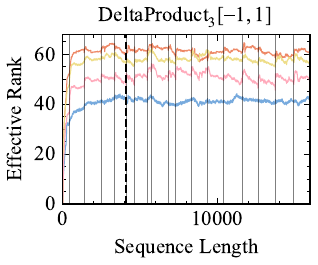} \hspace{-1.5mm} 
    \includegraphics[width=0.4\linewidth, trim={13 26 0 0}, clip]{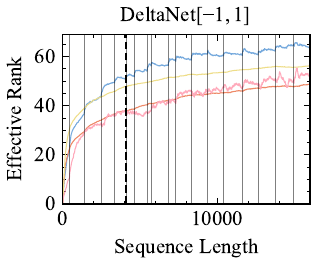}\includegraphics[width=0.4\linewidth, trim={13 26 0 0}, clip]{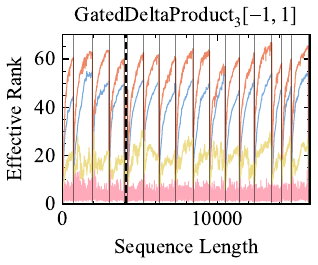} \hspace{-1.5mm}
    \includegraphics[width=0.4\linewidth, trim={13 26 0 0}, clip]{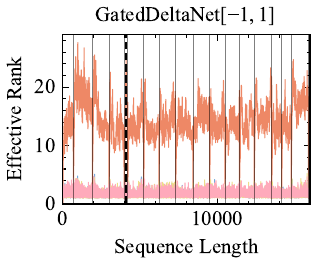}\hspace{-1mm}}
\end{minipage}

\noindent %
\begin{minipage}[c]{1em} %
    \centering %
    \rotatebox{90}{{\small Layer 19}}
\end{minipage}%
\hspace{1mm}%
\begin{minipage}[c]{\dimexpr\linewidth-2.5em-1mm\relax} %
    \adjustbox{width=\linewidth}{%
    \includegraphics[width=0.438\linewidth, trim={0 26 0 13}, clip]{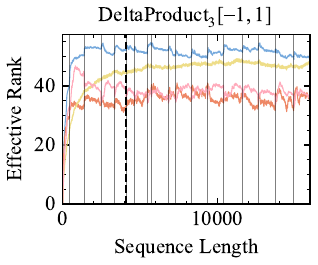} \hspace{-1.5mm} 
    \includegraphics[width=0.4\linewidth, trim={13 26 0 13}, clip]{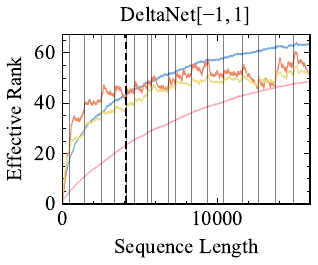}\includegraphics[width=0.4\linewidth, trim={13 26 0 13}, clip]{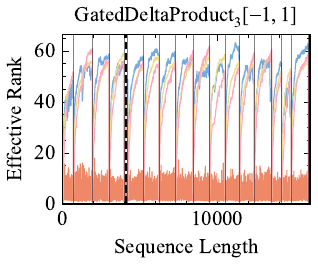} \hspace{-1.5mm}
    \includegraphics[width=0.4\linewidth, trim={13 26 0 13}, clip]{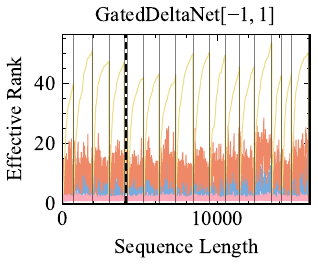}\hspace{-1mm}}
\end{minipage}

\noindent %
\begin{minipage}[c]{1em} %
    \centering %
    \rotatebox{90}{{\small Layer 16}}
\end{minipage}%
\hspace{1mm}%
\begin{minipage}[c]{\dimexpr\linewidth-2.5em-1mm\relax} %
    \adjustbox{width=\linewidth}{%
    \includegraphics[width=0.438\linewidth, trim={0 26 0 13}, clip]{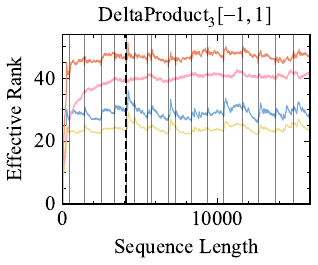} \hspace{-1.5mm} 
    \includegraphics[width=0.4\linewidth, trim={13 26 0 13}, clip]{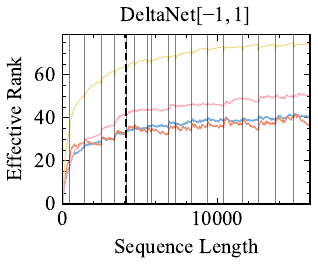}\includegraphics[width=0.4\linewidth, trim={13 26 0 13}, clip]{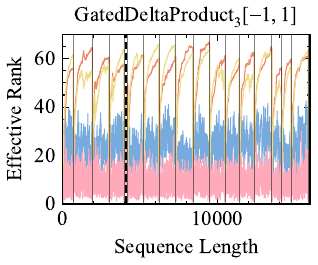} \hspace{-1.5mm}
    \includegraphics[width=0.4\linewidth, trim={13 26 0 13}, clip]{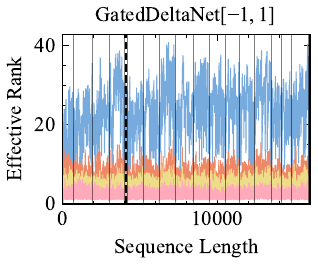}\hspace{-1mm}}
\end{minipage}

\noindent %
\begin{minipage}[c]{1em} %
    \centering %
    \rotatebox{90}{{\small Layer 13}}
\end{minipage}%
\hspace{1mm}%
\begin{minipage}[c]{\dimexpr\linewidth-2.5em-1mm\relax} %
    \adjustbox{width=\linewidth}{%
    \includegraphics[width=0.438\linewidth, trim={0 26 0 13}, clip]{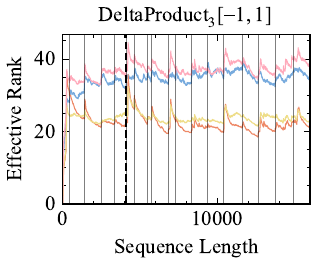} \hspace{-1.5mm} 
    \includegraphics[width=0.4\linewidth, trim={13 26 0 13}, clip]{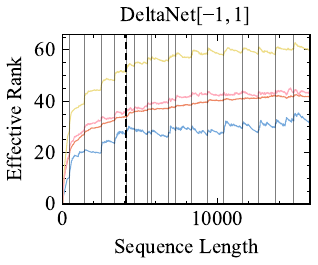}\includegraphics[width=0.4\linewidth, trim={13 26 0 13}, clip]{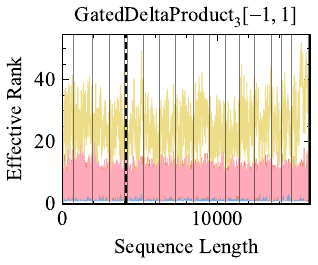} \hspace{-1.5mm}
    \includegraphics[width=0.4\linewidth, trim={13 26 0 13}, clip]{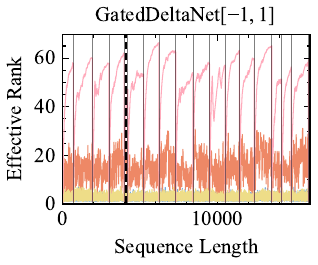}\hspace{-1mm}}
\end{minipage}

\noindent %
\begin{minipage}[c]{1em} %
    \centering %
    \rotatebox{90}{{\small Layer 10}}
\end{minipage}%
\hspace{1mm}%
\begin{minipage}[c]{\dimexpr\linewidth-2.5em-1mm\relax} %
    \adjustbox{width=\linewidth}{%
    \includegraphics[width=0.438\linewidth, trim={0 26 0 13}, clip]{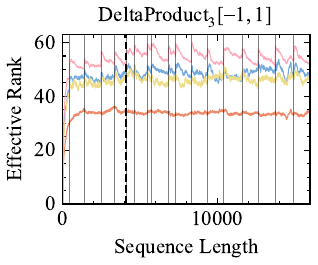} \hspace{-1.5mm} 
    \includegraphics[width=0.4\linewidth, trim={13 26 0 13}, clip]{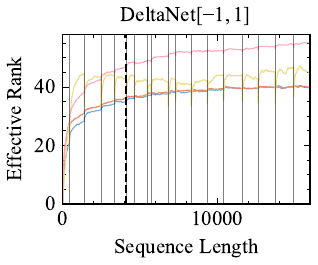}\includegraphics[width=0.4\linewidth, trim={13 26 0 13}, clip]{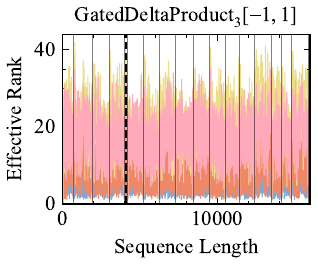} \hspace{-1.5mm}
    \includegraphics[width=0.4\linewidth, trim={13 26 0 13}, clip]{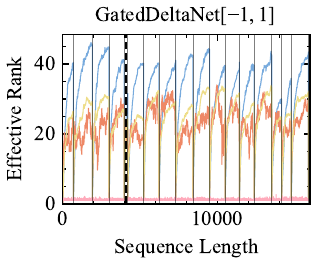}\hspace{-1mm}}
\end{minipage}

\noindent %
\begin{minipage}[c]{1em} %
    \centering %
    \rotatebox{90}{{\small Layer 7}}
\end{minipage}%
\hspace{1mm}%
\begin{minipage}[c]{\dimexpr\linewidth-2.5em-1mm\relax} %
    \adjustbox{width=\linewidth}{%
    \includegraphics[width=0.438\linewidth, trim={0 26 0 13}, clip]{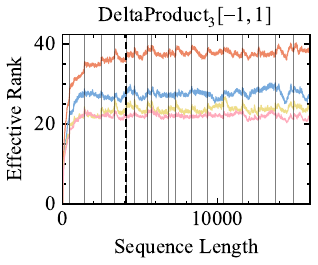} \hspace{-1.5mm} 
    \includegraphics[width=0.4\linewidth, trim={13 26 0 13}, clip]{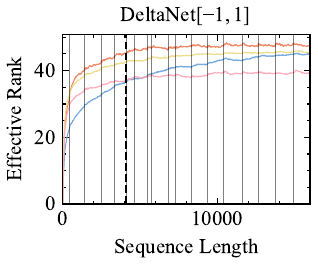}\includegraphics[width=0.4\linewidth, trim={13 26 0 13}, clip]{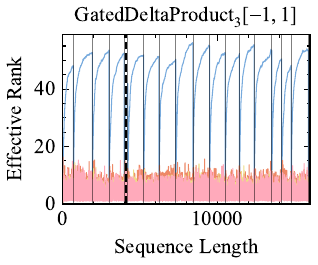} \hspace{-1.5mm}
    \includegraphics[width=0.4\linewidth, trim={13 26 0 13}, clip]{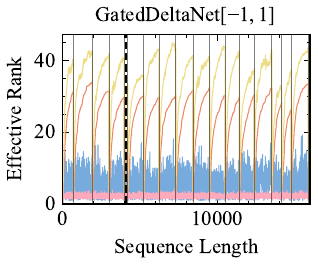}\hspace{-1mm}}
\end{minipage}

\noindent %
\begin{minipage}[c]{1em} %
    \centering %
    \rotatebox{90}{{\small Layer 4}}
\end{minipage}%
\hspace{1mm}%
\begin{minipage}[c]{\dimexpr\linewidth-2.5em-1mm\relax} %
    \adjustbox{width=\linewidth}{%
    \includegraphics[width=0.438\linewidth, trim={0 26 0 13}, clip]{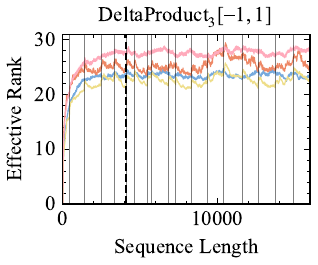} \hspace{-1.5mm} 
    \includegraphics[width=0.4\linewidth, trim={13 26 0 13}, clip]{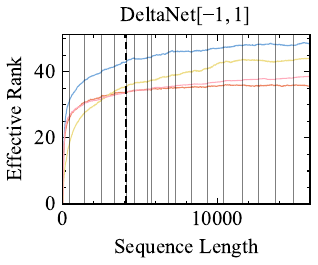}\includegraphics[width=0.4\linewidth, trim={13 26 0 13}, clip]{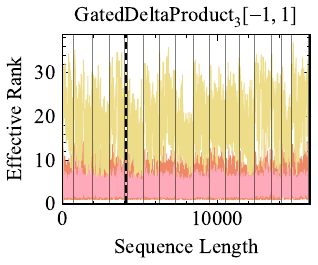} \hspace{-1.5mm}
    \includegraphics[width=0.4\linewidth, trim={13 26 0 13}, clip]{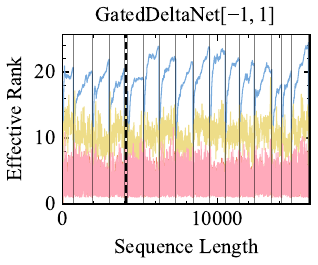}\hspace{-1mm}}
\end{minipage}

\noindent %
\begin{minipage}[c]{1em} %
    \centering %
    \rotatebox{90}{{~~~~~~\small Layer 1}}
\end{minipage}%
\hspace{1mm}%
\begin{minipage}[c]{\dimexpr\linewidth-2.5em-1mm\relax} %
    \adjustbox{width=\linewidth}{%
    \includegraphics[width=0.438\linewidth, trim={0 0 0 13}, clip]{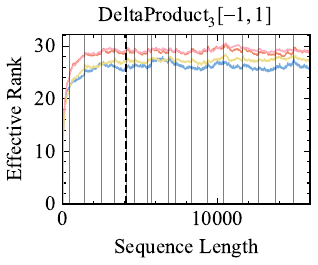} \hspace{-1.5mm} 
    \includegraphics[width=0.4\linewidth, trim={13 0 0 13}, clip]{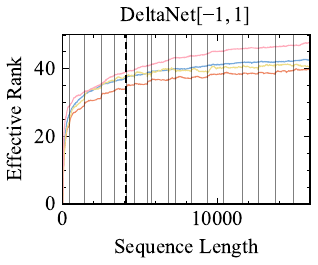}\includegraphics[width=0.4\linewidth, trim={13 0 0 13}, clip]{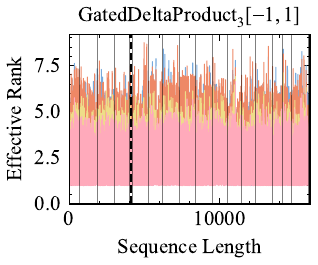} \hspace{-1.5mm}
    \includegraphics[width=0.4\linewidth, trim={13 0 0 13}, clip]{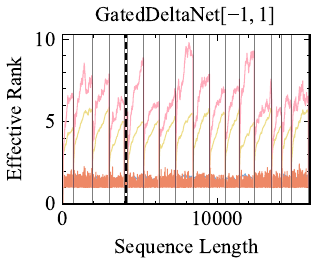}\hspace{-1mm}}
\end{minipage}
\vspace{0mm} %
\caption{Effective rank of $\mH_i$ for 4 of 8 heads for a selection of the layers on \textbf{TriviaQA} sequences. Solid vertical lines mark new question-answer pairs; dashed vertical line indicates 4096-token training context length; colored lines show effective rank per head over the sequence.}
\end{figure}

\subsubsection{Additional Results on Scaling Behavior}\label{app:scaling-behavior}

In~\Cref{fig:scaling-plot-head-dim-scaling} parameter equivalence is achieved at each scale mainly by decreasing the the head dimension for models with $n_h>1$. In~\Cref{fig:scaling-plot-num-head-scaling} we show perplexity of FineWeb for another set of scaling results where parameter equivalence is reached by reducing the the number of heads in the attention. The result for this alternative type of scaling still shows the superiority of DeltaProduct compared to DeltaNet. However, in this case, models with higher $n_h$ are not strictly better than those with fewer Householders.

\begin{figure}
    \centering
    \adjustbox{width=0.35\linewidth}{
        \includegraphics[width=\linewidth, trim={4 0 0 0}, clip=true]{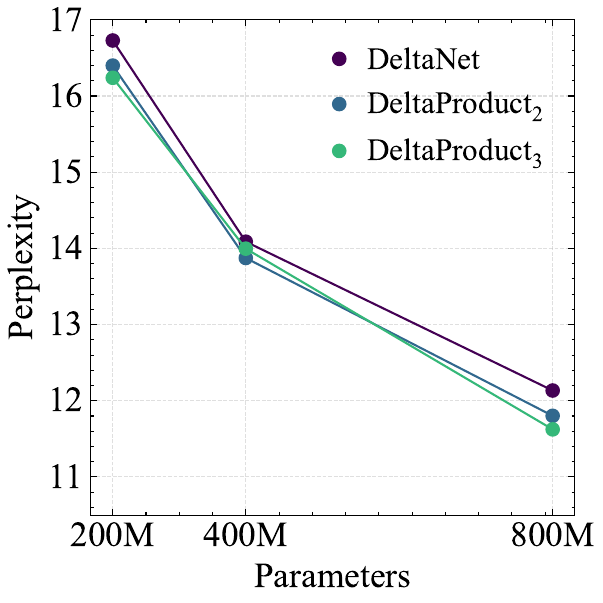}
    } \\
    \adjustbox{width=0.35\linewidth}{
        \includegraphics[width=0.3\linewidth, clip=true, trim={3 5 3 2}]{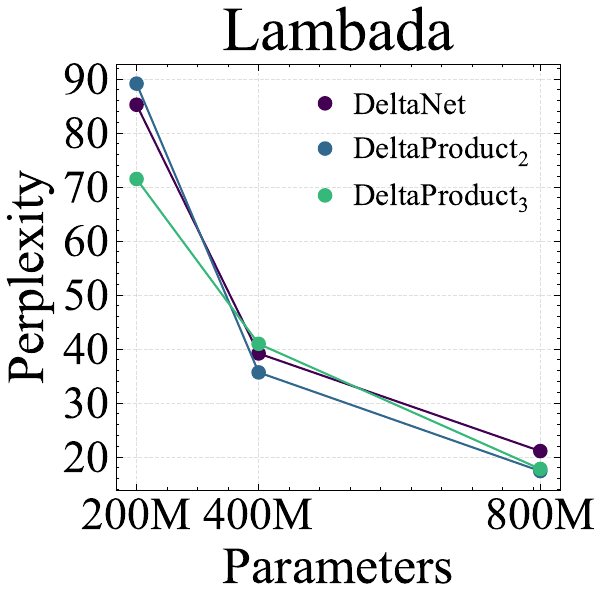}
        \includegraphics[width=0.285\linewidth, clip=true, trim={26 5 3 2}]{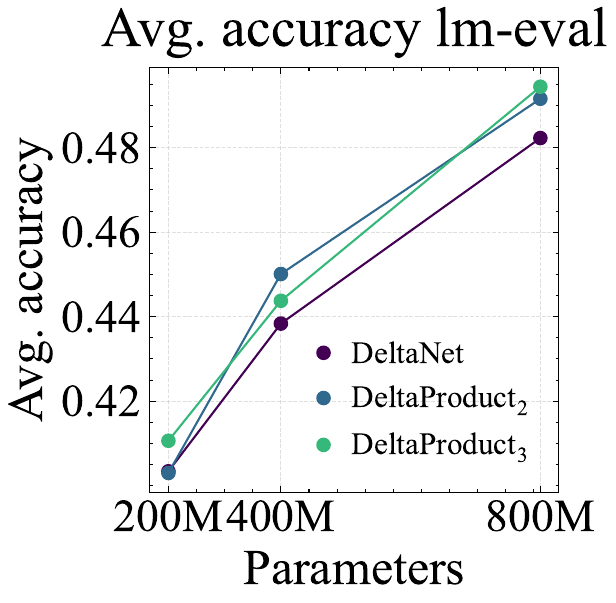}
    }
    \caption{Scaling analysis w.r.t. (\textit{top}) final perplexity on FineWeb, (\textit{bottom}) Lambada and lm-eval tasks. Parameter equivalence is achieved by scaling the number of heads. Models trained at each scale with number of tokens as reported in~\Cref{table:lm-benchmarks-scaling-num-head-scaling}.}
    \label{fig:scaling-plot-num-head-scaling}
\end{figure}

\end{document}